\documentclass[10pt,letterpaper]{article}

\usepackage{fullpage}
\usepackage{authblk}
\usepackage[utf8]{inputenc}
\usepackage[T1]{fontenc}
\usepackage{lmodern}
\usepackage{microtype}
\usepackage[british]{babel}

\usepackage{amsmath,amssymb,amsfonts,amsthm,mathtools,bm}
\usepackage{natbib}
\setcitestyle{authoryear,round,citesep={;},aysep={,},yysep={;}}
\usepackage{graphicx,xcolor,listings,booktabs,multirow,subcaption,float,placeins,needspace}
\usepackage{tikz}
\usetikzlibrary{positioning,arrows}
\usepackage{algorithm}
\usepackage[noend]{algpseudocode}
\usepackage{enumitem}
\usepackage[normalem]{ulem}
\usepackage{xurl}
\usepackage{hyperref}
\usepackage{aliascnt}
\usepackage{cleveref}
\usepackage{nicefrac}
\hypersetup{pdftitle={Doubly Robust Deep Kernel Tests for Structured Outcomes},pdfauthor={}}
\makeatletter
\def\theHALG@line{\thealgorithm.\arabic{ALG@line}}
\makeatother
\newtheorem{thm}{Theorem}[section]
\newaliascnt{lem}{thm}
\newtheorem{lem}[lem]{Lemma}
\aliascntresetthe{lem}
\crefname{lem}{lemma}{lemmas}
\Crefname{lem}{Lemma}{Lemmas}

\newaliascnt{prop}{thm}
\newtheorem{prop}[prop]{Proposition}
\aliascntresetthe{prop}
\crefname{prop}{proposition}{propositions}
\Crefname{prop}{Proposition}{Propositions}
\newtheorem{cor}[thm]{Corollary}
\newaliascnt{assum}{thm}
\newtheorem{assum}[assum]{Assumption}
\aliascntresetthe{assum}
\crefname{assum}{assumption}{assumptions}
\Crefname{assum}{Assumption}{Assumptions}
\theoremstyle{definition}

\theoremstyle{remark}
\newtheorem{rem}[thm]{Remark}

\DeclareMathOperator{\E}{\mathbb{E}}

\newcommand{\R}{\mathbb{R}}
\newcommand{\mathbbm}[1]{\mathbf{#1}}

\newcommand{\cH}{\mathcal{H}}

\newcommand{\cY}{\mathcal{Y}}

\newcommand{\cD}{\mathcal{D}}

\newcommand{\indep}{\mathrel{\perp\!\!\!\perp}}
\newcommand*\diff{\mathop{}\mathrm{d}}

\newcommand{\pmerr}[1]{{\scriptstyle\pm#1}}

\lstdefinestyle{python}{
    language=Python,
    basicstyle=\ttfamily\small,
    keywordstyle=\color{blue!65!black},
    stringstyle=\color{green!35!black},
    commentstyle=\color{black!55}\itshape,
    backgroundcolor=\color{black!3},
    rulecolor=\color{black!20},
    frame=single,
    framesep=6pt,
    xleftmargin=0.5em,
    framexleftmargin=0.5em,
    showstringspaces=false,
    columns=fullflexible,
    keepspaces=true,
    breaklines=true
}

\title{Doubly Robust Deep Kernel Tests\\for Structured Outcomes}
\author[1]{Robin Leman}
\author[2]{Bariscan Bozkurt}
\author[3]{Sofiane Tanji}
\author[2]{Arthur Gretton}
\author[2]{Houssam Zenati}
\affil[1]{University College London}
\affil[2]{Gatsby Computational Neuroscience Unit, University College London}
\affil[3]{ICTEAM/INMA, UCLouvain}
\date{}

\begin{document}
\maketitle
\begin{abstract}
We propose a test of whether two interventions induce the same outcome distribution when outcomes are structured and high-dimensional, such as images, text and graphs. We represent the interventional distributions as mean embeddings on learned deep features, such that the treatment effect is a doubly  robust representation of the difference in mean embeddings. Our test, DR-KTE-Deep,  optimises the learned  features so as to maximise an estimated power criterion on this treatment effect representation. To motivate our learning objective, we show that the  power criterion characterises the asymptotic signal-to-noise ratio of the test under fixed alternatives, and we derive a uniform finite-sample bound for its regularised estimate. We show that, with the learned kernel fixed, a held-out test is asymptotically valid under the null given suitable nuisance-estimation rates. Empirically, DR-KTE-Deep achieves substantially higher power than fixed and bandwidth-optimised kernel baselines across image, text and graph outcomes, while maintaining well-calibrated Type-I error.
\end{abstract}

\begin{figure}[H]
    \centering
    \includegraphics[width=\linewidth]{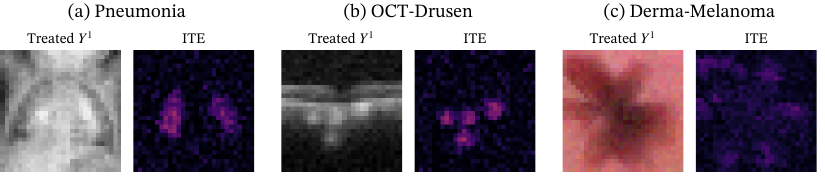}
    \caption{Semi-synthetic medical-image experiments adapted from MedMNIST \citep{medmnistv1,medmnistv2}. Each pair shows a treated potential outcome $Y^1$ and its ground-truth individual treatment effect (ITE).}
    \label{fig:medical-image-headline}
\end{figure}

\section{Introduction}
\label{chapter:introduction}

In causal problems involving structured and high-dimensional outcomes, such as images, text, and graphs, a mean outcome may be undefined or poorly aligned with the structure of the data. For example, comparing protein or text sequences requires choosing a numerical representation or a notion of similarity \citep{saigo2004protein,lodhi2002text,reimers2019sentence} and comparing images with coordinate-wise averaging ignores the spatial structure of the data \citep{lecun1998gradient}. Distributional causal inference addresses this by defining the interventional distributions directly on the structured outcome space, without reducing outcomes to an arbitrary scalar summary or notion of averaging \citep{rothe2010nonparametric,chernozhukov2013inference,kennedy2023counterfactualdensity, muandet2021counterfactual}. Kernels define similarities on these spaces \citep{gartner2003survey,borgwardt2006integrating,shervashidze2011weisfeiler} and, through kernel mean embeddings, represent interventional distributions in a reproducing kernel Hilbert space (RKHS) \citep{smola2007hilbert}.

Kernels define the geometry used to compare outcome distributions \citep{SriGreFukLanetal10,simon2018kernel, gretton2013introduction}. If this geometry is fixed a priori, the kernel may be poorly suited to detecting differences between the interventional distributions, leading to low test power \citep{gretton2012kernel,sutherland2017generative}. Deep kernels address this by using neural networks to learn a representation of the outcomes and compare distributions in the resulting space \citep{wilson2016deep}. Rather than learning this representation through a reconstruction task, the learned kernel can be optimised by directly maximising a test-power criterion \citep{sutherland2017generative}.

We consider two soft interventions $\pi$ and $\pi'$ inducing the outcome distributions $P_Y^\pi$ and $P_Y^{\pi'}$. Determining whether an intervention changes the outcome distribution can be formulated as a hypothesis testing problem, testing $H_0: P_Y^\pi = P_Y^{\pi'}$ against $H_1: P_Y^\pi \neq P_Y^{\pi'}$.

Treatment effects often need to be estimated from observational data, because experimentation can be costly, unethical or infeasible. In this setting, interventional distributions are not directly observed and must be recovered \citep{Robins1986,hernan2020causal,muandet2021counterfactual, zenati2025cpme}. This can be done by estimating either treatment assignment or the conditional outcome embedding \citep{song2009hilbert,grunewalder2012conditional}. However, either approach alone is sensitive to model misspecification. We therefore use a doubly robust estimator, which combines both approaches and remains consistent when either model is correctly specified \citep{bang2005doubly,luedtke2024}.

In this paper, we introduce DR-KTE-Deep, which combines doubly robust interventional embeddings with deep kernel learning and testing. Our main contributions are: (i) we develop a method for learning deep kernels for tests of interventional distributions, estimating both the distributional discrepancy and the test-power criterion from observational data; (ii) we derive a uniform finite-sample bound for the estimated learning criterion, accounting for both kernel-class complexity and errors in estimating the treatment-assignment and outcome models; (iii) we establish conditional asymptotic validity of the held-out test after nuisance estimation and kernel training and selection; and (iv) we evaluate the method on image, text, and graph outcomes, showing substantial power gains over fixed and bandwidth-optimised doubly robust kernel tests, while maintaining Type-I error close to the nominal level.

The remainder of the paper is organised as follows. \Cref{chapter:method} develops the doubly robust test. \Cref{section:deep_kernels} develops the deep-kernel learning objective. \Cref{chapter:experiments} evaluates its power and calibration across image, text, and graph outcomes. \Cref{chapter:conclusion} concludes.

\section{Related Work}

Several works study distributional causal effects. Quantile treatment effects characterise changes across an ordered outcome distribution \citep{abadie2002instrumental,chernozhukov2005ivqte,firpo2007efficient}, while counterfactual distribution and density methods target the full marginal law \citep{rothe2010nonparametric,chernozhukov2013inference,kennedy2023counterfactualdensity}. These methods are typically developed for scalar or finite-dimensional Euclidean outcomes. We instead consider structured outcomes whose interventional distributions must be compared without reducing them to low-dimensional summaries.

Kernel mean embeddings represent interventional outcome distributions in an RKHS. \citet{muandet2021counterfactual} introduce counterfactual mean embeddings, while \citet{park2021conditional}, \citet{naf2026causaldrf}, and \citet{jain2026conditional} use kernel methods to study conditional distributional treatment effects. For marginal effects, \citet{fawkes2024doubly} and \citet{martinez2023} develop doubly robust kernel tests, and \citet{zenati2025cpme} extend them to general stochastic policies. Finally, \citet{zenati2026kernel} further consider adaptively collected data. These methods use prescribed outcome kernels, including aggregation over kernel bandwidths, without learning a deep outcome representation. A separate literature uses kernels to estimate causal mean functions through kernel ridge regression \citep{singh2024kernel,kim2026continuous}, instrumental variable regression \citep{singh2019kernel,meunier2024nonparametric}, and proxy bridge learning \citep{bozkurt2025densityratiofree,bozkurt2026neural}. These approaches use kernels or learned representations for causal function estimation, rather than for comparing interventional outcome distributions.

Other work adapts the kernel or outcome representation to improve test power. \citet{gretton2012optimal} optimise the power of linear-time MMD tests over kernel mixtures and analyse convergence of the selection criterion. \citet{sutherland2017generative} optimise kernel parameters for two-sample tests, while \citet{liu2021learningdeepkernelsnonparametric} learn characteristic deep kernels and prove uniform convergence of their learning criterion. Other approaches combine multiple kernels through aggregation or fusion \citep{schrab2023mmdagg,biggs2023mmdfuse}. In causal settings, \citet{zenati2026semiparametric} learn informative witness locations for distributional treatment effects, while \citet{wibisono2026unstructured} and \citet{durand2025causalresponse} learn outcome features or projections to detect causal effects. Our method instead learns the outcome kernel for an omnibus test of marginal interventional distributions. We estimate the learning criterion from observational data using causal nuisance models and analyse how errors in these models affect the criterion.

\section{Doubly Robust Kernel Test}
\label{chapter:method}

\paragraph{Problem Setting.}

We are given an observational dataset $\cD$ of $N$ independent and identically distributed (i.i.d.) samples $O_1,\ldots,O_N{\sim}_{\mathrm{i.i.d}} \mathbb P_0$, where $O_i=(X_i,A_i,Y_i)\in\mathcal X\times\mathcal A\times\mathcal Y$. $\mathcal X$, $\mathcal A$, and $\mathcal Y$ are standard Borel spaces. Write $\mathbb P_X$ for the $X$-marginal of $\mathbb P_0$, and $\mathbb P_{A,X}$ for its treatment-covariate marginal. Let $\rho$ be a $\sigma$-finite reference measure on $\mathcal A$. The logging policy assigns $A\mid X=x$ with density $\pi_0(\cdot\mid x)$ relative to $\rho$. We consider a unit-level potential-outcome setting with no interference between units \citep{rubin2005causal,imbens_rubin_2015}. We compare two soft interventions $\pi$ and $\pi'$ with conditional densities $\pi(\cdot\mid x)$ and $\pi'(\cdot\mid x)$ respectively.

\paragraph{Identification.} The observational data are generated under the logging policy $\pi_0$, whereas the targets we consider are the outcome distributions under $\pi$ and $\pi'$. To identify these distributions, we work under the standard assumption that the treatment assignment is unconfounded given the observed covariates, and that the interventions assign only treatments supported by the logging policy. We formalise these conditions through the common assumptions below. 

\begin{assum}[Identification]\label{assum:identification}
    There exist a standard Borel space $\mathcal U$, a $\mathcal U$-valued random variable $U$, and a jointly measurable map $F:\mathcal A\times\mathcal X\times\mathcal U\to\mathcal Y$ such that: 
    \begin{enumerate}
        \item \textit{Consistency and structural response:} $Y=F(A,X,U)$ almost surely; for every $a\in\mathcal A$, define $Y^a:=F(a,X,U)$. 
        \item \textit{Conditional exogeneity:} $U\indep A\mid X$. 
        \item \textit{Positivity on intervention support:} $\pi_0(a\mid x)>0$ whenever $\pi(a\mid x)+\pi'(a\mid x)>0$, up to a $\mathbb P_X\otimes\rho$-null set.
    \end{enumerate}
\end{assum}

For $p\in\{\pi,\pi'\}$, draw $A^p\mid X$ with density $p(\cdot\mid X)$ independently of $U$ given $X$, and define $Y^p:=F(A^p,X,U)$ and $P_Y^p:=\mathcal L(Y^p)$. We test $H_0:P_Y^\pi=P_Y^{\pi'}$ against $H_1:P_Y^\pi\neq P_Y^{\pi'}$.

Let $Q(\cdot \mid a,x)$ denote an observational conditional distribution of $Y$ given $(A,X)=(a,x)$. Under \Cref{assum:identification}, the intervention changes the treatment assignment mechanism while preserving the conditional outcome response. The interventional outcome distribution is then identified by the g-formula   \citep{Robins1986,hernan2020causal}:
\begin{align*}
    P_Y^p(B)
=\int_{\mathcal X}\int_{\mathcal A}
Q(B\mid a,x)p(a\mid x)\rho(\diff a)\mathbb P_X(\diff x),
\end{align*}
for every measurable $B\subseteq\mathcal Y$.

This formula averages the conditional outcome distributions over the treatments assigned by $p$ and the covariate distribution of the population. Positivity ensures that this average only involves treatment-covariate pairs supported by the observational data. We establish these claims in Proposition~\ref{prop:soft-intervention-identification}.


\paragraph{Kernel Formulation.}

To compare distributions over structured outcomes, we represent them as elements of an RKHS. Let $\Theta\subseteq\mathbb R^{d_\Theta}$ be a non-empty parameter set. For each $\theta\in\Theta$, let $k_\theta:\mathcal Y\times\mathcal Y\to\mathbb R$ be a positive-semidefinite kernel on $\cY$ parameterised by $\theta$ with separable RKHS $\mathcal H_\theta$ and strongly measurable canonical feature map $\varphi_\theta(y):=k_\theta(y,\cdot)$.

We define the interventional mean embedding \citep{muandet2021counterfactual,zenati2025cpme}, embedding contrast, and squared maximum mean discrepancy (MMD) \citep{gretton2012kernel} respectively by
\[
    \chi_\theta(p) :=\int_{\mathcal Y}\varphi_\theta(y)P_Y^p(\diff y), \quad
    \Delta_\theta(\pi,\pi') :=\chi_\theta(\pi)-\chi_\theta(\pi'), \quad
    D_\theta(\pi,\pi') :=\|\Delta_\theta(\pi,\pi')\|_{\mathcal H_\theta}^2.
\]

\begin{assum}[Embedding integrability]\label{assum:embedding-integrability}
    For every $\theta\in\Theta$ and $p\in\{\pi,\pi'\}$, assume $\mathbb E_{\mathbb P_0}\sqrt{k_\theta(Y,Y)}<\infty$ and $\int_{\mathcal Y}\sqrt{k_\theta(y,y)}P_Y^p(\diff y)<\infty$. These first moments ensure Bochner integrability under the observed and interventional laws.
\end{assum}

If $k_\theta$ is characteristic, testing $H_0$ is equivalent to testing $D_\theta(\pi,\pi')=0$ \citep[Section~2.2]{gretton2012kernel}.

\paragraph{Doubly Robust Representation.}

To estimate $D_\theta(\pi,\pi')$ from observational data, we construct a doubly robust representation of the embedding contrast $\Delta_\theta(\pi,\pi')$, following \citet{zenati2025cpme}. We further compare the choice of estimator in \Cref{sec:ablation-estimator-comparison}.

For a fixed $\theta\in\Theta$, define the conditional outcome mean embedding \citep{grunewalder2012conditional} by
\[
\mu_\theta(a,x):=\int_{\mathcal Y}\varphi_\theta(y)Q(\diff y\mid a,x).
\]
We use the null-set convention described in \Cref{sec:soft-intervention-identification}. This embedding satisfies $\mu_\theta(A,X)
=\mathbb E_{\mathbb P_0}[\varphi_\theta(Y)\mid A,X]$ almost surely. For ease of notation, write $r(a,x):=\pi(a\mid x)-\pi'(a\mid x)$. Let $e$ be a measurable candidate logging-policy density that is positive wherever $|r|>0$. Let $m_\theta$ be a measurable candidate conditional outcome embedding. We set ratios and integrands weighted by $r$ or $|r|$ to zero where $r=0$.

Define the Hilbert-valued pseudo-feature map
\begin{equation}
\label{eq:generic-dr-feature}
g_{\theta}^{\pi,\pi'}(O;e,m_\theta)
:=
\frac{r(A,X)}{e(A \mid X)}
\left\{\varphi_{\theta}(Y)-m_{\theta}(A,X)\right\}
+
\int_{\mathcal A}r(a,X)m_{\theta}(a,X)\rho(\diff a).
\end{equation}

The second term estimates the embedding contrast through the conditional outcome model, while the first corrects it using propensity-weighted outcome residuals. 

We suppress the intervention superscript when no ambiguity arises, $g_\theta(O;e,m_\theta) := g_\theta^{\pi, \pi'}(O;e,m_\theta)$. The oracle pseudo-feature is then $g_\theta(O) := g_\theta(O;\pi_0,\mu_\theta)$. Write $\mathbb E_0$ for expectation under $\mathbb P_0$.

\begin{prop}[Observed-data representation and double robustness]
    \label{prop:dr-representation}
    Suppose \Cref{assum:identification,assum:embedding-integrability}
    hold and the candidate nuisances satisfy the absolute-integrability conditions of \Cref{sec:proof-dr-representation}. Write $\Delta_\theta := \Delta_\theta(\pi, \pi')$. For candidate nuisance
    functions $e$ and $m_\theta$,
    \begin{align*}
    \E_0
    \left[g_\theta(O;e,m_\theta)\right]
    -\Delta_\theta =
    \E_0\left[
    \int_{\mathcal A}
    r(a,X)
    \left\{\frac{\pi_0(a\mid X)}{e(a\mid X)}-1\right\}
    \left\{\mu_\theta(a,X)-m_\theta(a,X)\right\}
    \rho(\diff a)
    \right].
    \end{align*}
    Thus, $\E_0 \left[g_\theta(O;e,m_\theta)\right] = \Delta_\theta$ if either $e=\pi_0$ or $m_\theta=\mu_\theta$ on the support of $|r|$.
\end{prop}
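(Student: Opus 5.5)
The plan is to compute $\E_0[g_\theta(O;e,m_\theta)]$ term by term using the tower property, and to express $\Delta_\theta$ through the identification result. First, by \Cref{prop:soft-intervention-identification} and Bochner integrability (\Cref{assum:embedding-integrability}), the embedding commutes with the g-formula:
\[
\chi_\theta(p)=\E_0\left[\int_{\mathcal A}\mu_\theta(a,X)p(a\mid X)\rho(\diff a)\right].
\]
Hence $\Delta_\theta=\E_0[\int_{\mathcal A} r(a,X)\mu_\theta(a,X)\rho(\diff a)]$. To justify this, I would pair both sides with an arbitrary $h\in\mathcal H_\theta$, so that only scalar identities $\langle h,\varphi_\theta(y)\rangle=h(y)$ are involved, and then apply Fubini to the g-formula integrated against $h$.

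Next I treat the weighted residual term. Conditioning on $(A,X)$ and using $\mu_\theta(A,X)=\E_0[\varphi_\theta(Y)\mid A,X]$ gives
\[
\E_0\left[\frac{r(A,X)}{e(A\mid X)}\{\varphi_\theta(Y)-m_\theta(A,X)\}\right]=\E_0\left[\frac{r(A,X)}{e(A\mid X)}\{\mu_\theta(A,X)-m_\theta(A,X)\}\right].
\]
Pulling the scalar weight out of the conditional Bochner expectation requires the absolute-integrability conditions. I then integrate over $A\mid X$ with density $\pi_0(\cdot\mid X)$, which turns this into $\E_0[\int r(a,X)\frac{\pi_0(a\mid X)}{e(a\mid X)}\{\mu_\theta-m_\theta\}(a,X)\rho(\diff a)]$. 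Positivity matters here: wherever $r\neq0$ we have $\pi_0>0$ up to null sets, and $e>0$ by assumption, so the ratio is well defined. On $\{r=0\}$ both sides vanish by the stated convention.

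The plug-in term contributes $\E_0[\int r(a,X)m_\theta(a,X)\rho(\diff a)]$. Subtracting $\Delta_\theta$ and combining the three integrals over $\mathcal A$ under one expectation gives
\[
\E_0\left[\int r\left\{\frac{\pi_0}{e}\right\}(\mu_\theta-m_\theta)-r(\mu_\theta-m_\theta)\,\rho(\diff a)\right],
\]
which is exactly the stated expression. The double robustness claim follows at once: either factor vanishes on the support of $|r|$, so the integrand is zero $\mathbb P_X\otimes\rho$-a.e.

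The main obstacle is measure-theoretic bookkeeping rather than algebra. Splitting the expectation of a sum into a sum of expectations requires each piece to be Bochner integrable, which is what the integrability conditions in \Cref{sec:proof-dr-representation} supply. Interchanging $\int_{\mathcal A}$ with $\E_0$ and with conditional expectations requires Fubini for Bochner integrals, which holds because $\mathcal H_\theta$ is separable and $\varphi_\theta$ is strongly measurable. I would handle both uniformly by working weakly, testing every identity against $h\in\mathcal H_\theta$ with $\|h\|\le1$, and then concluding equality of the vectors, since the Hilbert space is separated by its dual.
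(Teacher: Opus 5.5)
Your proposal is correct and follows the paper's proof essentially step for step. You identify $\Delta_\theta$ as $\E_0[\int_{\mathcal A} r(a,X)\mu_\theta(a,X)\rho(\diff a)]$ via the g-formula, condition the weighted residual on $(A,X)$, integrate over $A\mid X$ with density $\pi_0$, then add the plug-in term and subtract; testing each identity against $h\in\mathcal H_\theta$ is just an equivalent way of justifying the Bochner--Fubini interchanges that the paper invokes directly.
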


A proof is given in \Cref{sec:proof-dr-representation}. We examine the effect of outcome-model misspecification empirically in \Cref{sec:ablation-outcome-model-misspecification}, using a misspecified $m_\theta$ but oracle propensity $e \equiv \pi_0$.

\paragraph{Test Statistic.}

 Since $D_\theta(\pi,\pi')=\|\Delta_\theta(\pi,\pi')\|_{\mathcal H_\theta}^2$, the embedding contrast vanishes  under $H_0$ and the induced cross-U kernel over $\cH_\theta$ is one-degenerate. We therefore estimate $D_\theta(\pi,\pi')$ by taking the inner product of two independent estimates of $\Delta_\theta(\pi,\pi')$, following the cross-U and cross-MMD constructions of \citet{kim2024dimension} and \citet{shekhar2022permutation}.
 
 Let $\hat\pi_0$ and $\hat\mu_\theta$ be nuisance estimators fitted on a sample independent of the evaluation sample, and set $\hat g_{\theta,i}:=g_\theta(O_i;\hat\pi_0,\hat\mu_\theta)$. We split the evaluation sample into two disjoint halves $\mathcal I_1$ and $\mathcal I_2$ of size $m$. Define $\bar g_{\theta,\ell} := \tfrac{1}{m} \sum_{i\in \mathcal I_\ell} \hat g_{\theta,i}$ for $\ell \in \{1, 2\}$. The cross-MMD and variance estimators are then, respectively,
\[
\widehat D_\theta^{\mathrm{cross}}
:= \langle \bar g_{\theta,1},\bar g_{\theta,2}\rangle_{\cH_\theta}
= \frac{1}{m^2}\sum_{i\in\mathcal I_1}\sum_{j\in\mathcal I_2}
\langle \hat g_{\theta,i},\hat g_{\theta,j}\rangle_{\cH_\theta},
\quad
\hat s_\theta^2
:= \frac{1}{m}\sum_{j\in\mathcal I_2}
\Bigl(\langle \bar g_{\theta,1},\hat g_{\theta,j}\rangle_{\cH_\theta} - \widehat D_\theta^{\mathrm{cross}}\Bigr)^{2}.
\]

Our test statistic is the studentised cross-U statistic
\begin{equation}\label{eq:test-statistic}
T_\theta := {\sqrt{m} \widehat D_\theta^{\mathrm{cross}}} / {\hat s_\theta}.
\end{equation}
We set $T_\theta=0$ whenever $\hat s_\theta=0$.

Let $\Phi$ denote the standard normal distribution function. For a fixed kernel, \Cref{thm:heldout-validity} gives, under the corresponding moment, projected non-degeneracy and nuisance-bias conditions of \Cref{assum:heldout-regularity}, $T_\theta\rightarrow^{d}\mathcal N(0,1)$ under $H_0$. Since $D_\theta(\pi,\pi')>0$ for a characteristic kernel under $H_1$, large positive values of $T_\theta$ therefore provide evidence against $H_0$. We reject at level $\alpha$ when $\hat p = 1-\Phi(T_\theta) < \alpha$.

\section{Learning Deep Kernels}\label{section:deep_kernels}

\paragraph{Learning Objective.}

The choice of outcome kernel affects both the population discrepancy $D_\theta$ and the sampling variability of its estimator. We therefore select $\theta$ using the signal-to-noise ratio \citep{gretton2012optimal,sutherland2017generative,liu2021learningdeepkernelsnonparametric}

\[
    J_{\theta} = {D_{\theta}} / {s_{\theta}},
    \qquad \text{with }
    D_\theta
    :=
    D_\theta(\pi,\pi'), \quad
    s_\theta^2
    :=
    \operatorname{Var}_{O \sim \mathbb P_0} \left[
    \left\langle
    \Delta_\theta,
    g_\theta(O)
    \right\rangle_{\mathcal H_\theta}
    \right]
\]
whenever $s_\theta>0$. $J_\theta$ is unchanged by a global rescaling of the kernel and governs the growth of the studentised test statistic under fixed alternatives.

\begin{prop}[Fixed-alternative growth rate]\label{prop:fixed-kernel-efficacy}
Suppose \Cref{assum:identification,assum:embedding-integrability} hold. For a fixed $\theta$, suppose $D_\theta>0$, $s_\theta>0$, and $\mathbb E_{O\sim\mathbb P_0}[\|g_\theta^{\pi,\pi'}(O)\|_{\mathcal H_\theta}^4]<\infty$. Suppose the statistics use the true nuisance functions. Then the cross-MMD and variance estimators satisfy
\[
\widehat D_\theta^{\mathrm{cross}}\xrightarrow{\mathbb P}D_\theta,
\qquad \hat s_\theta^2\xrightarrow{\mathbb P}s_\theta^2,
\qquad \nicefrac{T_\theta}{\sqrt m} \xrightarrow{\mathbb P} J_\theta.
\]
\end{prop}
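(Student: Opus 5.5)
With the true nuisances, $\hat g_{\theta,i}=g_\theta(O_i)$ are i.i.d.\ $\mathcal H_\theta$-valued random elements. By \Cref{prop:dr-representation} with $e=\pi_0$ and $m_\theta=\mu_\theta$, they have mean $\E_0[g_\theta(O)]=\Delta_\theta$. The fourth-moment assumption gives $\E_0\|g_\theta(O)\|^2<\infty$. Because $\mathcal H_\theta$ is separable, I will use the Hilbert-space weak law of large numbers. Concretely, $\E\|\bar g_{\theta,\ell}-\Delta_\theta\|^2=\E\|g_\theta(O)-\Delta_\theta\|^2/m\to0$, so $\bar g_{\theta,\ell}\to\Delta_\theta$ in probability in norm for $\ell=1,2$. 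Continuity of the inner product then yields $\widehat D_\theta^{\mathrm{cross}}=\langle\bar g_{\theta,1},\bar g_{\theta,2}\rangle\to\|\Delta_\theta\|^2=D_\theta$ in probability.

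For the variance estimator, I will compare it with the oracle quantity $\tilde s^2:=\frac1m\sum_{j\in\mathcal I_2}(\langle\Delta_\theta,g_{\theta,j}\rangle-D_\theta)^2$. The scalar variables $Z_j:=\langle\Delta_\theta,g_{\theta,j}\rangle$ are i.i.d.\ with mean $D_\theta$ and finite second moment, so the scalar WLLN gives $\tilde s^2\to s_\theta^2$. Next, write $W_j:=\langle\bar g_{\theta,1},g_{\theta,j}\rangle-\widehat D_\theta^{\mathrm{cross}}$. Then
\[
W_j-(Z_j-D_\theta)=\langle\bar g_{\theta,1}-\Delta_\theta,g_{\theta,j}\rangle-(\widehat D_\theta^{\mathrm{cross}}-D_\theta).
\]
By Cauchy--Schwarz, $\frac1m\sum_j(W_j-(Z_j-D_\theta))^2\le 2\|\bar g_{\theta,1}-\Delta_\theta\|^2\,\frac1m\sum_j\|g_{\theta,j}\|^2+2(\widehat D_\theta^{\mathrm{cross}}-D_\theta)^2$. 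The average $\frac1m\sum_j\|g_{\theta,j}\|^2$ is $O_{\mathbb P}(1)$ by the WLLN, since $\E\|g\|^2<\infty$. The other factors tend to zero by the first step. Note that $\bar g_{\theta,1}$ is independent of $\mathcal I_2$, but this is not even needed for the bound.

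Writing $a_j=Z_j-D_\theta$ and $b_j=W_j-a_j$, I then use the elementary inequality $|\sqrt{\frac1m\sum W_j^2}-\sqrt{\frac1m\sum a_j^2}|\le\sqrt{\frac1m\sum b_j^2}$, which is Minkowski's inequality. This gives $\hat s_\theta\to s_\theta$, hence $\hat s_\theta^2\to s_\theta^2$. Finally, $T_\theta/\sqrt m=\widehat D_\theta^{\mathrm{cross}}/\hat s_\theta$ on the event $\{\hat s_\theta>0\}$. That event has probability tending to one because $s_\theta>0$. The continuous mapping theorem then gives convergence to $D_\theta/s_\theta=J_\theta$, and the convention $T_\theta=0$ when $\hat s_\theta=0$ is irrelevant asymptotically.

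The main obstacle is the variance step: controlling the plug-in of $\bar g_{\theta,1}$ and $\widehat D_\theta^{\mathrm{cross}}$ in place of $\Delta_\theta$ and $D_\theta$. The decomposition above handles it using only second moments, so the fourth-moment hypothesis is stronger than needed here; it would matter for rates or a CLT. I also need to check that $Z_j$ is measurable and that the mean identity is valid. Both follow from strong measurability of $\varphi_\theta$ and from \Cref{assum:embedding-integrability}, which gives Bochner integrability of $g_\theta(O)$.
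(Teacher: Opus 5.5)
Your proposal is correct and follows essentially the same route as the paper: the Hilbert-space law of large numbers for the half-sample means, continuity of the inner product for $\widehat D_\theta^{\mathrm{cross}}$, a Cauchy--Schwarz comparison of the plug-in projections $\langle\bar g_{\theta,1},g_{\theta,j}\rangle$ with the oracle projections $\langle\Delta_\theta,g_{\theta,j}\rangle$ using $\frac1m\sum_j\|g_{\theta,j}\|^2=O_{\mathbb P}(1)$, and then Slutsky. The only cosmetic difference is the variance step: the paper compares uncentred second moments via $|u^2-w^2|=|u-w|\,|u+w|$ and then subtracts $(\widehat D_\theta^{\mathrm{cross}})^2$, whereas you compare centred residuals directly and pass to $\hat s_\theta$ via Minkowski.
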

A proof is given in \Cref{sec:proof-fixed-alternative}. \Cref{sec:ablation-representation-objective} further studies the choice of learning criterion. This result motivates maximising $J_\theta$; kernels with larger $J_\theta$ yield a test statistic that is asymptotically larger under fixed alternatives. Since $D_\theta$ and $s_\theta$ are unknown, we use the empirical regularised criterion \citep{jitkrittum2016, jitkrittum2017}
\begin{equation}
\label{eq:selection-criterion}
    \widehat{J}_{\lambda, \theta}^{\mathrm{DR}}
    = {\widehat D_{\theta}^{\mathrm{cross}}} /{\hat s_{\theta, \lambda}},
\end{equation}
where $\hat s_{\theta, \lambda}^2 := \hat s_{\theta}^2 + \lambda$ and $\lambda>0$. The regularisation prevents the learning objective from becoming artificially large when the estimated variance $\hat s_\theta^2$ is close to zero. It is used only to stabilise kernel selection; the final test statistic remains unregularised.

While the learning criterion can be used to optimise the parameters of a simple kernel family, like the lengthscale of a Gaussian kernel, the kernel still compares outcomes isotropically throughout the outcome space, and may have low power against localised or anisotropic perturbations at finite sample size \citep[Section~3]{liu2021learningdeepkernelsnonparametric} (\Cref{fig:deep_vis}).

\begin{figure}[t]
    \centering
    \includegraphics[width=\linewidth]{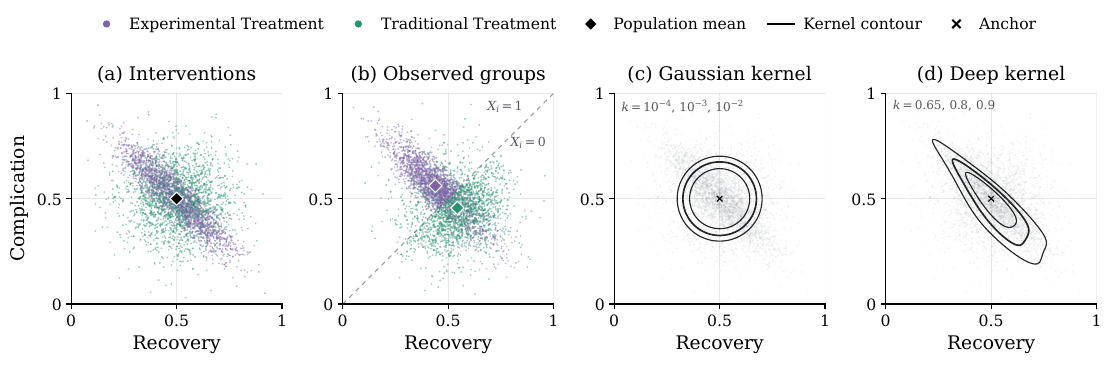}
    \caption{Synthetic example where treatment assignment is driven by an indicator of disease severity $X$. The potential-outcome distributions are Gaussians with the same mean, but the traditional-treatment distribution is isotropic while the experimental-treatment distribution is anisotropic. The deep kernel adapts to the geometry of the distributions, while the Gaussian kernel treats points uniformly throughout the outcome space.}
    \label{fig:deep_vis}
\end{figure}

\paragraph{Deep Kernels.}

We therefore learn a deep kernel combining a neural component with a base characteristic kernel. Since deep kernel optimisation is sensitive to random initialisation, we train and select an ensemble of $M$ kernels, and average them. Let $\Xi\subseteq\mathbb R^{d_0}$ be the parameter space of a kernel member and let $\theta=(\vartheta_1,\ldots,\vartheta_M) \in \Theta$ collect each $\vartheta_b\in\Xi$ for $b \in [1, M]$. Define
\begin{equation}
\label{eq:deep-kernel}
k_{\vartheta_b}(y,y')
=(1-w_{\vartheta_b})\kappa_{\vartheta_b}\bigl(\zeta_{\vartheta_b}(y),\zeta_{\vartheta_b}(y')\bigr)
+w_{\vartheta_b} k_{\vartheta_b}^{\mathrm{char}}(y,y'),\quad k_\theta(y,y')=\frac1M\sum_{b=1}^M k_{\vartheta_b}(y,y').
\end{equation}
Here $\zeta_\vartheta:\mathcal Y\to\mathbb R^d$ is a neural feature map, $\kappa_\vartheta$ is a Gaussian kernel on the learned feature space, $k_\vartheta^{\mathrm{char}}$ is a characteristic kernel on $\mathcal Y$, and $w_\vartheta\in[w_{\min},1]$ is a learned mixing parameter where $w_{\min}>0$.

The learned component $\kappa_{\vartheta_b}(\zeta_{\vartheta_b}(y),\zeta_{\vartheta_b}(y'))$ adapts the geometry used to distinguish the interventional distributions, while the characteristic component preserves sensitivity to arbitrary distributional differences.

\begin{lem}[Characteristic ensemble safeguard]\label{lem:characteristic-ensemble}
    For every pair of probability laws $P,Q$ for which the component mean embeddings exist,
    \begin{equation}
    \label{eq:characteristic-safeguard}
    \operatorname{MMD}_{k_\theta}^2(P,Q)
    \ge
    \frac{w_{\min}}M\sum_{b=1}^M
    \operatorname{MMD}_{k_{\vartheta_b}^{\mathrm{char}}}^2(P,Q).
    \end{equation}
    Consequently $k_\theta$ is characteristic for every $\theta\in\Theta$.
\end{lem}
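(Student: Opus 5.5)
The plan is to use the fact that squared MMD is additive, in the right sense, under nonnegative combinations of kernels. For any psd kernel $k$ whose mean embeddings of $P$ and $Q$ exist, we have
\[
\operatorname{MMD}_k^2(P,Q)=\E_{Y,Y'\sim P}k(Y,Y')+\E_{Z,Z'\sim Q}k(Z,Z')-2\E_{Y\sim P,Z\sim Q}k(Y,Z),
\]
with independent copies. This expression is linear in $k$. The expectations are finite because $|k(y,y')|\le\sqrt{k(y,y)k(y',y')}$ and the first moments $\E\sqrt{k(Y,Y)}$ are finite, which is exactly the hypothesis that the component mean embeddings exist.

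The first step is to apply this linearity to \eqref{eq:deep-kernel}, which gives
\[
\operatorname{MMD}_{k_\theta}^2(P,Q)=\frac1M\sum_{b=1}^M\Bigl[(1-w_{\vartheta_b})\operatorname{MMD}_{\kappa_{\vartheta_b}\circ\zeta_{\vartheta_b}}^2(P,Q)+w_{\vartheta_b}\operatorname{MMD}_{k^{\mathrm{char}}_{\vartheta_b}}^2(P,Q)\Bigr].
\]
Here $\kappa_{\vartheta_b}\circ\zeta_{\vartheta_b}$ denotes the psd kernel $(y,y')\mapsto\kappa_{\vartheta_b}(\zeta_{\vartheta_b}(y),\zeta_{\vartheta_b}(y'))$. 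This kernel is bounded by $1$, since $\kappa$ is Gaussian, so its embeddings always exist. The embeddings of the combination $k_\theta$ also exist, because $\sqrt{k_\theta(y,y)}$ is dominated by the sum of the component roots.

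The second step is the bound itself. Each squared MMD is the squared RKHS norm of an embedding difference, hence nonnegative. Since $1-w_{\vartheta_b}\ge0$, the first term in each bracket can be dropped. Since $w_{\vartheta_b}\ge w_{\min}$, the second term is at least $w_{\min}\operatorname{MMD}^2_{k^{\mathrm{char}}_{\vartheta_b}}(P,Q)$. Summing over $b$ yields \eqref{eq:characteristic-safeguard}.

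For the consequence, suppose $\operatorname{MMD}_{k_\theta}(P,Q)=0$. The inequality then forces $\operatorname{MMD}_{k^{\mathrm{char}}_{\vartheta_b}}(P,Q)=0$ for some $b$; in fact it forces this for every $b$. Because $k^{\mathrm{char}}_{\vartheta_b}$ is characteristic, it follows that $P=Q$. Hence the embedding map of $k_\theta$ is injective on the relevant class of laws, which is what it means for $k_\theta$ to be characteristic.

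The only delicate point is justifying linearity via the double-integral representation rather than through the RKHS norms directly. The RKHS of $k_\theta$ differs from those of its components, so I will argue through the kernel integrals, using Fubini and Bochner integrability to identify $\|\chi_P-\chi_Q\|^2$ with the triple expectation above.
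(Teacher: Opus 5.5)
Your proposal is correct and follows the paper's proof essentially step for step. Both arguments use linearity of $\operatorname{MMD}^2$ in the kernel via the double-integral representation, decompose over members into deep and characteristic parts, drop the nonnegative deep terms, use $w_{\vartheta_b}\ge w_{\min}$, and conclude injectivity from any single characteristic component; your integrability remarks (the bound $|k(y,y')|\le\sqrt{k(y,y)k(y',y')}$ and the boundedness of the Gaussian component) just make explicit what the paper leaves implicit.
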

A proof is given in \Cref{sec:proof-characteristic}. For every $\theta\in\Theta$, $D_\theta(\pi,\pi')=0$ if and only if the two interventional outcome distributions are equal.

\paragraph{Training and Testing Procedure.}

We detail the training and testing procedure in \Cref{alg:dr-dk-causal-mmd} and \Cref{sec:training-testing-details}. $\cD$ is partitioned into independent learning, validation and testing samples \citep{gretton2012optimal,liu2021learningdeepkernelsnonparametric}. We optimise the kernel parameters on the learning sample, select a kernel on the validation sample, and compute $T_\theta$ on the testing sample. The learning sample is split into three disjoint sets $\cD_{\mathrm{fit}}$, $\cD_{\mathrm{sel},1}$, and $\cD_{\mathrm{sel},2}$. We fit the nuisance estimators on $\cD_{\mathrm{fit}}$ and compute $\widehat J_{\lambda,\vartheta_b}^{\mathrm{DR}}$ on $\cD_{\mathrm{sel},1}$ and $\cD_{\mathrm{sel},2}$. At each training step, we rotate the roles of the three folds and average the resulting criterion, then optimise each $\vartheta_b$ by gradient ascent. During training, we maintain an exponential moving average (EMA) of each member's kernel parameters \citep{tarvainen2017mean} and select the EMA checkpoint with the largest unregularised validation statistic.

Let $\mathcal F_{\mathrm{learn}}$ denote the data and randomness used to estimate the nuisances and train and select the deep kernel. The fitted nuisances and selected kernel are therefore fixed when conditioned on $\mathcal F_{\mathrm{learn}}$, and the test observations remain independent. We suppose \Cref{assum:heldout-regularity} holds, including conditional fourth-moment control, non-degenerate projected variance, and a conditional pseudo-feature bias $o_{\mathbb P}(m^{-1/2})$. The latter follows from a corresponding product-rate condition on the nuisance errors (Proposition~\ref{prop:dr-representation}).

\begin{thm}[Held-out null validity]
\label{thm:heldout-validity}
Suppose \Cref{assum:heldout-regularity} holds, and set $T_{\hat\theta}=0$ when $\hat s_{\hat\theta}=0$. Under $H_0$, as the size $m$ of each test half tends to infinity,
\begin{equation}
\label{eq:heldout-null-limit}
\sup_{t\in\mathbb R}
\left|
\mathbb P(T_{\hat\theta}\le t\mid\mathcal F_{\mathrm{learn}})
-\Phi(t)
\right|
\xrightarrow{\mathbb P}0.
\end{equation}
Consequently, $T_{\hat\theta}\xrightarrow{d}\mathcal N(0,1)$ and the test has asymptotic level $\alpha$ for every fixed $\alpha\in(0,1)$.
\end{thm}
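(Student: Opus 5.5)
Conditionally on $\mathcal F_{\mathrm{learn}}$, the selected kernel $\hat\theta$ and the nuisance estimates $\hat\pi_0,\hat\mu_{\hat\theta}$ are fixed. The test observations in $\mathcal I_1\cup\mathcal I_2$ are then i.i.d.\ draws from $\mathbb P_0$. This reduces the problem to a fixed-kernel, fixed-nuisance cross-U statistic. I will write $\hat g_i:=\hat g_{\hat\theta,i}$, $\nu:=\E_0[\hat g_i\mid\mathcal F_{\mathrm{learn}}]$ for the conditional pseudo-feature mean, and $\xi_i:=\hat g_i-\nu$ for the centred pseudo-features. Under $H_0$ we have $\Delta_{\hat\theta}=0$, so by \Cref{prop:dr-representation} $\nu$ is exactly the nuisance bias, and \Cref{assum:heldout-regularity} gives $\|\nu\|_{\cH_{\hat\theta}}=o_{\mathbb P}(m^{-1/2})$.

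The first step is to decompose the numerator:
\[
\sqrt m\,\widehat D^{\mathrm{cross}}_{\hat\theta}=\sqrt m\,\langle\bar\xi_1,\bar\xi_2\rangle+\sqrt m\,\langle\nu,\bar\xi_1+\bar\xi_2\rangle+\sqrt m\,\|\nu\|^2 .
\]
The last two terms are $o_{\mathbb P}(m^{-1/2})$, because $\bar\xi_\ell=O_{\mathbb P}(m^{-1/2})$ by conditional second moments and $\|\nu\|=o_{\mathbb P}(m^{-1/2})$. The leading term is a degenerate bilinear form. The central step is to condition on $\mathcal I_1$: then
\[
\sqrt m\langle\bar\xi_1,\bar\xi_2\rangle=\frac1{\sqrt m}\sum_{j\in\mathcal I_2}\langle\bar\xi_1,\xi_j\rangle,
\]
which is a sum of i.i.d.\ mean-zero scalars with conditional variance $\sigma^2:=\langle\bar\xi_1,\Sigma\bar\xi_1\rangle$, where $\Sigma$ is the conditional covariance operator of $\xi$.

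The next step applies the Berry--Esseen inequality conditionally on $(\mathcal F_{\mathrm{learn}},\mathcal I_1)$. The ratio of third to second moments is bounded, via Cauchy--Schwarz and the fourth-moment condition, by
\[
m^{-1/2}\,\E[\langle\bar\xi_1,\xi\rangle^4]^{1/2}/\sigma^2 .
\]
This expression is controlled by the projected non-degeneracy condition of \Cref{assum:heldout-regularity}. I read that condition as ensuring $\sigma^2$ is not negligible relative to the fourth-moment scale of $\langle\bar\xi_1,\xi\rangle$ with probability tending to one. The resulting bound is uniform in $t$ and tends to zero in probability, so the standardised leading term is asymptotically standard normal conditionally.

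Then the denominator is studentised. Write $\hat s^2_{\hat\theta}$ as the empirical variance over $j\in\mathcal I_2$ of $\langle\bar g_1,\hat g_j\rangle$. Expand $\bar g_1=\bar\xi_1+\nu$ and $\hat g_j=\xi_j+\nu$. The contributions involving $\nu$ are of relative order $o_{\mathbb P}(1)$ compared with $\sigma^2$. A conditional weak law of large numbers, using the fourth moments, gives $\hat s^2_{\hat\theta}/\sigma^2\to1$ in probability. The event $\hat s_{\hat\theta}=0$ then has vanishing probability. Slutsky's lemma, applied conditionally, and Pólya's theorem, to upgrade pointwise to uniform convergence since $\Phi$ is continuous, yield \eqref{eq:heldout-null-limit}. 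Integrating over $\mathcal I_1$ and $\mathcal F_{\mathrm{learn}}$ with dominated convergence gives the unconditional limit and the asymptotic level.

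The main obstacle is making the conditional CLT and the variance ratio rigorous on the random direction $\bar\xi_1$. The difficulty is that $\sigma^2$ is itself random and of order $m^{-1}$. Both the Berry--Esseen ratio and the bias terms must therefore be compared with $\sigma^2$ rather than with a fixed constant. I would first try to normalise by $m\sigma^2$ and show $m\sigma^2\ge c\,\E[m\sigma^2]$ with high probability. This would use the fourth-moment control, giving concentration of $\|\Sigma^{1/2}\bar\xi_1\|^2$ via its variance, together with the projected non-degeneracy assumption. If that fails, I would state the non-degeneracy directly in this conditional form as part of \Cref{assum:heldout-regularity}.
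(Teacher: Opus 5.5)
Your proposal is correct and follows essentially the same route as the paper: centre the fitted scores at $b_{\mathrm{test}}$, condition on the first test half so that $\sqrt m\langle\bar\xi_1,\bar\xi_2\rangle$ becomes a normalised sum of i.i.d.\ scalars $\langle\bar\xi_1,\xi_j\rangle$, show that the studentiser is ratio-consistent, absorb the bias terms using $\sqrt m\|b_{\mathrm{test}}\|\to0$ together with $(m v_{\mathrm{test}})^{-1}=O_{\mathbb P}(1)$, and finally average over the first half to obtain the statement conditional on $\mathcal F_{\mathrm{learn}}$. The differences are cosmetic: your Berry--Esseen bound is exactly the square root of the Lyapunov ratio $\mathbb E[V(O)^4\mid\mathcal G_m]/(m v_{\mathrm{test}}^2)$ that condition 2 of \Cref{assum:heldout-regularity} already requires to vanish, and the conditional lower bound on $m\sigma^2$ that you were prepared to assume is also part of condition 2, so no concentration argument for $\|\Sigma^{1/2}\bar\xi_1\|^2$ is needed.
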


A proof is given in \Cref{sec:proof-heldout-validity}. The result is conditional on the full learning procedure and does not require the optimised parameters to converge to a deterministic kernel.

\paragraph{Uniform Control of the Learning Criterion.}

We provide a uniform analysis of the learning criterion (see also \citet[Theorem~1]{gretton2012optimal}, \citet[Theorem~11]{liu2021learningdeepkernelsnonparametric}) with the fitted nuisances $\hat\pi_0$ and $\hat\mu_\theta$, which will introduce a nuisance estimation error. The bound does not guarantee that training attains a global maximiser. Define
\begin{equation}\label{eq:population-regularized-criterion}
J_{\lambda,\theta}
:=
\frac{D_\theta}{\sqrt{s_\theta^2+\lambda}},
\qquad \lambda>0.
\end{equation}

Let the nuisance-fitting sample have size $n$ and each selection half have size $m$. Write $\mathcal F_{\mathrm{fit}}$ for the data and randomness used to fit the nuisance functions. The nuisance functions are therefore fixed when conditioned on $\mathcal F_{\mathrm{fit}}$, but retain their dependence on $\theta$.

On a compact finite-dimensional parameter class, we impose the conditions of \Cref{assum:selection-regularity}. These require the fitted kernels, constructed from $k_\theta$ and the nuisance estimates, to be bounded and Lipschitz in $\theta$, together with overlap and measurability conditions.

\begin{thm}[Uniform feasible learning criterion]
\label{thm:feasible-criterion-uniform}
Let \Cref{assum:identification,assum:selection-regularity} hold and fix $\lambda_0>0$. Let $\epsilon_{\pi,n}^2$ and $\epsilon_{\mu,n}^2$ be the nuisance errors conditional on $\mathcal F_{\mathrm{fit}}$ defined in \Cref{appendix:theory-proofs}. For every $\lambda\ge\lambda_0$ and $\eta\in(0,1)$, there is a finite constant $C$, depending only on the constants in \Cref{assum:selection-regularity} and $\lambda_0$, such that, with conditional probability at least $1-\eta$ over the two selection halves,
\begin{equation}\label{eq:regularized-uniform-bound}
\sup_{\theta\in\Theta}
|\widehat J_{\lambda,\theta}^{\mathrm{DR}}-J_{\lambda,\theta}|
\le
C\left\{
\frac{\sqrt{d_\Theta\log(1+2R_\Theta\sqrt m)+\log(4/\eta)}+1}{\sqrt m}
+\epsilon_{\pi,n}\epsilon_{\mu,n}
+\epsilon_{\pi,n}
+\epsilon_{\mu,n}^2
\right\}.
\end{equation}
For three cyclic rotations of the learning folds satisfying the same regularity constants, the averaged criterion obeys the analogous bound with unconditional probability at least $1-3\eta$, with the nuisance remainder averaged over rotations.
\end{thm}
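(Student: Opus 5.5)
We work conditionally on $\mathcal F_{\mathrm{fit}}$, so the fitted nuisances $\hat\pi_0,\hat\mu_\theta$ are deterministic functions of $\theta$ and the two selection halves are i.i.d. and independent of them. Introduce the intermediate \emph{feasible population} quantities: $\tilde\Delta_\theta:=\E_0[g_\theta(O;\hat\pi_0,\hat\mu_\theta)\mid\mathcal F_{\mathrm{fit}}]$, $\tilde D_\theta:=\|\tilde\Delta_\theta\|^2_{\mathcal H_\theta}$, and $\tilde s_\theta^2:=\operatorname{Var}[\langle\tilde\Delta_\theta,\hat g_\theta(O)\rangle\mid\mathcal F_{\mathrm{fit}}]$. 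We then split
\[
|\widehat J^{\mathrm{DR}}_{\lambda,\theta}-J_{\lambda,\theta}|\le \Bigl|\tfrac{\widehat D^{\mathrm{cross}}_\theta}{\hat s_{\theta,\lambda}}-\tfrac{\tilde D_\theta}{\tilde s_{\theta,\lambda}}\Bigr|+\Bigl|\tfrac{\tilde D_\theta}{\tilde s_{\theta,\lambda}}-\tfrac{D_\theta}{s_{\theta,\lambda}}\Bigr|,
\]
a stochastic term and a nuisance-bias term. Because $\lambda\ge\lambda_0$, the map $(D,s^2)\mapsto D/\sqrt{s^2+\lambda}$ is Lipschitz on the bounded range permitted by \Cref{assum:selection-regularity}, with constant depending only on the boundedness constant and $\lambda_0$. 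Hence each term reduces to controlling $|\widehat D-\tilde D|$ and $|\hat s^2-\tilde s^2|$, respectively $|\tilde D-D|$ and $|\tilde s^2-s^2|$, uniformly in $\theta$.

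\textbf{Stochastic term.} For fixed $\theta$, the fitted kernel $\langle\hat g_{\theta,i},\hat g_{\theta,j}\rangle$ is bounded by some $B$. We write $\widehat D^{\mathrm{cross}}_\theta-\tilde D_\theta=\langle\bar g_1-\tilde\Delta,\tilde\Delta\rangle+\langle\tilde\Delta,\bar g_2-\tilde\Delta\rangle+\langle\bar g_1-\tilde\Delta,\bar g_2-\tilde\Delta\rangle$. The two linear terms are bounded sample means, so Hoeffding's inequality gives deviations of order $\sqrt{\log(1/\eta)/m}$. The cross term is a degenerate two-sample U-type statistic; we bound it through Hilbert-space concentration of $\|\bar g_\ell-\tilde\Delta\|$ (Pinelis/McDiarmid), which gives order $(1+\sqrt{\log(1/\eta)})^2/m$. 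The variance estimator $\hat s^2_\theta$ expands into similar averages of bounded quantities, handled by McDiarmid's inequality with bounded differences of order $B^2/m$. For uniformity, take an $\epsilon$-net of $\Theta$ (radius $R_\Theta$, dimension $d_\Theta$) with $\epsilon=1/\sqrt m$, of cardinality at most $(1+2R_\Theta\sqrt m)^{d_\Theta}$, and apply a union bound. The Lipschitz-in-$\theta$ assumption on the fitted kernels transfers the bound from net points to all of $\Theta$ at additive cost $O(L/\sqrt m)$. This cost produces the "$+1$" in the numerator, while the union bound produces $\sqrt{d_\Theta\log(1+2R_\Theta\sqrt m)+\log(4/\eta)}$; the factor $4$ comes from splitting $\eta$ across the $D$ and $s^2$ events in the two halves.

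\textbf{Nuisance term.} By \Cref{prop:dr-representation}, $\tilde\Delta_\theta-\Delta_\theta$ equals the product-form integral, and Cauchy--Schwarz with overlap bounds its norm by $C\epsilon_{\pi,n}\epsilon_{\mu,n}$. Then $|\tilde D-D|\le\|\tilde\Delta-\Delta\|(\|\tilde\Delta\|+\|\Delta\|)$ is of the same order. For the variance, note that
\[
\langle\tilde\Delta,\hat g\rangle-\langle\Delta,g\rangle=\langle\tilde\Delta-\Delta,\hat g\rangle+\langle\Delta,\hat g-g\rangle .
\]
The difference $\hat g-g$ contains first-order pieces: the propensity ratio error times the residual $\varphi-\mu$, which is of order $\epsilon_{\pi,n}$ in $L^2$, and the outcome-model error $(r/\pi_0)(\mu-\hat m)$ together with its integral correction. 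For the outcome-model part, the mean-zero structure means that in the second-moment difference $\E[\langle\Delta,\hat g\rangle^2]-\E[\langle\Delta,g\rangle^2]$ the linear cross term in the outcome error vanishes to first order. This leaves $\epsilon_{\mu,n}^2$, while the propensity part contributes linearly as $\epsilon_{\pi,n}$. Bounding $\operatorname{Var}$ differences via $|\E X^2-\E Y^2|\le\|X-Y\|_2(\|X\|_2+\|Y\|_2)$ plus this refinement yields $\epsilon_{\pi,n}+\epsilon_{\mu,n}^2+\epsilon_{\pi,n}\epsilon_{\mu,n}$.

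I expect the main obstacle to be exactly this variance step: obtaining $\epsilon_{\mu,n}^2$ rather than $\epsilon_{\mu,n}$ requires exploiting the conditional mean-zero property of $\varphi_\theta(Y)-\mu_\theta(A,X)$ given $(A,X)$, uniformly in $\theta$ through the definitions of $\epsilon_{\pi,n}$ and $\epsilon_{\mu,n}$. If the first-order outcome cross term does not cancel cleanly, I would absorb it into the $\epsilon_{\pi,n}\epsilon_{\mu,n}$ term by rewriting $\hat g-g$ around the ratio $\pi_0/\hat\pi_0-1$. Finally, for the rotation-averaged criterion, each rotation satisfies the bound conditionally on its own fitting fold with probability $1-\eta$. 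Integrating out the conditioning gives the unconditional probability $1-\eta$ per rotation, and a union bound over the three rotations, together with the triangle inequality for the average, gives probability $1-3\eta$ with averaged nuisance remainders.
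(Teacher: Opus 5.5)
Your proposal is correct and follows the paper's proof essentially step for step. It conditions on $\mathcal F_{\mathrm{fit}}$ and proves uniform concentration of $\widehat D_\theta^{\mathrm{cross}}$ and $\hat s_\theta^2$ around $\widetilde D_\theta,\widetilde s_\theta^2$ using an $m^{-1/2}$-net, McDiarmid and a Lipschitz extension. It then uses the doubly robust product bound $\|\widetilde\Delta_\theta-\Delta_\theta\|\le\epsilon_{\pi,n}\epsilon_{\mu,n}$, the variance perturbation $C\{\epsilon_{\pi,n}+\epsilon_{\mu,n}^2\}$ from the exact cancellation of the linear outcome term, a Lipschitz ratio bound using $\lambda\ge\lambda_0$, and a union bound over the three rotations.

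The paper spells out two details that you leave implicit:
\begin{itemize}
\item McDiarmid centres $\hat s_\theta^2$ at its conditional mean. That mean differs from $\widetilde s_\theta^2$ by an $O(m^{-1})$ bias, which must be added before extending from the net.
\item The outcome cross-term cancellation needs two orthogonalities. One is $\E_0[\varphi_\theta(Y)-\mu_\theta(A,X)\mid A,X]=0$. The other is that the outcome error, together with its integral correction, has conditional mean zero given $X$, so it is orthogonal to the $X$-measurable part of the oracle score.
\end{itemize}
Your fallback of absorbing a non-cancelling outcome term into $\epsilon_{\pi,n}\epsilon_{\mu,n}$ would not work, because that term involves no propensity error. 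It is not needed, since the cancellation is exact.
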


The uniform bound separates sampling error from nuisance-estimation error. The MMD numerator retains the doubly robust product remainder $\epsilon_{\pi,n}\epsilon_{\mu,n}$. Estimating $s_\theta^2$ introduces additional terms $\epsilon_{\pi,n}$ and $\epsilon_{\mu,n}^2$. For fixed class constants and confidence level, the sampling term is $O(\sqrt{\log m/m})$, which is asymptotically faster than the $O(m^{-1/3})$ of \citet[Theorem~1]{gretton2012optimal}, \citet[Theorem~11]{liu2021learningdeepkernelsnonparametric}.

\Cref{thm:feasible-criterion-uniform} controls $J_{\lambda,\theta}$ uniformly over the full parameter class $\Theta$ for $\lambda$ bounded away from zero. On subclasses where the variance is bounded below,
\[
\Theta_s:=\{\theta\in\Theta:s_\theta^2\ge s^2\},
\]
the results in \Cref{appendix:theory-proofs} allow $\lambda_m\to0$ and yield uniform approximation to the unregularised criterion $J_\theta$. The comparison with earlier rates follows the proof of \Cref{thm:oracle-adaptation}.

Since each kernel member is trained separately, we apply these results setting $M=1$, $\Theta=\Xi$ and $d_\Theta=d_0$, under the corresponding assumptions. With $M>1$, Proposition~\ref{prop:memberwise-ensemble} gives $J_{\lambda,\theta}\ge\min_b J_{\lambda,\vartheta_b}$ for $\lambda\ge0$ with positive denominators. Conditional on the common learning and validation data and shared training randomness, independent and identically distributed member training reduces the training-randomness variance of the population discrepancy by a factor of $1/M$. Proposition~\ref{prop:memberwise-ensemble} establishes these population guarantees. \Cref{sec:ablation-ensemble-size} studies the effect of ensemble size on test power. \Cref{sec:ablation-joint-maximisation} analyses the effect of jointly maximising the averaged criterion.

\section{Experimental Evaluation}\label{chapter:experiments}

We evaluate our doubly robust deep kernel test, DR-KTE-Deep, on image, text and graph outcomes. Evaluations on low-dimensional outcomes are included in \Cref{appendix:additional-experiments}. We compare our method with DR-KTE \citep{martinez2023}, which uses a fixed Gaussian kernel, and DR-KTE-Opt, which optimises the lengthscale of a Gaussian kernel. Experimental details are given in \Cref{appendix:experimental-details}. \Cref{sec:ablation-linear-kernel} additionally compares our method to linear kernels. All methods are fitted and evaluated using the same total sample size. Our code is available at \url{https://www.github.com/robinlmn/deep-kte}.

We learn the deep kernel and nuisance parameters on $\mathcal{D}_{\text{learn}}$, sampled from the dataset's training pool, and select the kernel on $\mathcal{D}_{\text{val}}$, sampled from the validation pool. We then refit the nuisances on $\mathcal{D}_{\text{learn}} \cup \mathcal{D}_{\text{val}}$ and estimate the rejection rate at $\alpha=0.05$ over 200 test datasets $\mathcal{D}_{\text{test}}$, sampled from the testing pool. We repeat this process over 20 independent outer repetitions and report the mean rejection rate $\pm$ one standard error across the outer repetitions.

\paragraph{Image Outcomes.}\label{sec:image-outcomes} We first evaluate our method on mean-preserving distributional effects on medical-image outcomes. We extend the OCTMNIST construction of \citet{zenati2026semiparametric}. Potential outcomes are constructed as $Y_i^a=B_i+R_i(a)$ where $B_i$ is sampled from normal or healthy MedMNIST images \citep{medmnistv2} and $R_i(a)$ is a medically motivated perturbation representing a localised lesion, scattered deposits, or morphological changes. We scale the perturbation by an effect strength $\delta$. The residual construction is mean-matched, $\mathbb E[Y_i^1-Y_i^0\mid X_i]=0$, such that the effect is mainly distributional under the alternative. We consider ten constructions, three of which are illustrated in \Cref{fig:medical-image-headline}. \Cref{appendix:medical-image-details} details the full generative process.

DR-KTE-Deep achieves substantially higher estimated power than the baselines on all constructions (\Cref{tab:medmnist-power}), while DR-KTE-Opt only slightly improves power over the fixed kernel. Under $H_0$, rejection rates across the three methods remain close to the nominal level of $\alpha=0.05$, with only a slight inflation for OCT-DME. In \Cref{fig:oct-drusen-sweep}, the power of the deep kernel grows with sample size and effect strength, while Gaussian baselines remain near the nominal rejection rate across the sample-size sweep.

\begin{table}[!htbp]
    \centering
    \small
    \setlength{\tabcolsep}{4pt}
    \begin{tabular}{@{}lccc@{}}
    \toprule
    Scenario & DR-KTE & DR-KTE-Opt & DR-KTE-Deep \\
    \midrule
    Breast-Tumor & $0.749\pmerr{0.018}/0.053\pmerr{0.003}$ & $0.761\pmerr{0.017}/0.051\pmerr{0.003}$ & $\mathbf{0.989\pmerr{0.002}}/0.054\pmerr{0.003}$ \\
    Chest-Cardiomegaly & $0.156\pmerr{0.007}/0.050\pmerr{0.003}$ & $0.158\pmerr{0.008}/0.050\pmerr{0.003}$ & $\mathbf{0.928\pmerr{0.046}}/0.051\pmerr{0.004}$ \\
    Chest-Effusion & $0.231\pmerr{0.009}/0.053\pmerr{0.004}$ & $0.245\pmerr{0.011}/0.053\pmerr{0.004}$ & $\mathbf{0.999\pmerr{0.000}}/0.048\pmerr{0.004}$ \\
    Chest-Mass & $0.436\pmerr{0.018}/0.050\pmerr{0.003}$ & $0.461\pmerr{0.019}/0.050\pmerr{0.003}$ & $\mathbf{1.000\pmerr{0.000}}/0.056\pmerr{0.004}$ \\
    Derma-Melanoma & $0.171\pmerr{0.009}/0.051\pmerr{0.002}$ & $0.172\pmerr{0.009}/0.050\pmerr{0.002}$ & $\mathbf{0.898\pmerr{0.040}}/0.053\pmerr{0.004}$ \\
    OCT-CNV & $0.907\pmerr{0.014}/0.052\pmerr{0.003}$ & $0.917\pmerr{0.014}/0.051\pmerr{0.003}$ & $\mathbf{1.000\pmerr{0.000}}/0.053\pmerr{0.003}$ \\
    OCT-DME & $0.139\pmerr{0.008}/0.055\pmerr{0.004}$ & $0.144\pmerr{0.008}/0.055\pmerr{0.004}$ & $\mathbf{1.000\pmerr{0.000}}/0.063\pmerr{0.005}$ \\
    OCT-Drusen & $0.051\pmerr{0.003}/0.058\pmerr{0.004}$ & $0.052\pmerr{0.003}/0.058\pmerr{0.004}$ & $\mathbf{0.371\pmerr{0.080}}/0.058\pmerr{0.005}$ \\
    Pneumonia & $0.857\pmerr{0.021}/0.055\pmerr{0.003}$ & $0.871\pmerr{0.020}/0.056\pmerr{0.003}$ & $\mathbf{0.998\pmerr{0.001}}/0.053\pmerr{0.004}$ \\
    Retina-Severe & $0.285\pmerr{0.015}/0.050\pmerr{0.003}$ & $0.288\pmerr{0.015}/0.049\pmerr{0.003}$ & $\mathbf{0.959\pmerr{0.008}}/0.052\pmerr{0.003}$ \\
    \bottomrule
    \end{tabular}
    \caption{Power ($H_1$) and Type-I error ($H_0$) for medical-image outcomes at $N=1500$ and $\alpha = 0.05$. Each cell reports $H_1/H_0$, with means $\pm$ one standard error across 20 independent repetitions.}
    \label{tab:medmnist-power}
    \label{tab:medmnist-type1}
\end{table}

\begin{figure}[!htb]
    \centering
    \includegraphics[width=\linewidth]{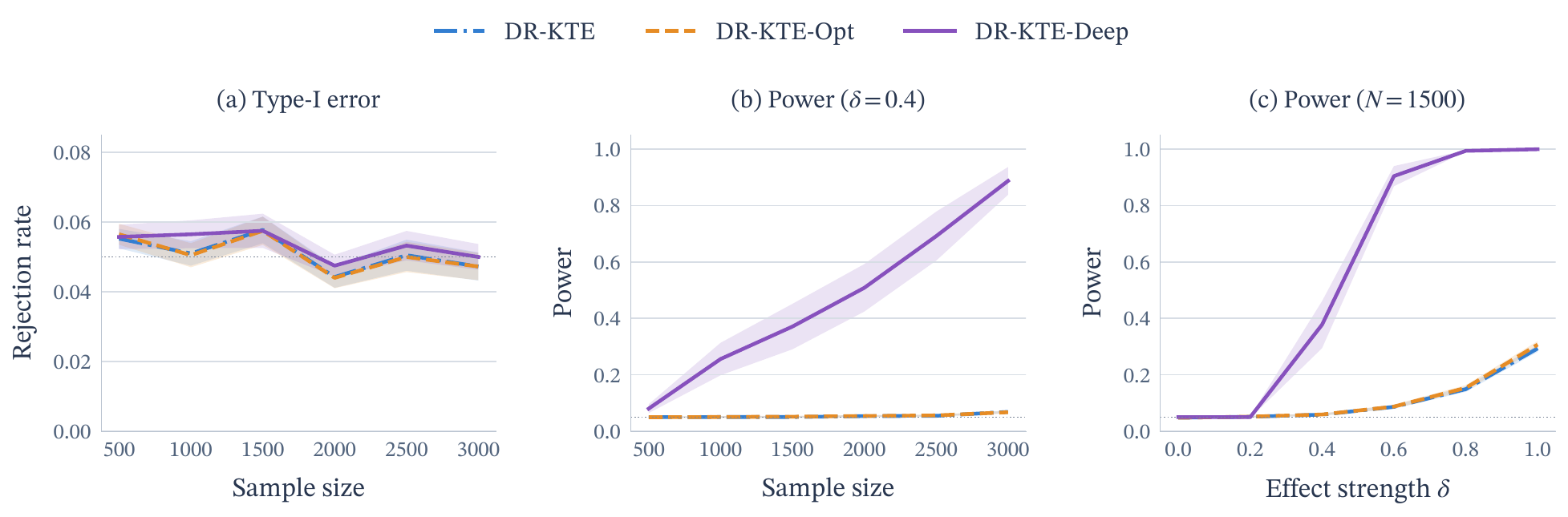}
    \caption{Results for OCT-Drusen. (a) Type-I error under $H_0$, (b) average power when varying $N$ while keeping $\delta=0.4$ fixed, (c) average power when varying $\delta$ while keeping $N=1500$ fixed. Shaded bands show $\pm$ one standard error across 20 independent repetitions. The dotted grey line marks $\alpha=0.05$.}
    \label{fig:oct-drusen-sweep}
\end{figure}

\FloatBarrier
\paragraph{Text Outcomes.}
\label{sec:text-outcome-experiments}

We now evaluate our method on text outcomes, using a language model to generate synthetic data under the assumed causal structure \citep{bynum-cho-2025-language}. Covariates and treatments generate a scalar pseudo-outcome, discretised into one of three tones used to condition the language model. We consider three scenarios: Oncology Notes, Movie Reviews and Clinical Notes. The generated text is represented as a sequence of tokens and used as input to each kernel method. Construction details are given in \Cref{appendix:text-constructions}. \Cref{appendix:text-representation-ablations} additionally compares each method through an alternative text embedding outcome representation.

\begin{table}[!htbp]
    \centering
    \small
    \setlength{\tabcolsep}{4pt}
    \begin{tabular}{@{}lccc@{}}
    \toprule
    Scenario & DR-KTE & DR-KTE-Opt & DR-KTE-Deep \\
    \midrule
    Oncology Notes & $0.371\pmerr{0.015}/0.048\pmerr{0.003}$ & $0.371\pmerr{0.015}/0.048\pmerr{0.003}$ & $\mathbf{0.809\pmerr{0.023}}/0.055\pmerr{0.003}$ \\
    Movie Reviews & $0.255\pmerr{0.012}/0.050\pmerr{0.003}$ & $0.254\pmerr{0.012}/0.050\pmerr{0.003}$ & $\mathbf{0.449\pmerr{0.026}}/0.053\pmerr{0.004}$ \\
    Clinical Notes & $0.058\pmerr{0.004}/0.052\pmerr{0.003}$ & $0.058\pmerr{0.004}/0.051\pmerr{0.003}$ & $\mathbf{0.939\pmerr{0.013}}/0.058\pmerr{0.004}$ \\
    \bottomrule
    \end{tabular}
    \caption{Power ($H_1$) and Type-I error ($H_0$) for text outcomes, represented as a sequence of tokens, at $N=1500$ and $\alpha=0.05$. Each cell reports $H_1/H_0$, with means $\pm$ one standard error across 20 independent repetitions.}
    \label{tab:text-results}
\end{table}

DR-KTE-Deep achieves the highest estimated power ({\Cref{tab:text-results}), with substantial gains on Oncology Notes and Clinical Notes and smaller improvements on Movie Reviews. Under $H_0$, average rejection rates are close to the nominal level of $\alpha=0.05$. In \Cref{fig:movie-reviews-sweep}, we vary the sample size $N$ and a treatment-effect strength $\delta$. DR-KTE-Deep maintains higher power across all sample sizes, with larger gains as $\delta$ increases.

\begin{figure}[!htb]
    \centering
    \includegraphics[width=\linewidth]{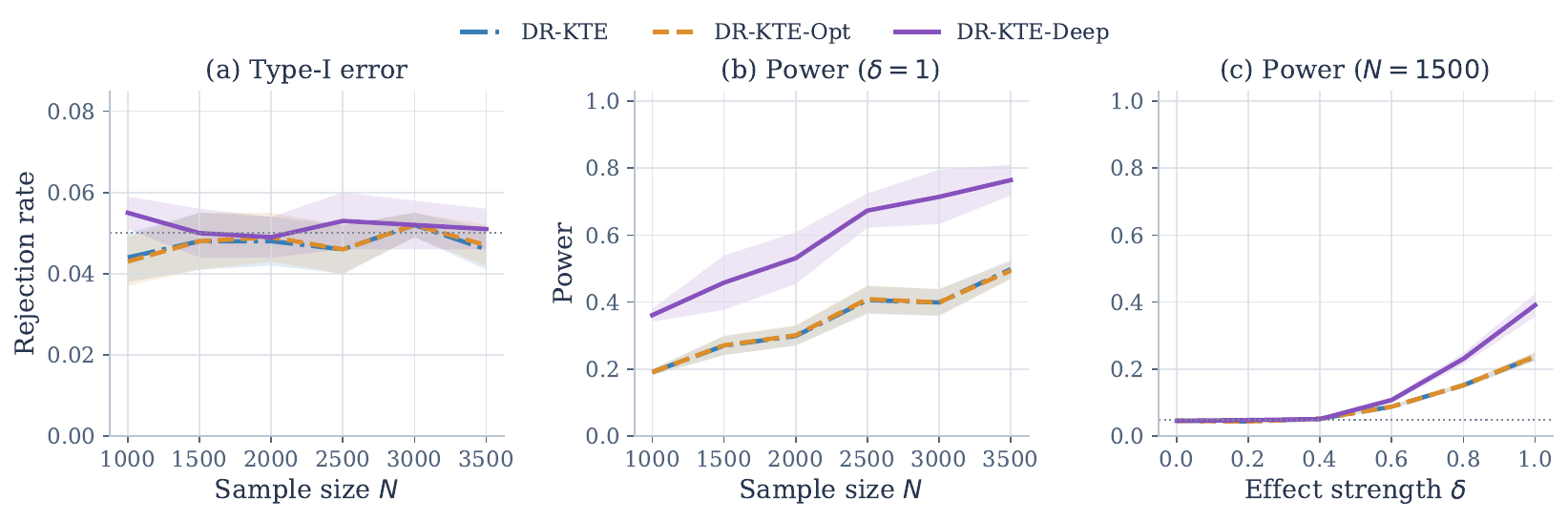}
    \caption{Results for Movie Reviews. (a) Type-I error under $H_0$, (b) average test power when varying $N$ while keeping $\delta=1.0$ fixed, (c) average test power when varying $\delta$ while keeping $N=1500$ fixed. Shaded bands show $\pm$ one standard error across 20 independent repetitions. The dotted grey line marks $\alpha=0.05$.}
    \label{fig:movie-reviews-sweep}
\end{figure}

\paragraph{Graph Outcomes.}
\label{sec:graph-outcome-experiments}

We consider two experiments in which the outcome is a labelled graph. For LLM Code Repairs, we simulate the effect of using a large language model (LLM) to fix a buggy Python function represented as its abstract syntax tree (AST). For Contact Networks, we simulate the effect of social distancing on the spread of an infection using contact networks sampled from the Infectious SocioPatterns dataset \citep{isella2011crowd}. The graph constructions and encodings are detailed in \Cref{appendix:graph-constructions}. In addition to Gaussian-based kernels, we evaluate Weisfeiler-Lehman (WL) subtree kernels \citep{shervashidze2011weisfeiler} in \Cref{sec:ablation-wl-kernel}.

DR-KTE-Deep achieves higher estimated average power on both LLM Code Repairs and Contact Networks, with smaller improvements on the latter. The Gaussian baselines already have high power on both constructions. Under $H_0$, rejection rates range from $0.048$ to $0.052$, close to the nominal level (\Cref{tab:graph-results}).

\begin{table}[!htbp]
    \centering
    \small
    \setlength{\tabcolsep}{4pt}
    \begin{tabular}{@{}lccc@{}}
    \toprule
    Scenario & DR-KTE & DR-KTE-Opt & DR-KTE-Deep \\
    \midrule
    LLM Code Repairs & $0.968\pmerr{0.008}/0.048\pmerr{0.004}$ & $0.963\pmerr{0.009}/0.048\pmerr{0.004}$ & $\mathbf{1.000\pmerr{0.000}}/0.051\pmerr{0.003}$ \\
    Contact Networks & $0.959\pmerr{0.006}/0.052\pmerr{0.003}$ & $0.958\pmerr{0.006}/0.052\pmerr{0.003}$ & $\mathbf{0.962\pmerr{0.006}}/0.051\pmerr{0.002}$ \\
    \bottomrule
    \end{tabular}
    \caption{Power ($H_1$) and Type-I error ($H_0$) for graph outcomes at $N=1500$. Each cell reports $H_1/H_0$, with means $\pm$ one standard error across 20 outer repetitions.}
    \label{tab:graph-results}
\end{table}

\FloatBarrier
\section{Discussion}
\label{chapter:conclusion}
In this paper, we introduced DR-KTE-Deep, a doubly robust test of interventional distributions. We learned a deep outcome kernel by maximising an estimated power criterion, allowing the kernel to adapt to the geometry of the outcome space. We established conditional asymptotic null validity after kernel learning and selection, and a uniform finite-sample bound for the regularised learning criterion. DR-KTE-Deep achieved substantially higher empirical power than fixed and bandwidth-optimised baselines, while maintaining Type-I error close to the nominal level. \Cref{appendix:additional-experiments} further motivates our choice of estimator and learning objective, and examines how outcome-model misspecification, outcome representation, ensemble size and outcome dimension affect the test. We use separate samples to learn the kernel and compute the test statistic. Future work could investigate whether reusing observations can further improve power while maintaining Type-I error control. We also restrict the test to two interventions, fixed in advance. Future work could extend our method to multiple candidate interventions.

\clearpage
\section*{AI Use Statement}
Language models were used to generate synthetic text outcomes for our experiments, as described in \Cref{appendix:text-constructions}. We also used generative AI for coding assistance, primarily for plotting and running scripts. We also used generative AI for proof strategies and checking theoretical results. Finally, we used generative AI to proofread and improve writing and clarity. The synthetic text datasets were manually checked by the authors for quality. LLM-generated code was verified and tested for correctness by the authors. We take responsibility for the final content of this work, including text, claims or artifacts produced with the aid of generative AI.
\bibliographystyle{iclr2027_conference}
\bibliography{references}

@article{saigo2004protein,
  title   = {Protein Homology Detection Using String Alignment Kernels},
  author  = {Saigo, Hiroto and Vert, Jean-Philippe and Ueda, Nobuhisa and Akutsu, Tatsuya},
  journal = {Bioinformatics},
  volume  = {20},
  number  = {11},
  pages   = {1682--1689},
  year    = {2004},
  doi     = {10.1093/bioinformatics/bth141}
}

@article{lodhi2002text,
  title   = {Text Classification Using String Kernels},
  author  = {Lodhi, Huma and Saunders, Craig and Shawe-Taylor, John and Cristianini, Nello and Watkins, Chris},
  journal = {Journal of Machine Learning Research},
  volume  = {2},
  pages   = {419--444},
  year    = {2002}
}

@inproceedings{reimers2019sentence,
  title     = {Sentence-{BERT}: Sentence Embeddings using {S}iamese {BERT}-Networks},
  author    = {Reimers, Nils and Gurevych, Iryna},
  booktitle = {Proceedings of the 2019 Conference on Empirical Methods in Natural Language Processing and the 9th International Joint Conference on Natural Language Processing},
  pages     = {3982--3992},
  year      = {2019},
  publisher = {Association for Computational Linguistics},
  doi       = {10.18653/v1/D19-1410}
}

@inproceedings{
zenati2026kernel,
title={Kernel Treatment Effects with Adaptively Collected Data},
author={Houssam Zenati and Bariscan Bozkurt and Arthur Gretton},
booktitle={The 29th International Conference on Artificial Intelligence and Statistics},
year={2026},
url={https://openreview.net/forum?id=di2b5POqcE}
}

@inproceedings{naf2026causaldrf,
  title     = {Causal-DRF: Conditional Kernel Treatment Effect Estimation using Distributional Random Forest},
  author    = {N{\"a}f, Jeffrey and Park, Junhyung and Susmann, Herbert},
  booktitle = {Proceedings of the 29th International Conference on Artificial Intelligence and Statistics},
  series    = {Proceedings of Machine Learning Research},
  volume    = {300},
  pages     = {1000--1008},
  year      = {2026},
  publisher = {PMLR}
}

@article{SriGreFukLanetal10,
  author  = {B. Sriperumbudur and A. Gretton and K. Fukumizu and
             B. Sch{\"o}lkopf and G. Lanckriet},
  journal = {Journal of Machine Learning Research},
  pages   = {1517-1561},
  title   = {{Hilbert} Space Embeddings and Metrics on Probability
             Measures},
  volume  = 11,
  year    = 2010
}

@book{imbens_rubin_2015,
  place     = {Cambridge},
  title     = {Causal Inference for Statistics, Social, and Biomedical Sciences: An Introduction},
  publisher = {Cambridge University Press},
  author    = {Imbens, Guido W. and Rubin, Donald B.},
  year      = {2015}
}

@inproceedings{martinez2023,
  author    = {Martinez-Taboada, Diego and Ramdas, Aaditya and Kennedy, Edward H.},
  booktitle = {Advances in Neural Information Processing Systems},
  pages     = {59924--59952},
  url       = {https://proceedings.neurips.cc/paper_files/paper/2023/hash/bccdd196d798a51a4961989984a9ed4a-Abstract-Conference.html},
  title     = {An Efficient Doubly-Robust Test for the Kernel Treatment Effect},
  volume    = {36},
  year      = {2023}
}

@article{fawkes2024doubly,
  title   = {Doubly Robust Kernel Statistics for Testing Distributional Treatment Effects},
  author  = {Jake Fawkes and Robert Hu and Robin J. Evans and Dino Sejdinovic},
  journal = {Transactions on Machine Learning Research},
  year    = {2024}
}

@article{luedtke2024,
  author  = {Alex Luedtke and Incheoul Chung},
  title   = {{One-step estimation of differentiable Hilbert-valued parameters}},
  volume  = {52},
  journal = {The Annals of Statistics},
  number  = {4},
  pages   = {1534 -- 1563},
  year    = {2024}
}

@inproceedings{jitkrittum2016,
 author = {Jitkrittum, Wittawat and Szab\'{o}, Zolt\'{a}n and Chwialkowski, Kacper P and Gretton, Arthur},
 booktitle = {Advances in Neural Information Processing Systems},
 editor = {D. Lee and M. Sugiyama and U. Luxburg and I. Guyon and R. Garnett},
 pages = {},
 publisher = {Curran Associates, Inc.},
 title = {Interpretable Distribution Features with Maximum Testing Power},
 volume = {29},
 year = {2016}
}

@inproceedings{jitkrittum2017,
 author = {Jitkrittum, Wittawat and Xu, Wenkai and Szabo, Zoltan and Fukumizu, Kenji and Gretton, Arthur},
 booktitle = {Advances in Neural Information Processing Systems},
 editor = {I. Guyon and U. Von Luxburg and S. Bengio and H. Wallach and R. Fergus and S. Vishwanathan and R. Garnett},
 pages = {},
 publisher = {Curran Associates, Inc.},
 title = {A Linear-Time Kernel Goodness-of-Fit Test},
 volume = {30},
 year = {2017}
}

@InProceedings{zenati25a,
  title = 	 {Double Debiased Machine Learning for Mediation Analysis with Continuous Treatments},
  author =       {Zenati, Houssam and Ab{\'e}cassis, Judith and Josse, Julie and Thirion, Bertrand},
  booktitle = 	 {International Conference on Artificial Intelligence and Statistics},
  pages = 	 {4150--4158},
  year = 	 {2025},
  volume = 	 {258},
}

@article{bhattacharjee2025doubly,
  title   = {Doubly Robust Estimation of Causal Effects for Random Object Outcomes with Continuous Treatments},
  author  = {Bhattacharjee, Satarupa and Li, Bing and Wu, Xiao and Xue, Lingzhou},
  journal = {arXiv preprint arXiv:2506.22754},
  year    = {2025}
}

@inproceedings{bozkurt2025densityratiofree,
 author = {Bozkurt, Bariscan and Zenati, Houssam and Meunier, Dimitri and Xu, Liyuan and Gretton, Arthur},
 booktitle = {Advances in Neural Information Processing Systems},
 title = {Density Ratio-Free Doubly Robust Proxy Causal Learning},
 year = {2025}
}

@misc{bozkurt2026neural,
      title={Doubly Robust Proxy Causal Learning with Neural Mean Embeddings}, 
      author={Bariscan Bozkurt and Alexandre Galashov and Dimitri Meunier and Zikai Shen and Arthur Gretton and Houssam Zenati},
      year={2026},
      eprint={2605.09514},
      url={https://arxiv.org/abs/2605.09514}, 
}

@inproceedings{kim2026continuous,
  title = {{Estimating Continuous Treatment Effects with Two-Stage Kernel Ridge Regression}},
  author = {Kim, Seok-Jin and Wang, Kaizheng},
  booktitle = {International Conference on Machine Learning},
  year = {2026},
}

@article{Robins1986,
  author  = {Robins, JM},
  title   = {A new approach to causal inference in mortality studies
             with a sustained exposure period - application to control of the healthy worker
             survivor effect},
  journal = {Mathematical Modelling},
  year    = {1986},
  volume  = {7},
  pages   = {1393--1512}
}

@article{kim2024dimension,
  title     = {Dimension-agnostic inference using cross {U}-statistics},
  author    = {Kim, Ilmun and Ramdas, Aaditya},
  journal   = {Bernoulli},
  volume    = {30},
  number    = {1},
  pages     = {683--711},
  year      = {2024},
  publisher = {Bernoulli Society for Mathematical Statistics and Probability}
}

@techreport{Colangelo2020,
  author = {Kyle Colangelo and Ying-Ying Lee},
  title  = {{Double Debiased Machine Learning Nonparametric Inference with Continuous Treatments}},
  year   = 2020,
  url    = {https://arxiv.org/pdf/2004.03036}
}

@article{singh2024kernel,
  title     = {Kernel methods for causal functions: dose, heterogeneous and incremental response curves},
  author    = {Singh, Rahul and Xu, Liyuan and Gretton, Arthur},
  journal   = {Biometrika},
  volume    = {111},
  number    = {2},
  pages     = {497--516},
  year      = {2024},
  publisher = {Oxford University Press}
}

@inproceedings{smola2007hilbert,
  title        = {A {H}ilbert space embedding for distributions},
  author       = {Smola, Alex and Gretton, Arthur and Song, Le and Sch{\"o}lkopf, Bernhard},
  booktitle    = {International conference on algorithmic learning theory},
  pages        = {13--31},
  year         = {2007},
  organization = {Springer}
}

@article{rothe2010nonparametric,
  title   = {Nonparametric estimation of distributional policy effects},
  author  = {Rothe, Christoph},
  journal = {Journal of Econometrics},
  volume  = {155},
  number  = {1},
  pages   = {56--70},
  year    = {2010}
}

@article{chernozhukov2013inference,
  title   = {Inference on counterfactual distributions},
  author  = {Chernozhukov, Victor and Fern{\'a}ndez-Val, Iv{\'a}n and Melly, Blaise},
  journal = {Econometrica},
  volume  = {81},
  number  = {6},
  pages   = {2205--2268},
  year    = {2013}
}

@article{muandet2021counterfactual,
  title   = {Counterfactual mean embeddings},
  author  = {Muandet, Krikamol and Kanagawa, Motonobu and Saengkyongam, Sorawit and Marukatat, Sanparith},
  journal = {Journal of Machine Learning Research},
  volume  = {22},
  number  = {162},
  pages   = {1--71},
  year    = {2021}
}

@article{gretton2012kernel,
  title   = {A kernel two-sample test},
  author  = {Gretton, Arthur and Borgwardt, Karsten M and Rasch, Malte J and Sch{\"o}lkopf, Bernhard and Smola, Alexander},
  journal = {The Journal of Machine Learning Research},
  volume  = {13},
  number  = {25},
  pages   = {723--773},
  year    = {2012}
}

@inproceedings{park2021conditional,
  title     = {Conditional distributional treatment effect with kernel conditional mean embeddings and {U}-statistic regression},
  author    = {Park, Junhyung and Shalit, Uri and Sch{\"o}lkopf, Bernhard and Muandet, Krikamol},
  booktitle = {Proceedings of the 38th International Conference on Machine Learning},
  series    = {Proceedings of Machine Learning Research},
  volume    = {139},
  pages     = {8401--8412},
  year      = {2021},
  url       = {https://proceedings.mlr.press/v139/park21c.html}
}

@article{gartner2003survey,
  title   = {A survey of kernels for structured data},
  author  = {G{\"a}rtner, Thomas},
  journal = {ACM SIGKDD explorations newsletter},
  volume  = {5},
  number  = {1},
  pages   = {49--58},
  year    = {2003}
}

@misc{gretton2013introduction,
  author = {Gretton, Arthur},
  title = {Introduction to {RKHS}, and Some Simple Kernel Algorithms},
  year = {2019},
  howpublished = {Lecture notes, University College London},
  url = {https://www.gatsby.ucl.ac.uk/~gretton/coursefiles/lecture4_introToRKHS.pdf}
}

@inproceedings{song2009hilbert,
  title     = {Hilbert space embeddings of conditional distributions with applications to dynamical systems},
  author    = {Song, Le and Huang, Jonathan and Smola, Alex and Fukumizu, Kenji},
  booktitle = {Proceedings of the 26th Annual International Conference on Machine Learning},
  pages     = {961--968},
  year      = {2009}
}

@article{borgwardt2006integrating,
  title     = {Integrating structured biological data by kernel maximum mean discrepancy},
  author    = {Borgwardt, Karsten M and Gretton, Arthur and Rasch, Malte J and Kriegel, Hans-Peter and Sch{\"o}lkopf, Bernhard and Smola, Alex J},
  journal   = {Bioinformatics},
  volume    = {22},
  number    = {14},
  pages     = {e49--e57},
  year      = {2006},
  publisher = {Oxford University Press}
}

@article{simon2018kernel,
  title   = {Kernel distribution embeddings: Universal kernels, characteristic kernels and kernel metrics on distributions},
  author  = {Simon-Gabriel, Carl-Johann and Sch{\"o}lkopf, Bernhard},
  journal = {Journal of Machine Learning Research},
  volume  = {19},
  number  = {44},
  pages   = {1--29},
  year    = {2018}
}

@article{bang2005doubly,
  title   = {Doubly robust estimation in missing data and causal inference models},
  author  = {Bang, H. and Robins, J. M.},
  journal = {Biometrics},
  volume  = {61},
  number  = {4},
  pages   = {962--973},
  year    = {2005}
}

@inproceedings{grunewalder2012conditional,
  title     = {Conditional mean embeddings as regressors},
  author    = {Gr{\"u}new{\"a}lder, Steffen and Lever, Guy and Baldassarre, Luca and Patterson, Sam and Gretton, Arthur and Pontil, Massimiliano},
  booktitle = {International Conference on Machine Learning},
  year      = {2012}
}

@inproceedings{zenati2025cpme,
  author    = {Zenati, Houssam and Bozkurt, Bariscan and Gretton, Arthur},
  title     = {Doubly-Robust Estimation of Counterfactual Policy Mean Embeddings},
  booktitle = {Advances in Neural Information Processing Systems},
  volume    = {38},
  year      = {2025},
  url       = {https://proceedings.neurips.cc/paper_files/paper/2025/hash/49a77f5557668b87a568c2d9468d4d6b-Abstract-Conference.html}
}

@misc{zenati2026semiparametric,
  title         = {Semiparametric Efficient Test for Interpretable Distributional Treatment Effects},
  author        = {Houssam Zenati and Arthur Gretton},
  year          = {2026},
  eprint        = {2605.08034},
  archiveprefix = {arXiv},
  primaryclass  = {stat.ML},
  url           = {https://arxiv.org/abs/2605.08034}
}

@inproceedings{gretton2012optimal,
  author    = {Gretton, Arthur and Sejdinovic, Dino and Strathmann, Heiko and Balakrishnan, Sivaraman and Pontil, Massimiliano and Fukumizu, Kenji and Sriperumbudur, Bharath K.},
  title     = {Optimal Kernel Choice for Large-Scale Two-Sample Tests},
  booktitle = {Advances in Neural Information Processing Systems},
  volume    = {25},
  pages     = {1205--1213},
  year      = {2012},
  url       = {https://proceedings.neurips.cc/paper/2012/hash/dbe272bab69f8e13f14b405e038deb64-Abstract.html}
}

@inproceedings{liu2021learningdeepkernelsnonparametric,
  author    = {Liu, Feng and Xu, Wenkai and Lu, Jie and Zhang, Guangquan and Gretton, Arthur and Sutherland, Danica J.},
  title     = {Learning Deep Kernels for Non-Parametric Two-Sample Tests},
  booktitle = {Proceedings of the 37th International Conference on Machine Learning},
  series    = {Proceedings of Machine Learning Research},
  volume    = {119},
  pages     = {6316--6326},
  year      = {2020},
  url       = {https://proceedings.mlr.press/v119/liu20m.html}
}

@article{rubin2005causal,
  title   = {Causal Inference Using Potential Outcomes: Design, Modeling, Decisions},
  author  = {Rubin, Donald B.},
  journal = {Journal of the American Statistical Association},
  volume  = {100},
  number  = {469},
  pages   = {322--331},
  year    = {2005},
  doi     = {10.1198/016214504000001880}
}

@book{hernan2020causal,
  title     = {Causal Inference: What If},
  author    = {Hern{\'a}n, Miguel A. and Robins, James M.},
  publisher = {Chapman \& Hall/CRC},
  address   = {Boca Raton},
  year      = {2020}
}

@article{abadie2002instrumental,
  title   = {Instrumental Variables Estimates of the Effect of Subsidized Training on the Quantiles of Trainee Earnings},
  author  = {Abadie, Alberto and Angrist, Joshua and Imbens, Guido},
  journal = {Econometrica},
  volume  = {70},
  number  = {1},
  pages   = {91--117},
  year    = {2002},
  doi     = {10.1111/1468-0262.00270}
}

@article{chernozhukov2005ivqte,
  title   = {An {IV} Model of Quantile Treatment Effects},
  author  = {Chernozhukov, Victor and Hansen, Christian},
  journal = {Econometrica},
  volume  = {73},
  number  = {1},
  pages   = {245--261},
  year    = {2005},
  doi     = {10.1111/j.1468-0262.2005.00570.x}
}

@article{firpo2007efficient,
  title   = {Efficient Semiparametric Estimation of Quantile Treatment Effects},
  author  = {Firpo, Sergio},
  journal = {Econometrica},
  volume  = {75},
  number  = {1},
  pages   = {259--276},
  year    = {2007},
  doi     = {10.1111/j.1468-0262.2007.00738.x}
}

@inproceedings{wilson2016deep,
  title     = {Deep Kernel Learning},
  author    = {Wilson, Andrew Gordon and Hu, Zhiting and Salakhutdinov, Ruslan and Xing, Eric P.},
  booktitle = {Proceedings of the 19th International Conference on Artificial Intelligence and Statistics (AISTATS)},
  pages     = {370--378},
  year      = {2016}
}

@inproceedings{sutherland2017generative,
  title     = {Generative Models and Model Criticism via Optimized Maximum Mean Discrepancy},
  author    = {Sutherland, Danica J. and Tung, Hsiao-Yu and Strathmann, Heiko and De, Soumyajit and Ramdas, Aaditya and Smola, Alex and Gretton, Arthur},
  booktitle = {International Conference on Learning Representations (ICLR)},
  year      = {2017},
  url       = {https://openreview.net/forum?id=HJWHIKqgl}
}

@article{lecun1998gradient,
  title   = {Gradient-based learning applied to document recognition},
  author  = {LeCun, Yann and Bottou, L{\'e}on and Bengio, Yoshua and Haffner, Patrick},
  journal = {Proceedings of the IEEE},
  volume  = {86},
  number  = {11},
  pages   = {2278--2324},
  year    = {1998},
  doi     = {10.1109/5.726791}
}

@article{medmnistv2,
  title     = {{MedMNIST} v2---A large-scale lightweight benchmark for {2D} and {3D} biomedical image classification},
  author    = {Yang, Jiancheng and Shi, Rui and Wei, Donglai and Liu, Zequan and Zhao, Lin and Ke, Bilian and Pfister, Hanspeter and Ni, Bingbing},
  journal   = {Scientific Data},
  volume    = {10},
  number    = {1},
  pages     = {41},
  year      = {2023},
  publisher = {Nature Publishing Group UK London}
}

@inproceedings{medmnistv1,
  title     = {{MedMNIST} Classification Decathlon: A Lightweight {AutoML} Benchmark for Medical Image Analysis},
  author    = {Yang, Jiancheng and Shi, Rui and Ni, Bingbing},
  booktitle = {IEEE 18th International Symposium on Biomedical Imaging (ISBI)},
  pages     = {191--195},
  year      = {2021}
}

@article{breastmnist,
  title   = {Dataset of breast ultrasound images},
  author  = {Walid Al-Dhabyani and Mohammed Gomaa and Hussien Khaled and Aly Fahmy},
  journal = {Data in Brief},
  volume  = {28},
  pages   = {104863},
  year    = {2020}
}

@article{retinamnist,
  author    = {Ruhan Liu and Xiangning Wang and Qiang Wu and Ling Dai and Xi Fang and Tao Yan and Jaemin Son and Shiqi Tang and Jiang Li and Zijian Gao and Adrian Galdran and J. M. Poorneshwaran and Hao Liu and Jie Wang and Yerui Chen and Prasanna Porwal and Gavin Siew {Wei Tan} and Xiaokang Yang and Chao Dai and Haitao Song and Mingang Chen and Huating Li and Weiping Jia and Dinggang Shen and Bin Sheng and Ping Zhang},
  title     = {{DeepDRiD}: Diabetic Retinopathy-Grading and Image Quality Estimation Challenge},
  journal   = {Patterns},
  volume    = {3},
  number    = {6},
  pages     = {100512},
  year      = {2022},
  doi       = {10.1016/j.patter.2022.100512},
  publisher = {Elsevier}
}

@inproceedings{shekhar2022permutation,
  title     = {A Permutation-Free Kernel Two-Sample Test},
  author    = {Shekhar, Shubhanshu and Kim, Ilmun and Ramdas, Aaditya},
  booktitle = {Advances in Neural Information Processing Systems},
  volume    = {35},
  pages     = {18168--18180},
  year      = {2022},
  url       = {https://proceedings.neurips.cc/paper_files/paper/2022/hash/731b952bdc833485eb72f458cdd5c489-Abstract-Conference.html}
}

@article{shervashidze2011weisfeiler,
  title   = {{Weisfeiler--Lehman} Graph Kernels},
  author  = {Shervashidze, Nino and Schweitzer, Pascal and van Leeuwen, Erik Jan and Mehlhorn, Kurt and Borgwardt, Karsten M.},
  journal = {Journal of Machine Learning Research},
  volume  = {12},
  pages   = {2539--2561},
  year    = {2011}
}

@inproceedings{ehsani2025hierarchical,
  author    = {Ramtin Ehsani and Esteban Parra and Sonia Haiduc and Preetha Chatterjee},
  title     = {Hierarchical Knowledge Injection for Improving {LLM}-based Program Repair},
  booktitle = {Proceedings of the 40th IEEE/ACM International Conference on Automated Software Engineering (ASE)},
  year      = {2025},
  pages     = {1440--1452},
  publisher = {IEEE},
  doi       = {10.1109/ASE63991.2025.00122}
}

@inproceedings{widyasari2020bugsinpy,
  author    = {Ratnadira Widyasari and Sheng Qin Sim and Camellia Lok
               and Haodi Qi and Jack Phan and Qijin Tay and Constance Tan
               and Fiona Wee and Jodie Ethelda Tan and Yuheng Yieh
               and Brian Goh and Ferdian Thung and Hong Jin Kang
               and Thong Hoang and David Lo and Eng Lieh Ouh},
  title     = {{BugsInPy}: A Database of Existing Bugs in {Python} Programs
               to Enable Controlled Testing and Debugging Studies},
  booktitle = {Proceedings of the 28th ACM Joint Meeting on European
               Software Engineering Conference and Symposium on the
               Foundations of Software Engineering},
  series    = {ESEC/FSE 2020},
  year      = {2020},
  pages     = {1556--1560},
  publisher = {Association for Computing Machinery},
  doi       = {10.1145/3368089.3417943}
}

@article{isella2011crowd,
  author  = {Lorenzo Isella and Juliette Stehl{\'e} and Alain Barrat
             and Ciro Cattuto and Jean-Fran{\c{c}}ois Pinton
             and Wouter {Van den Broeck}},
  title   = {What's in a Crowd? {Analysis} of Face-to-Face Behavioral Networks},
  journal = {Journal of Theoretical Biology},
  volume  = {271},
  number  = {1},
  pages   = {166--180},
  year    = {2011},
  doi     = {10.1016/j.jtbi.2010.11.033}
}

@inproceedings{pang2005seeing,
  author    = {Bo Pang and Lillian Lee},
  title     = {Seeing Stars: Exploiting Class Relationships for Sentiment Categorization with Respect to Rating Scales},
  booktitle = {Proceedings of the 43rd Annual Meeting of the Association for Computational Linguistics},
  year      = {2005},
  pages     = {115--124},
  doi       = {10.3115/1219840.1219855},
  url       = {https://aclanthology.org/P05-1015/}
}

@misc{mtsamples,
  author       = {Harish Nair},
  title        = {{MTSamples}},
  year         = {2024},
  howpublished = {Hugging Face dataset},
  url          = {https://huggingface.co/datasets/harishnair04/mtsamples}
}

@inproceedings{bynum-cho-2025-language,
  author    = {Bynum, Lucius E.J. and Cho, Kyunghyun},
  title     = {Language Models as Causal Effect Generators},
  booktitle = {Proceedings of the 2025 Conference on Empirical Methods in Natural Language Processing},
  year      = {2025},
  pages     = {2096--2115},
  address   = {Suzhou, China},
  publisher = {Association for Computational Linguistics},
  doi       = {10.18653/v1/2025.emnlp-main.107},
  url       = {https://aclanthology.org/2025.emnlp-main.107/}
}

@inproceedings{durand2025causalresponse,
  author    = {Durand, Homer and Varando, Gherardo and Camps-Valls, Gustau},
  title     = {Learning Causal Response Representations through Direct Effect Analysis},
  booktitle = {Proceedings of the Forty-first Conference on Uncertainty in Artificial Intelligence},
  series    = {Proceedings of Machine Learning Research},
  volume    = {286},
  pages     = {1135--1166},
  year      = {2025},
  url       = {https://proceedings.mlr.press/v286/durand25a.html}
}

@inproceedings{jain2026conditional,
  author        = {Jain, Saksham and Luedtke, Alex},
  title         = {Conditional Distributional Treatment Effects: Doubly Robust Estimation and Testing},
  booktitle     = {International Conference on Machine Learning},
  year          = {2026},
  archiveprefix = {arXiv},
  eprint        = {2603.16829},
  note          = {In press},
  url           = {https://arxiv.org/abs/2603.16829}
}

@article{kennedy2023counterfactualdensity,
  author  = {Kennedy, Edward H. and Balakrishnan, Sivaraman and Wasserman, Larry},
  title   = {Semiparametric Counterfactual Density Estimation},
  journal = {Biometrika},
  volume  = {110},
  number  = {4},
  pages   = {875--896},
  year    = {2023},
  doi     = {10.1093/biomet/asad017},
  url     = {https://doi.org/10.1093/biomet/asad017}
}

@misc{wibisono2026unstructured,
  author        = {Wibisono, Kevin Christian and Wang, Yixin},
  title         = {Causal Inference with Unstructured Outcomes},
  year          = {2026},
  archiveprefix = {arXiv},
  eprint        = {2608.03085},
  note          = {Preprint},
  url           = {https://arxiv.org/abs/2608.03085}
}

@misc{zhang2025qwen3embedding,
  author        = {Zhang, Yanzhao and Li, Mingxin and Long, Dingkun and Zhang, Xin and Lin, Huan and Yang, Baosong and Xie, Pengjun and Yang, An and Liu, Dayiheng and Lin, Junyang and Huang, Fei and Zhou, Jingren},
  title         = {{Qwen3} Embedding: Advancing Text Embedding and Reranking Through Foundation Models},
  year          = {2025},
  eprint        = {2506.05176},
  archiveprefix = {arXiv},
  primaryclass  = {cs.CL},
  url           = {https://arxiv.org/abs/2506.05176}
}

@article{newman2002spread,
  title = {Spread of epidemic disease on networks},
  author = {Newman, M. E. J.},
  journal = {Phys. Rev. E},
  volume = {66},
  issue = {1},
  pages = {016128},
  numpages = {11},
  year = {2002},
  month = {Jul},
  publisher = {American Physical Society},
  doi = {10.1103/PhysRevE.66.016128},
  url = {https://link.aps.org/doi/10.1103/PhysRevE.66.016128}
}

@inproceedings{kingma2015adam,
  author    = {Diederik P. Kingma and Jimmy Ba},
  title     = {{Adam}: A Method for Stochastic Optimization},
  booktitle = {International Conference on Learning Representations},
  year      = {2015},
  url       = {https://arxiv.org/abs/1412.6980}
}

@article{luedtke2019omnibus,
  author  = {Luedtke, Alexander R. and Carone, Marco and van der Laan, Mark J.},
  title   = {An Omnibus Non-Parametric Test of Equality in Distribution for Unknown Functions},
  journal = {Journal of the Royal Statistical Society: Series B (Statistical Methodology)},
  volume  = {81},
  number  = {1},
  pages   = {75--99},
  year    = {2019},
  doi     = {10.1111/rssb.12299}
}

@inproceedings{biggs2023mmdfuse,
  title     = {{MMD-Fuse}: Learning and Combining Kernels for Two-Sample Testing Without Data Splitting},
  author    = {Biggs, Felix and Schrab, Antonin and Gretton, Arthur},
  booktitle = {Advances in Neural Information Processing Systems},
  volume    = {36},
  year      = {2023},
  url       = {https://proceedings.neurips.cc/paper_files/paper/2023/hash/edd00cead3425393baf13004de993017-Abstract.html}
}

@article{schrab2023mmdagg,
  title   = {{MMD} Aggregated Two-Sample Test},
  author  = {Schrab, Antonin and Kim, Ilmun and Albert, M{\'e}lisande and Laurent, B{\'e}atrice and Guedj, Benjamin and Gretton, Arthur},
  journal = {Journal of Machine Learning Research},
  volume  = {24},
  number  = {194},
  pages   = {1--81},
  year    = {2023},
  url     = {https://www.jmlr.org/papers/v24/21-1289.html}
}

@inproceedings{tarvainen2017mean,
  author = {Antti Tarvainen and Harri Valpola},
  title = {Mean teachers are better role models: Weight-averaged consistency targets improve semi-supervised deep learning results},
  booktitle = {Advances in Neural Information Processing Systems},
  volume = {30},
  year = {2017},
  url = {https://proceedings.neurips.cc/paper/2017/hash/68053af2923e00204c3ca7c6a3150cf7-Abstract.html}
}

@misc{meunier2024nonparametric,
  title = {Nonparametric Instrumental Regression via Kernel Methods is Minimax Optimal},
  author = {Meunier, Dimitri and Li, Zhu and Christensen, Tim and Gretton, Arthur},
  year = {2024},
  eprint = {2411.19653},
  archivePrefix = {arXiv},
  primaryClass = {stat.ML},
  url = {https://arxiv.org/abs/2411.19653}
}

@inproceedings{singh2019kernel,
  title = {Kernel Instrumental Variable Regression},
  author = {Singh, Rahul and Sahani, Maneesh and Gretton, Arthur},
  booktitle = {Advances in Neural Information Processing Systems},
  volume = {32},
  pages = {4595--4607},
  year = {2019},
}
\clearpage
\appendix

\section*{Appendix}

Appendix~\ref{appendix:notation} recalls the main notation used throughout this paper. Appendix~\ref{appendix:theory-proofs} presents our theoretical results and proofs. Appendix~\ref{appendix:experimental-details} describes the training and testing procedure, data constructions, architectures and hyperparameters. Appendix~\ref{appendix:additional-experiments} presents additional experiments and ablations.

\section{Notations}
\label{appendix:notation}

\medskip
\noindent\textbf{Data, interventions and samples}
\begin{itemize}[label=--,leftmargin=1.5em,labelsep=0.5em,itemsep=0pt,topsep=1.5pt,parsep=0pt,partopsep=0pt]
    \item $O_i=(X_i,A_i,Y_i)\in\mathcal X\times\mathcal A\times\mathcal Y$ denotes an observed covariate-treatment-outcome triplet and $\mathcal D=\{O_i\}_{i=1}^N$ the observed dataset.
    \item $\mathbb P_0$ is the observed-data law, $\mathbb P_X$ its covariate marginal, $\mathbb P_{A,X}$ its treatment-covariate marginal, and $\mathbb E_0$ denotes expectation under $\mathbb P_0$.
    \item $\pi_0(a\mid x)$ is the logging-policy density; $\pi(a\mid x)$ and $\pi'(a\mid x)$ are the target intervention densities, all relative to the $\sigma$-finite treatment reference measure $\rho$.
    \item $U\in\mathcal U$ is the structural variable, with $U\indep A\mid X$; the jointly measurable response $F:\mathcal A\times\mathcal X\times\mathcal U\to\mathcal Y$ defines $Y^a=F(a,X,U)$ and $Y=F(A,X,U)$.
    \item $A^p\mid X\sim p(\cdot\mid X)$, independently of $U$ given $X$; $Y^p=F(A^p,X,U)$ has law $P_Y^p$. We test $H_0:P_Y^\pi=P_Y^{\pi'}$.
    \item $P_{U\mid X}$ is a regular conditional law of $U$ given $X$; $Q(B\mid a,x)=\int\mathbbm 1\{F(a,x,u)\in B\}P_{U\mid X}(\diff u\mid x)$ is the common response kernel.
    \item $r(a,x)=\pi(a\mid x)-\pi'(a\mid x)$ is the policy contrast. Ratios and integrands weighted by $r$ or $|r|$ are set to zero where $r=0$.
    \item $\mathcal D_{\mathrm{learn}}$, $\mathcal D_{\mathrm{val}}$ and $\mathcal D_{\mathrm{test}}$ are the disjoint training, validation and test samples; $\mathcal D^{(1)},\mathcal D^{(2)},\mathcal D^{(3)}$ are the three learning folds.
    \item $n$ is the nuisance-fitting sample size; $\mathcal I_1,\mathcal I_2$ index two evaluation halves of size $m$. In the test analysis, $m$ denotes the size of each test half, also written $m_{\mathrm{test}}$.
\end{itemize}

\medskip
\needspace{4\baselineskip}
\noindent\textbf{Kernels and interventional mean embeddings}
\begin{itemize}[label=--,leftmargin=1.5em,labelsep=0.5em,itemsep=0pt,topsep=1.5pt,parsep=0pt,partopsep=0pt]
    \item $k_\theta$ is an outcome kernel with parameter $\theta\in\Theta\subseteq\mathbb R^{d_\Theta}$, RKHS $\mathcal H_\theta$ and feature map $\varphi_\theta(y)=k_\theta(y,\cdot)$. Inner products and norms are in $\mathcal H_\theta$.
    \item $\chi_\theta(p)=\int\varphi_\theta(y)P_Y^p(\diff y)$ is the mean embedding under intervention $p\in\{\pi,\pi'\}$.
    \item $\Delta_\theta=\chi_\theta(\pi)-\chi_\theta(\pi')$ is the embedding contrast and $D_\theta=\|\Delta_\theta\|_{\mathcal H_\theta}^2$ is the squared MMD. For characteristic $k_\theta$, $H_0$ is equivalent to $D_\theta=0$.
    \item $\mu_\theta(a,x)=\int\varphi_\theta(y)Q(\diff y\mid a,x)$ is the conditional outcome mean embedding, with the null-set convention in \Cref{sec:soft-intervention-identification}; it agrees $\mathbb P_{A,X}$-almost surely with the observational conditional mean.
    \item $\{k_\vartheta:\vartheta\in\Xi\}$, with $\Xi\subseteq\mathbb R^{d_0}$, is the family optimised by each training run. $M$ is the ensemble size and $\theta=(\vartheta_1,\ldots,\vartheta_M)\in\Theta=\Xi^M$ collects its member parameters, with $d_\Theta=Md_0$.
    \item $\zeta_{\vartheta_b}:\mathcal Y\to\mathbb R^d$ is the neural feature map of member $b$ and $\kappa_{\vartheta_b}$ is its Gaussian kernel on the learned features.
    \item $k_{\vartheta_b}^{\mathrm{char}}$ is the characteristic component, with weight $w_b\in[w_{\min},1]$ for $w_{\min}>0$. The ensemble kernel is $k_\theta=M^{-1}\sum_{b=1}^M k_{\vartheta_b}$.
\end{itemize}

\medskip
\needspace{4\baselineskip}
\noindent\textbf{Nuisance estimation, kernel learning and testing}
\begin{itemize}[label=--,leftmargin=1.5em,labelsep=0.5em,itemsep=0pt,topsep=1.5pt,parsep=0pt,partopsep=0pt]
    \item $e$ and $m_\theta$ are candidate logging-policy and conditional-embedding functions; $\hat\pi_0$ and $\hat\mu_\theta$ estimate $\pi_0$ and $\mu_\theta$, respectively.
    \item $g_\theta(O;e,m_\theta)$ is the doubly robust pseudo-feature in \eqref{eq:generic-dr-feature}. Its oracle version is $g_\theta(O)=g_\theta(O;\pi_0,\mu_\theta)$, with $\mathbb E_0[g_\theta(O)]=\Delta_\theta$.
    \item $\widetilde g_\theta=g_\theta(\cdot;\hat\pi_0,\hat\mu_\theta)$ is the fitted pseudo-feature map, $\hat g_{\theta,i}=\widetilde g_\theta(O_i)$, and $\bar g_{\theta,\ell}=m^{-1}\sum_{i\in\mathcal I_\ell}\hat g_{\theta,i}$.
    \item $h_\theta(o,o')=\langle g_\theta(o),g_\theta(o')\rangle_{\mathcal H_\theta}$ is the oracle observed-data kernel; $\widetilde h_\theta$ is its fitted counterpart.
    \item $\widehat D_\theta^{\mathrm{cross}}=\langle\bar g_{\theta,1},\bar g_{\theta,2}\rangle_{\mathcal H_\theta}$ is the cross-MMD estimator. $\hat s_\theta^2$ is the empirical variance of $U_{\theta,j}=\langle\bar g_{\theta,1},\hat g_{\theta,j}\rangle_{\mathcal H_\theta}$ over $j\in\mathcal I_2$, with divisor $m$.
    \item $s_\theta^2=\operatorname{Var}_{\mathbb P_0}(\langle\Delta_\theta,g_\theta(O)\rangle_{\mathcal H_\theta})$ is the oracle projected variance; $J_\theta=D_\theta/s_\theta$ is the fixed-alternative growth coefficient when $s_\theta>0$.
    \item $\lambda>0$ is the selection regulariser. The population and empirical criteria are $J_{\lambda,\theta}=D_\theta/\sqrt{s_\theta^2+\lambda}$ and $\widehat J_{\lambda,\theta}^{\mathrm{DR}}=\widehat D_\theta^{\mathrm{cross}}/\hat s_{\theta,\lambda}$, where $\hat s_{\theta,\lambda}^2=\hat s_\theta^2+\lambda$.
    \item $\widetilde\vartheta_b$ denotes the EMA parameters and $\hat\vartheta_b$ the selected checkpoint of member $b$; $\hat\theta=(\hat\vartheta_1,\ldots,\hat\vartheta_M)$ specifies the selected ensemble.
    \item $T_\theta=\sqrt m\,\widehat D_\theta^{\mathrm{cross}}/\hat s_\theta$ is the test statistic. We reject $H_0$ at level $\alpha$ when $\hat p=1-\Phi(T_\theta)<\alpha$, where $\Phi$ is the standard normal distribution function.
\end{itemize}

\medskip
\needspace{4\baselineskip}
\noindent\textbf{Notation for the theoretical analysis}
\begin{itemize}[label=--,leftmargin=1.5em,labelsep=0.5em,itemsep=0pt,topsep=1.5pt,parsep=0pt,partopsep=0pt]
    \item $\mathcal F_{\mathrm{fit}}$ is the information used for nuisance fitting in the selection analysis; $\mathcal F_{\mathrm{learn}}$ contains all nuisance fitting, kernel training and selection information available before the final test. $\mathcal F_{\mathrm{data}}$ contains the common learning and validation data and shared training randomness, excluding member-specific randomness.
    \item $\widetilde\Delta_\theta$, $\widetilde D_\theta$ and $\widetilde s_\theta^2$ are the conditional population counterparts of $\Delta_\theta$, $D_\theta$ and $s_\theta^2$ formed from $\widetilde g_\theta$, holding the nuisance fits fixed. The label $\mathrm{or}$ marks empirical quantities computed with oracle pseudo-features.
    \item $\epsilon_{\pi,n}$ and $\epsilon_{\mu,n}$ are the conditional root-mean-square nuisance errors weighted by $|r|$: relative propensity error and conditional-embedding error, the latter uniform over $\Theta$, as defined in \eqref{eq:primitive-nuisance-errors}.
    \item $R_\Theta$ bounds the Euclidean radius of $\Theta$; $c_m(\eta)$ is the kernel-class sampling term in \eqref{eq:kernel-class-sampling-term}, with failure probability $\eta\in(0,1)$.
    \item $\kappa,W,A,B_\mu$ bound the kernel diagonal, $|r|/\pi_0$, relative propensity error and fitted embedding norm, respectively. $L$ and $L_h$ are the parameter Lipschitz constants of the fitted and oracle observed-data kernels.
    \item $\Theta_s=\{\theta\in\Theta:s_\theta^2\ge s^2\}$, for $s>0$, is the population comparison class used in the adaptation results; $\lambda_m$ is a positive sequence of selection regularisers.
    \item $b_{\mathrm{test}}=\mathbb E_0[\hat g_{\hat\theta}(O)\mid\mathcal F_{\mathrm{learn}}]$ is the fitted-score bias under $H_0$, and $\xi(O)=\hat g_{\hat\theta}(O)-b_{\mathrm{test}}$ is the centred test score.
    \item $a_{\pi,\mathrm{test}}$ and $b_{\mu,\mathrm{test}}$ are the weighted nuisance errors for the selected kernel; their product bounds $\|b_{\mathrm{test}}\|_{\mathcal H_{\hat\theta}}$ under $H_0$.
\end{itemize}

\section{Theoretical Results and Proofs}
\label{appendix:theory-proofs}

We prove the results of \Cref{chapter:method,section:deep_kernels}, then give further results on nuisance errors and the learned test under $H_1$. We  suppress the fixed-policy superscript on $g_\theta$ when unambiguous and write $O_{\mathbb P}(1)$ for quantities bounded in probability and $o_{\mathbb P}(1)$ for quantities converging to zero in probability.

\subsection{Identification of Soft-Intervention Outcome Laws}
\label{sec:soft-intervention-identification}

We show that the observational conditional outcome law can be used to recover the outcome distribution under a soft intervention. 

\begin{prop}[Soft-intervention identification]
\label{prop:soft-intervention-identification}
Under \Cref{assum:identification}, there exists a measurable probability kernel $Q$ from $\mathcal A\times\mathcal X$ to $\mathcal Y$ that is a version of the observational conditional law of $Y$ given $(A,X)$ and, for every fixed $a\in\mathcal A$, a conditional law of $Y^a$ given $X$.

For any conditional treatment density $p$ relative to $\rho$, let $A^p\mid X$ be drawn from $p(\cdot\mid X)$ independently of $U$ given $X$, and set $Y^p=F(A^p,X,U)$. Then, for every measurable $B\subseteq\mathcal Y$,
\[
P_Y^p(B)
=\int_{\mathcal X}\int_{\mathcal A}
Q(B\mid a,x)p(a\mid x)\rho(\diff a)\mathbb P_X(\diff x).
\]
If $p=0$ wherever $\pi_0=0$, up to a $\mathbb P_X\otimes\rho$-null set, this expression depends only on the observational distribution. In particular, $P_Y^\pi$ and $P_Y^{\pi'}$ are identified.
\end{prop}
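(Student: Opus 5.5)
The plan is to build $Q$ directly from the structural response, following the notation list. Since $\mathcal U$ is standard Borel, there is a regular conditional law $P_{U\mid X}(\cdot\mid x)$ of $U$ given $X$. Define
\[
Q(B\mid a,x):=\int_{\mathcal U}\mathbbm 1\{F(a,x,u)\in B\}\,P_{U\mid X}(\diff u\mid x).
\]
Joint measurability of $F$ and the measurability of $x\mapsto P_{U\mid X}(C\mid x)$ make $(a,x)\mapsto Q(B\mid a,x)$ measurable for each $B$. To see this, apply a monotone class argument to the integrand $\mathbbm 1\{F(a,x,u)\in B\}$, which is jointly measurable in $(a,x,u)$, starting from product sets in $(a,x,u)$. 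Countable additivity in $B$ follows from monotone convergence, so $Q$ is a probability kernel.

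The first step is to show that $Q$ is a version of the law of $Y$ given $(A,X)$. Conditional exogeneity $U\indep A\mid X$ means that the conditional law of $U$ given $(A,X)$ is $P_{U\mid X}(\cdot\mid X)$. For bounded measurable $h(a,x)$, the disintegration (freezing) lemma then gives
\[
\E[h(A,X)\mathbbm 1\{F(A,X,U)\in B\}]=\E[h(A,X)Q(B\mid A,X)].
\]
Since $Y=F(A,X,U)$ almost surely, this is the required identity. The second step uses the same computation with $a$ fixed and no conditional independence needed: it shows that $Q(\cdot\mid a,\cdot)$ is the conditional law of $Y^a=F(a,X,U)$ given $X$.

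For the g-formula, $A^p$ is drawn with conditional density $p(\cdot\mid X)$ and is independent of $U$ given $X$. The joint conditional law of $(A^p,U)$ given $X=x$ is therefore $p(a\mid x)\rho(\diff a)\otimes P_{U\mid X}(\diff u\mid x)$. Fubini--Tonelli on the nonnegative indicator then gives
\[
P_Y^p(B)=\int_{\mathcal X}\int_{\mathcal A}\int_{\mathcal U}\mathbbm 1\{F(a,x,u)\in B\}\,P_{U\mid X}(\diff u\mid x)\,p(a\mid x)\rho(\diff a)\,\mathbb P_X(\diff x),
\]
and the inner integral is exactly $Q(B\mid a,x)$. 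I expect the main obstacle to be this rigorous construction of the joint conditional law of $(A^p,U)$ given $X$. The cleanest route is to realise $A^p=G(X,V)$ with $V$ an auxiliary uniform variable independent of $(X,U)$, which is possible because $\mathcal A$ is standard Borel, and then use that $V$ is independent of $(X,U)$.

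Finally, for identifiability, note that $Q$ is determined by $\mathbb P_0$ only up to $\mathbb P_{A,X}$-null sets. The measure $\mathbb P_{A,X}$ has density $\pi_0(a\mid x)$ with respect to $\rho\otimes\mathbb P_X$. Take two versions $Q_1$ and $Q_2$ of the conditional law. Their difference vanishes $\pi_0\,\rho\otimes\mathbb P_X$-a.e., so it vanishes $\rho\otimes\mathbb P_X$-a.e. on $\{\pi_0>0\}$. The hypothesis that $p=0$ wherever $\pi_0=0$, up to a null set, then makes $\int\int (Q_1-Q_2)(B\mid a,x)\,p(a\mid x)\,\rho(\diff a)\,\mathbb P_X(\diff x)$ equal to zero. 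Since $\mathbb P_X$ is also observational, the whole expression depends only on $\mathbb P_0$. Positivity in \Cref{assum:identification} gives this hypothesis for $p\in\{\pi,\pi'\}$, so both $P_Y^\pi$ and $P_Y^{\pi'}$ are identified.
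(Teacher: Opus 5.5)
Your proposal is correct and follows essentially the same route as the paper: the same construction $Q(B\mid a,x)=\int_{\mathcal U}\mathbbm 1\{F(a,x,u)\in B\}\,P_{U\mid X}(\diff u\mid x)$, the same use of $U\indep A\mid X$ and of the fixed-$a$ computation, conditioning on $X$ with Tonelli for the g-formula, and the observation that any two versions of $Q$ agree $\mathbb P_{A,X}$-almost everywhere, which the positivity hypothesis transfers to the interventional treatment--covariate law. The obstacle you anticipate is not really one, so the auxiliary realisation $A^p=G(X,V)$ is harmless but unnecessary: the stated conditional independence already forces the conditional law of $(A^p,U)$ given $X=x$ to be $p(a\mid x)\rho(\diff a)\otimes P_{U\mid X}(\diff u\mid x)$ (check on rectangles, then extend by a $\pi$--$\lambda$ argument), and the paper uses this implicitly.
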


\begin{proof}
    Let $P_{U\mid X}(\cdot\mid x)$ denote a conditional distribution of $U$ given $X=x$. For every measurable $B\subseteq\mathcal Y$, define
    \[
    Q(B\mid a,x)
    :=\int_{\mathcal U}\mathbbm 1\{F(a,x,u)\in B\}
    P_{U\mid X}(\diff u\mid x).
    \]
    Since $F$ is jointly measurable, $Q$ is a measurable probability kernel. We show that it describes both the observed outcome response and the fixed-treatment potential outcomes.
    
    First, for any measurable $C\subseteq\mathcal A\times\mathcal X$, consistency and $U\indep A\mid X$ give
    \begin{align*}
    \mathbb P_0\big(Y\in B,(A,X)\in C\big)
    &=\int_C\int_{\mathcal U}
    \mathbbm 1\{F(a,x,u)\in B\}
    P_{U\mid X}(\diff u\mid x)
    \mathbb P_{A,X}(\diff a,\diff x)\\
    &=\int_C Q(B\mid a,x)\mathbb P_{A,X}(\diff a,\diff x).
    \end{align*}
    Hence $Q$ is a version of the observational conditional law of $Y$ given $(A,X)$. For each fixed $a\in\mathcal A$, the definition $Y^a=F(a,X,U)$ similarly gives, for every measurable $D\subseteq\mathcal X$,
    \[
    \mathbb P(Y^a\in B,X\in D)
    =\int_D Q(B\mid a,x)\mathbb P_X(\diff x).
    \]
    Thus $Q(\cdot\mid a,X)$ is also a conditional law of $Y^a$ given $X$.
    
    We next consider a soft intervention $p$. Since $A^p$ is drawn from $p(\cdot\mid X)$ independently of $U$ given $X$, while the distribution of $(X,U)$ is unchanged, we obtain
    \begin{align*}
    P_Y^p(B)
    &=\mathbb E\!\left[\int_{\mathcal A}
    \mathbbm 1\{F(a,X,U)\in B\}
    p(a\mid X)\rho(\diff a)\right]\\
    &=\int_{\mathcal X}\int_{\mathcal A}
    Q(B\mid a,x)p(a\mid x)\rho(\diff a)\mathbb P_X(\diff x),
    \end{align*}
    where the second equality follows by conditioning on $X$ and applying Tonelli's theorem.
    
    Finally, $\mathbb P_{A,X}(\diff a,\diff x)=\pi_0(a\mid x)\rho(\diff a)\mathbb P_X(\diff x)$. If $p=0$ wherever $\pi_0=0$, up to a $\mathbb P_X\otimes\rho$-null set, then the law of $(A^p,X)$ is absolutely continuous with respect to $\mathbb P_{A,X}$. For each measurable $B$, any two versions of the observational conditional probability $Q(B\mid a,x)$ agree $\mathbb P_{A,X}$-almost everywhere, and therefore also under the intervention. They consequently give the same g-formula, which is thus determined by the observational law. The required support condition holds for both $\pi$ and $\pi'$ by \Cref{assum:identification}, completing the proof.
\end{proof}

\paragraph{Mean embeddings and integrability.}
For fixed $\theta$, define $\mu_\theta(a,x)=\int\varphi_\theta(y)Q(\diff y\mid a,x)$ wherever $\int\sqrt{k_\theta(y,y)}Q(\diff y\mid a,x)<\infty$, and set it to zero elsewhere. This exceptional set is measurable and null under both $\mathbb P_{A,X}$ and the two target treatment-covariate measures by \Cref{assum:embedding-integrability} and parts (b)-(d). Separability and strongly measurable approximation under kernel integration give a strongly measurable $\mu_\theta$. Part (b) identifies $\mu_\theta(A,X)$ with the observational Bochner conditional mean, and part (d) with Bochner Fubini gives
\[
\chi_\theta(p)=\int_{\mathcal X}\int_{\mathcal A}\mu_\theta(a,x)p(a\mid x)\rho(\diff a)\mathbb P_X(\diff x).
\]
The observed-law first moment in \Cref{assum:embedding-integrability} is needed for the observational conditional mean, since target moments need not control actions outside the target support. Uniformly bounded kernels satisfy all these moment conditions.

For finite or countable $\mathcal A$ with counting measure $\rho$, the action integrals reduce to sums, giving the usual randomised mixture of atomic interventions.

\subsection{Observed-Data Representation and Double Robustness}
\label{sec:proof-dr-representation}

For Proposition~\ref{prop:dr-representation}, the candidate nuisances are assumed to satisfy
\begin{align*}
\mathbb E_0\left[\frac{|r(A,X)|}{e(A\mid X)}\|\varphi_\theta(Y)-m_\theta(A,X)\|_{\mathcal H_\theta}\right]&<\infty,\\
\mathbb E_0\left[\int_{\mathcal A}|r(a,X)|\|m_\theta(a,X)\|_{\mathcal H_\theta}\rho(\diff a)\right]&<\infty.
\end{align*}
These conditions make both terms of the pseudo-feature Bochner integrable and justify the conditional integration below. The target moments also ensure $\mathbb E_0\int|r(a,X)|\|\mu_\theta(a,X)\|_{\mathcal H_\theta}\rho(\diff a)<\infty$.

\begin{proof}[Proof of Proposition~\ref{prop:dr-representation}]

Recall that $r(a,x)=\pi(a\mid x)-\pi'(a\mid x)$. As in \Cref{chapter:method}, ratios and integrands weighted by $r$ or $|r|$ are set to zero wherever $r=0$.

Under \Cref{assum:identification,assum:embedding-integrability}, Proposition~\ref{prop:soft-intervention-identification} and Bochner Fubini identify each target embedding through the common response kernel. Subtracting the two embeddings gives
\[
\Delta_\theta(\pi,\pi')
=
\mathbb E_{\mathbb P_0}\left[
\int_{\mathcal A}r(a,X)\mu_\theta(a,X)\rho(\diff a)
\right].
\]
The weight $r(A,X)/e(A\mid X)$ is fixed conditional on $(A,X)$, and $\mu_\theta(A,X)$ agrees almost surely with the observational conditional mean. The integrability conditions above therefore give
\begin{align*}
\mathbb E_{\mathbb P_0}\left[
\frac{r(A,X)}{e(A\mid X)}
\{\varphi_\theta(Y)-m_\theta(A,X)\}\mid A,X
\right] =\frac{r(A,X)}{e(A\mid X)}
\{\mu_\theta(A,X)-m_\theta(A,X)\}.
\end{align*}
Averaging over $A\mid X$, with density $\pi_0$, gives
\begin{align*}
&\mathbb E_{\mathbb P_0}\left[g_\theta^{\pi,\pi'}(O;e,m_\theta)\right]\\
&\quad=
\mathbb E_{\mathbb P_0}\left[
\int_{\mathcal A}
\pi_0(a\mid X)\frac{r(a,X)}{e(a\mid X)}
\{\mu_\theta(a,X)-m_\theta(a,X)\}
\rho(\diff a)
\right]\\
&\qquad+
\mathbb E_{\mathbb P_0}\left[
\int_{\mathcal A}r(a,X)m_\theta(a,X)\rho(\diff a)
\right].
\end{align*}
We now subtract the expression for $\Delta_\theta(\pi,\pi')$:
\begin{align*}
&\mathbb E_{\mathbb P_0}\left[g_\theta^{\pi,\pi'}(O;e,m_\theta)\right]
-\Delta_\theta(\pi,\pi')\\
&\quad=
\mathbb E_{\mathbb P_0}\left[
\int_{\mathcal A}
r(a,X)
\left\{\frac{\pi_0(a\mid X)}{e(a\mid X)}-1\right\}
\{\mu_\theta(a,X)-m_\theta(a,X)\}
\rho(\diff a)
\right].
\end{align*}
This expression is zero if either $e=\pi_0$ or $m_\theta=\mu_\theta$ wherever $|r|>0$. Thus, if at least one nuisance function is correct,
\[
\mathbb E_{\mathbb P_0}\left[g_\theta^{\pi,\pi'}(O;e,m_\theta)\right] = \Delta_\theta(\pi,\pi').
\]
Using the true nuisance functions gives $\mathbb E_{\mathbb P_0}[g_\theta^{\pi,\pi'}(O)]=\Delta_\theta(\pi,\pi')$.
\end{proof}

\begin{rem}
The representation above covers continuous actions for soft interventions $\pi,\pi'$, but not hard interventions at a fixed treatment value. Atomic continuous-treatment interventions require localization in treatment space, with the resulting smoothing bias controlled explicitly \citep{Colangelo2020,zenati25a,bhattacharjee2025doubly}. This remains true when the outcome kernel is learned.
\end{rem}

\subsection{Fixed-Alternative Growth Rate}
\label{sec:proof-fixed-alternative}

\begin{proof}[Proof of \Cref{prop:fixed-kernel-efficacy}]
Fix $\theta$. We take all inner products and norms in $\mathcal H_\theta$. We assume the statistics use the true nuisance functions, so $\hat g_{\theta,i}^{\pi,\pi'}=g_\theta^{\pi,\pi'}(O_i)$ and
\[
\bar g_{\theta,\ell}
=\frac1m\sum_{i\in\mathcal I_\ell}g_\theta^{\pi,\pi'}(O_i),
\qquad \ell\in\{1,2\}.
\]
By Proposition~\ref{prop:dr-representation}, $\mathbb E_{\mathbb P_0}[g_\theta^{\pi,\pi'}(O_i)]=\Delta_\theta(\pi,\pi')$. The fourth-moment assumption gives finite first and second moments, so the RKHS law of large numbers gives $\bar g_{\theta,\ell}\xrightarrow{\mathbb P}\Delta_\theta(\pi,\pi')$ for $\ell\in\{1,2\}$. Continuity of the inner product gives
\[
\widehat D_\theta^{\mathrm{cross}}
=\langle\bar g_{\theta,1},\bar g_{\theta,2}\rangle
\xrightarrow{\mathbb P}
\|\Delta_\theta(\pi,\pi')\|^2
=D_\theta.
\]

For the variance estimator, define the real-valued variables
\[
V_j:=\langle\bar g_{\theta,1},g_\theta^{\pi,\pi'}(O_j)\rangle,
\qquad
W_j:=\langle\Delta_\theta(\pi,\pi'),g_\theta^{\pi,\pi'}(O_j)\rangle,
\qquad j\in\mathcal I_2.
\]
The $V_j$ have sample mean $\widehat D_\theta^{\mathrm{cross}}$. Expanding their sample variance gives
\[
\hat s_\theta^2
=\frac1m\sum_{j\in\mathcal I_2}V_j^2
-\left(\widehat D_\theta^{\mathrm{cross}}\right)^2.
\]
We have already shown that $(\widehat D_\theta^{\mathrm{cross}})^2\xrightarrow{\mathbb P}D_\theta^2$. To find the limit of $\frac1m\sum_{j\in\mathcal I_2}V_j^2$, we compare $V_j$ with $W_j$. By $|u^2-w^2|=|u-w||u+w|$ and Cauchy-Schwarz,
\begin{align*}
\left|\frac1m\sum_{j\in\mathcal I_2}(V_j^2-W_j^2)\right|
&\le
\|\bar g_{\theta,1}-\Delta_\theta(\pi,\pi')\|
\bigl(\|\bar g_{\theta,1}\|+\|\Delta_\theta(\pi,\pi')\|\bigr)
\frac1m\sum_{j\in\mathcal I_2}\|g_\theta^{\pi,\pi'}(O_j)\|^2\\
&=o_{\mathbb P}(1).
\end{align*}
Here the first factor converges to zero, the second is bounded in probability, and the last converges to $\mathbb E_{\mathbb P_0}\|g_\theta^{\pi,\pi'}(O)\|^2<\infty$ by the law of large numbers. The same law applied to $W_j^2$ gives
\[
\frac1m\sum_{j\in\mathcal I_2}W_j^2
\xrightarrow{\mathbb P}
\mathbb E_{\mathbb P_0}\left[
\langle\Delta_\theta(\pi,\pi'),g_\theta^{\pi,\pi'}(O)\rangle^2
\right].
\]
We also have $\mathbb E_{\mathbb P_0}[W_j]=\langle\Delta_\theta(\pi,\pi'),\Delta_\theta(\pi,\pi')\rangle=D_\theta$. Combining these gives
\[
\hat s_\theta^2
\xrightarrow{\mathbb P}
\mathbb E_{\mathbb P_0}\left[
\langle\Delta_\theta(\pi,\pi'),g_\theta^{\pi,\pi'}(O)\rangle^2
\right]-D_\theta^2
=s_\theta^2.
\]
Since $s_\theta>0$, we have $\hat s_\theta\xrightarrow{\mathbb P}s_\theta$. By \eqref{eq:test-statistic} and Slutsky's theorem,
\[
\frac{T_\theta}{\sqrt m}
=\frac{\widehat D_\theta^{\mathrm{cross}}}{\hat s_\theta}
\xrightarrow{\mathbb P}
\frac{D_\theta}{s_\theta}
=J_\theta.
\]
We set $T_\theta=0$ when $\hat s_\theta=0$. Since this event has probability tending to zero, it does not affect the limit.
\end{proof}

\subsection{Characteristic Ensemble Safeguard}
\label{sec:proof-characteristic}

This result is the analogue of \citet[Proposition~5]{liu2021learningdeepkernelsnonparametric}. Their kernel in Eq.~(1), however, additionally factors the characteristic component and the deep neural component.

\begin{proof}[Proof of \Cref{lem:characteristic-ensemble}]
Let $Y,Y'\sim P$ and $Z,Z'\sim Q$ be mutually independent. Then
\[
\operatorname{MMD}_k^2(P,Q)
=\mathbb E[k(Y,Y')]+\mathbb E[k(Z,Z')]-2\mathbb E[k(Y,Z)].
\]
The squared MMD is therefore linear in the kernel. For each ensemble member $b$, write
\[
k_{\vartheta_b}^{\mathrm{deep}}(y,y')
:=\kappa_{\vartheta_b}\bigl(\zeta_{\vartheta_b}(y),\zeta_{\vartheta_b}(y')\bigr).
\]
Substituting the ensemble kernel form in \eqref{eq:deep-kernel} yields
\begin{align*}
\operatorname{MMD}_{k_\theta}^2(P,Q)
&=\frac1M\sum_{b=1}^M
\left[
(1-w_b)\operatorname{MMD}_{k_{\vartheta_b}^{\mathrm{deep}}}^2(P,Q)
+w_b\operatorname{MMD}_{k_{\vartheta_b}^{\mathrm{char}}}^2(P,Q)
\right].
\end{align*}
Every squared MMD is nonnegative, and $1-w_b \ge 0$. Dropping the deep kernel terms and using $w_b \ge w_{\min}$ gives
\[
\operatorname{MMD}_{k_\theta}^2(P,Q)
\ge\frac{w_{\min}}M\sum_{b=1}^M
\operatorname{MMD}_{k_{\vartheta_b}^{\mathrm{char}}}^2(P,Q).
\]
Each base kernel is characteristic and $w_{\min}>0$, so the lower bound is strictly positive when $P\neq Q$. When $P=Q$, every squared MMD term is zero. Thus $k_\theta$ is characteristic.
\end{proof}

\subsection{Memberwise Ensemble Aggregation}
\label{sec:memberwise-ensemble}

\begin{prop}[Memberwise ensemble aggregation]\label{prop:memberwise-ensemble}
Suppose \Cref{assum:identification,assum:embedding-integrability} hold. Let $k_{\vartheta_1},\ldots,k_{\vartheta_M}$ be fixed kernel members and $k_\theta=M^{-1}\sum_b k_{\vartheta_b}$. Use the oracle pseudo-features $g_{\vartheta_b}=g_{\vartheta_b}(\cdot;\pi_0,\mu_{\vartheta_b})$ based on the common response kernel $Q$, and assume each projection $\langle\Delta_{\vartheta_b},g_{\vartheta_b}(O)\rangle_{\mathcal H_{\vartheta_b}}$ is square-integrable. For $\lambda\ge0$, define $J_{\lambda,\vartheta_b}=D_{\vartheta_b}/\sqrt{s_{\vartheta_b}^2+\lambda}$ and analogously for the aggregate whenever the denominators are positive. Then
\[
D_\theta=\frac1M\sum_{b=1}^M D_{\vartheta_b},
\qquad
\sqrt{s_\theta^2+\lambda}\le\frac1M\sum_{b=1}^M\sqrt{s_{\vartheta_b}^2+\lambda}.
\]
If all member and aggregate denominators are positive, then
\[
J_{\lambda,\theta}
\ge\frac{\sum_{b=1}^M D_{\vartheta_b}}{\sum_{b=1}^M\sqrt{s_{\vartheta_b}^2+\lambda}}
\ge\min_{1\le b\le M}J_{\lambda,\vartheta_b}.
\]
For learned parameters $\hat\theta=(\hat\vartheta_1,\ldots,\hat\vartheta_M)$, let $\mathcal F_{\mathrm{data}}$ contain the common learning and validation data and training randomness. If the selected member kernels are conditionally independent given $\mathcal F_{\mathrm{data}}$ and the $D_{\hat\vartheta_b}$ have finite conditional second moments, then
\[
\operatorname{Var}(D_{\hat\theta}\mid\mathcal F_{\mathrm{data}})
=\frac1{M^2}\sum_{b=1}^M\operatorname{Var}(D_{\hat\vartheta_b}\mid\mathcal F_{\mathrm{data}}).
\]
If the members are also conditionally identically distributed, this equals $M^{-1}\operatorname{Var}(D_{\hat\vartheta_1}\mid\mathcal F_{\mathrm{data}})$.
\end{prop}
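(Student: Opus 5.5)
The plan is to exploit that the ensemble RKHS decomposes as a sum of member RKHSs. Concretely, I would work in the direct sum $\mathcal G:=\bigoplus_{b=1}^M\mathcal H_{\vartheta_b}$ and use the feature map $\Phi(y):=M^{-1/2}(\varphi_{\vartheta_1}(y),\ldots,\varphi_{\vartheta_M}(y))$, which satisfies $\langle\Phi(y),\Phi(y')\rangle_{\mathcal G}=k_\theta(y,y')$. The canonical map from $\mathcal G$ onto $\mathcal H_\theta$ is a partial isometry: it is isometric on the closed span of the features and has kernel orthogonal to it. Every inner product of Bochner integrals of features under $\mathcal H_\theta$ therefore equals the corresponding inner product in $\mathcal G$.

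The first step applies this to the pseudo-feature. The oracle $g_\theta$ is built from $\varphi_\theta$, $\mu_\theta$ and $\pi_0$ with the common response kernel $Q$, and it is linear in the feature map. Hence its $\mathcal G$-lift is $M^{-1/2}(g_{\vartheta_1}(O),\ldots,g_{\vartheta_M}(O))$, and likewise $\Delta_\theta$ lifts to $M^{-1/2}(\Delta_{\vartheta_b})_b$. This immediately gives $D_\theta=M^{-1}\sum_b D_{\vartheta_b}$. It also gives
\[
\langle\Delta_\theta,g_\theta(O)\rangle_{\mathcal H_\theta}=\frac1M\sum_{b=1}^M\langle\Delta_{\vartheta_b},g_{\vartheta_b}(O)\rangle_{\mathcal H_{\vartheta_b}}.
\]
The main technical care lies here. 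I must justify that the mean embedding and the conditional embedding commute with the lift, which follows from Bochner integrals commuting with bounded linear maps under \Cref{assum:embedding-integrability}. I must also check that $\mu_\theta=M^{-1}\sum_b\mu_{\vartheta_b}$ holds off the null sets of the convention in \Cref{sec:soft-intervention-identification}.

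The second step turns the projection identity into the variance bound. Minkowski's inequality in $L^2(\mathbb P_0)$ gives $s_\theta\le M^{-1}\sum_b s_{\vartheta_b}$. The $\lambda$-version then follows from the Euclidean triangle inequality applied to the vectors $(s_{\vartheta_b},\sqrt\lambda)\in\mathbb R^2$:
\[
\sqrt{\Bigl(\tfrac1M\sum_b s_{\vartheta_b}\Bigr)^2+\lambda}\le\frac1M\sum_b\sqrt{s_{\vartheta_b}^2+\lambda}.
\]
Dividing $D_\theta$ by this bound gives the first inequality for $J_{\lambda,\theta}$. The second is the mediant inequality $\sum_b a_b/\sum_b c_b\ge\min_b a_b/c_b$ for $c_b>0$.

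The third step is the variance statement. Conditional on $\mathcal F_{\mathrm{data}}$, each $D_{\hat\vartheta_b}$ is a measurable function of the selected member $\hat\vartheta_b$ and the fixed population law. Conditional independence of the members therefore makes the $D_{\hat\vartheta_b}$ conditionally independent with finite second moments. The conditional covariances vanish, which yields $\operatorname{Var}(D_{\hat\theta}\mid\mathcal F_{\mathrm{data}})=M^{-2}\sum_b\operatorname{Var}(D_{\hat\vartheta_b}\mid\mathcal F_{\mathrm{data}})$. Under identical conditional distributions this reduces to $M^{-1}\operatorname{Var}(D_{\hat\vartheta_1}\mid\mathcal F_{\mathrm{data}})$.

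I expect the main obstacle to be the measurability of $\hat\vartheta_b\mapsto D_{\hat\vartheta_b}$. I would handle it by invoking joint measurability of the kernel family in the parameter, or by assuming it as part of the hypotheses.
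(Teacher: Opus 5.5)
Your proposal is correct and follows essentially the same route as the paper. Both arguments use the direct-sum feature representation $M^{-1/2}(\varphi_{\vartheta_1},\ldots,\varphi_{\vartheta_M})$ with linearity of Bochner integration to split $g_\theta$ and $\Delta_\theta$, a triangle inequality for the regularised standard deviation, the mediant inequality, and vanishing conditional cross-covariances. For the triangle inequality, the paper works in one step in $L^2(\mathbb P_0)\oplus\mathbb R$, whereas you use Minkowski in $L^2(\mathbb P_0)$ followed by the Euclidean triangle inequality; the two are equivalent. Your explicit treatment of the partial isometry $\bigoplus_b\mathcal H_{\vartheta_b}\to\mathcal H_\theta$, of the null-set convention for $\mu_\theta$, and of measurability of $\hat\vartheta_b\mapsto D_{\hat\vartheta_b}$ supplies details the paper leaves implicit.
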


\begin{proof}
In the Hilbert direct sum $\bigoplus_{b=1}^M\mathcal H_{\vartheta_b}$, the averaged kernel has the isometric feature representation
\[
\varphi_\theta(y)=M^{-1/2}\bigl(\varphi_{\vartheta_1}(y),\ldots,\varphi_{\vartheta_M}(y)\bigr).
\]
Linearity of Bochner integration gives $\Delta_\theta=M^{-1/2}(\Delta_{\vartheta_1},\ldots,\Delta_{\vartheta_M})$ and the same identity for the conditional embeddings. Using the common $\pi_0$ and $Q$ in \eqref{eq:generic-dr-feature} therefore gives $g_\theta(O)=M^{-1/2}(g_{\vartheta_1}(O),\ldots,g_{\vartheta_M}(O))$. Taking squared norms proves the discrepancy identity, and
\[
\langle\Delta_\theta,g_\theta(O)\rangle_{\mathcal H_\theta}
=\frac1M\sum_{b=1}^M\langle\Delta_{\vartheta_b},g_{\vartheta_b}(O)\rangle_{\mathcal H_{\vartheta_b}}.
\]
Let $Z_b:=\langle\Delta_{\vartheta_b},g_{\vartheta_b}(O)\rangle_{\mathcal H_{\vartheta_b}}-D_{\vartheta_b}$. By Proposition~\ref{prop:dr-representation}, $\mathbb E_0Z_b=0$, so $s_{\vartheta_b}=\|Z_b\|_{L^2(\mathbb P_0)}$ and $s_\theta=\|M^{-1}\sum_b Z_b\|_{L^2(\mathbb P_0)}$. The triangle inequality in $L^2(\mathbb P_0)\oplus\mathbb R$ yields
\[
\sqrt{s_\theta^2+\lambda}
=\left\|\frac1M\sum_{b=1}^M(Z_b,\sqrt\lambda)\right\|
\le\frac1M\sum_{b=1}^M\|(Z_b,\sqrt\lambda)\|
=\frac1M\sum_{b=1}^M\sqrt{s_{\vartheta_b}^2+\lambda}.
\]
Since $D_{\vartheta_b}\ge0$, the denominator bound proves the first efficacy inequality. The intervening ratio is a weighted average of the member efficacies with positive weights proportional to $\sqrt{s_{\vartheta_b}^2+\lambda}$, proving the second. No independence of the projected scores on a common observation is required. Finally, the discrepancy identity holds for every realised collection of learned kernels; conditional independence eliminates its cross-covariances, and identical conditional distributions give the $1/M$ formula.
\end{proof}

\subsection{Held-Out Null Validity}
\label{sec:proof-heldout-validity}

The sample-splitting principle detailed below also appears in \citet{gretton2012optimal,liu2021learningdeepkernelsnonparametric}. Here we additionally control bias from nuisance estimation.

Let $\mathcal F_{\mathrm{learn}}$ contain the data and randomness used for nuisance fitting, kernel training and selection. Conditional on $\mathcal F_{\mathrm{learn}}$, the fitted map $\hat g_{\hat\theta}(O):=g_{\hat\theta}^{\pi,\pi'}(O;\hat\pi_0,\hat\mu_{\hat\theta})$ is fixed. Write $m$ for the size of each test half. For a new observation $O\sim\mathbb P_0$ independent of $\mathcal F_{\mathrm{learn}}$, define
\begin{align*}
b_{\mathrm{test}}
&:=\mathbb E_{\mathbb P_0}[\hat g_{\hat\theta}(O)\mid\mathcal F_{\mathrm{learn}}],\\
\xi(O)&:=\hat g_{\hat\theta}(O)-b_{\mathrm{test}},
\qquad
\bar\xi_\ell:=\frac1m\sum_{i\in\mathcal I_\ell}\xi(O_i),
\qquad \ell\in\{1,2\}.
\end{align*}
$b_{\mathrm{test}}$ is the bias from nuisance estimation under $H_0$.

Let $\mathcal G_m$ denote the information from the learning procedure $\mathcal F_{\mathrm{learn}}$ and the first test half. $\bar\xi_1$ is therefore fixed when conditioned on $\mathcal G_m$. For a new observation $O\sim\mathbb P_0$ independent of $\mathcal G_m$, define
\[
V(O):=\langle\bar\xi_1,\xi(O)\rangle_{\mathcal H_{\hat\theta}},
\qquad
v_{\mathrm{test}}:=\mathbb E_{\mathbb P_0}[V(O)^2\mid\mathcal G_m].
\]
Since $V(O)$ has conditional mean zero, $v_{\mathrm{test}}$ is its variance conditioned on $\mathcal G_m$.

\begin{assum}[Held-out regularity]
\label{assum:heldout-regularity}
Let $m\to\infty$. Suppose the two test halves are mutually independent i.i.d. samples from $\mathbb P_0$ and are independent of $\mathcal F_{\mathrm{learn}}$. Assume that $\hat g_{\hat\theta}$ is strongly measurable and conditionally Bochner integrable. Under $H_0$, suppose that:
\begin{enumerate}
    \item The conditional fourth moment of the centred pseudo-feature is bounded in probability:
    \[
    \mathbb E_{\mathbb P_0}[\|\xi(O)\|_{\mathcal H_{\hat\theta}}^4\mid\mathcal F_{\mathrm{learn}}]
    =O_{\mathbb P}(1).
    \]
    \item The projected variance is nonzero with probability tending to one, $\mathbb P(v_{\mathrm{test}}>0)\to1$, and
    \[
    (m v_{\mathrm{test}})^{-1}=O_{\mathbb P}(1),
    \qquad
    \frac{\mathbb E_{\mathbb P_0}[V(O)^4\mid\mathcal G_m]}{m v_{\mathrm{test}}^2}
    \xrightarrow{\mathbb P}0.
    \]
    The inverse and ratio are defined to be zero when $v_{\mathrm{test}}=0$.
    \item The conditional mean tends to zero faster than $m^{-1/2}$:
    \[
    \sqrt m\|b_{\mathrm{test}}\|_{\mathcal H_{\hat\theta}}
    \xrightarrow{\mathbb P}0.
    \]
\end{enumerate}
\end{assum}

Double robustness gives a sufficient nuisance-rate condition for condition 3 of \Cref{assum:heldout-regularity}. Under \Cref{assum:identification,assum:embedding-integrability}, define the weighted nuisance errors
\begin{align*}
a_{\pi,\mathrm{test}}^2
&:=\mathbb E_{\mathbb P_0}\left[
\int_{\mathcal A}|r(a,X)|
\left\{\frac{\pi_0(a\mid X)}{\hat\pi_0(a\mid X)}-1\right\}^2
\rho(\diff a)
\middle|\mathcal F_{\mathrm{learn}}
\right],\\
b_{\mu,\mathrm{test}}^2
&:=\mathbb E_{\mathbb P_0}\left[
\int_{\mathcal A}|r(a,X)|
\|\hat\mu_{\hat\theta}(a,X)-\mu_{\hat\theta}(a,X)\|_{\mathcal H_{\hat\theta}}^2
\rho(\diff a)
\middle|\mathcal F_{\mathrm{learn}}
\right].
\end{align*}
Under $H_0$, $\Delta_{\hat\theta}(\pi,\pi')=0$. If $\hat\pi_0$ is positive on the support of $|r|$ and the terms in the double robustness identity are integrable, taking norms in the identity of Proposition~\ref{prop:dr-representation} and applying Cauchy-Schwarz with weight $|r(a,X)|$ gives
\begin{equation}
\label{eq:heldout-bias-product}
\|b_{\mathrm{test}}\|_{\mathcal H_{\hat\theta}}
\le a_{\pi,\mathrm{test}}b_{\mu,\mathrm{test}}.
\end{equation}
$a_{\pi,\mathrm{test}}b_{\mu,\mathrm{test}}=o_{\mathbb P}(m^{-1/2})$ is therefore sufficient for condition 3.

\begin{proof}[Proof of \Cref{thm:heldout-validity}]Write $b=b_{\mathrm{test}}$ and $v=v_{\mathrm{test}}$. All inner products and norms are in $\mathcal H_{\hat\theta}$.

Let $V_j:=V(O_j)$ for $j\in\mathcal I_2$, and define
\begin{align*}
\widehat D^{\mathrm c}
&:=\frac1m\sum_{j\in\mathcal I_2}V_j
=\langle\bar\xi_1,\bar\xi_2\rangle,\\
(\hat s^{\mathrm c})^2
&:=\frac1m\sum_{j\in\mathcal I_2}V_j^2-(\widehat D^{\mathrm c})^2,
\qquad
T^{\mathrm c}:=\frac{\sqrt m\widehat D^{\mathrm c}}{\hat s^{\mathrm c}}.
\end{align*}
Set $T^{\mathrm c}=0$ when $\hat s^{\mathrm c}=0$.

$V_j$ are i.i.d., conditional on $\mathcal G_m$, with mean zero and variance $v$. On $\{v>0\}$, the fourth-moment ratio in condition 2 is the Lyapunov condition, so $\sqrt m\widehat D^{\mathrm c}/\sqrt v$ has a conditional standard normal limit. The sample-variance identity above and conditional Chebyshev and Markov inequalities give, for every $\delta>0$,
\[
\mathbb P\left(
\left|\frac{(\hat s^{\mathrm c})^2}{v}-1\right|>\delta
\middle|\mathcal G_m
\right)
\le\frac{4\mathbb E_{\mathbb P_0}[V(O)^4\mid\mathcal G_m]}{\delta^2 m v^2}
+\frac{2}{m\delta}
\xrightarrow{\mathbb P}0.
\]
Here the two bounds control the sample second moment and the squared sample mean, using $\mathbb E_{\mathbb P_0}[(\widehat D^{\mathrm c})^2/v\mid\mathcal G_m]=1/m$. Applying Slutsky's theorem conditionally gives
\[
d_m:=\sup_{t\in\mathbb R}
\left|\mathbb P(T^{\mathrm c}\le t\mid\mathcal G_m)-\Phi(t)\right|
\xrightarrow{\mathbb P}0.
\]
Since $\mathbb P(v>0)\to1$ and $(\hat s^{\mathrm c})^2/v\xrightarrow{\mathbb P}1$, we have $\mathbb P(\hat s^{\mathrm c}>0)\to1$. On $\{\hat s^{\mathrm c}>0\}$, $(m v)^{-1}=O_{\mathbb P}(1)$ also gives $(\hat s^{\mathrm c})^{-1}=O_{\mathbb P}(\sqrt m)$.

Conditional independence and centring give
\[
\mathbb E_{\mathbb P_0}[\|\bar\xi_\ell\|^2\mid\mathcal F_{\mathrm{learn}}]
=\frac1m\mathbb E_{\mathbb P_0}[\|\xi(O)\|^2\mid\mathcal F_{\mathrm{learn}}]
=O_{\mathbb P}(m^{-1}).
\]
By condition 1 and conditional Markov's inequality,
\[
\|\bar\xi_\ell\|=O_{\mathbb P}(m^{-1/2}),
\qquad
\frac1m\sum_{j\in\mathcal I_2}\|\xi(O_j)\|^2=O_{\mathbb P}(1).
\]
Since the fitted half-sample means are $\bar\xi_\ell+b$, condition 3 and Cauchy-Schwarz give
\begin{align*}
\left|\widehat D_{\hat\theta}^{\mathrm{cross}}-\widehat D^{\mathrm c}\right|
&=\left|\langle b,\bar\xi_1+\bar\xi_2\rangle+\|b\|^2\right|\\
&\le\|b\|\bigl(\|\bar\xi_1\|+\|\bar\xi_2\|\bigr)+\|b\|^2
=o_{\mathbb P}(m^{-1}).
\end{align*}
Applying the reverse triangle inequality to the residual vectors and Cauchy-Schwarz gives
\begin{align*}
|\hat s_{\hat\theta}-\hat s^{\mathrm c}|
&\le\|b\|\left\{
\frac1m\sum_{j\in\mathcal I_2}\|\xi(O_j)-\bar\xi_2\|^2
\right\}^{1/2}\\
&\le\|b\|\left\{
\frac1m\sum_{j\in\mathcal I_2}\|\xi(O_j)\|^2
\right\}^{1/2}
=o_{\mathbb P}(m^{-1/2}).
\end{align*}
On $\{\hat s^{\mathrm c}>0\}$, combining these bounds with $(\hat s^{\mathrm c})^{-1}=O_{\mathbb P}(\sqrt m)$ yields
\[
\frac{\sqrt m(\widehat D_{\hat\theta}^{\mathrm{cross}}-\widehat D^{\mathrm c})}{\hat s^{\mathrm c}}
=o_{\mathbb P}(1),
\qquad
\frac{\hat s_{\hat\theta}}{\hat s^{\mathrm c}}\xrightarrow{\mathbb P}1.
\]
Hence $\mathbb P(\hat s_{\hat\theta}>0)\to1$. Since $T^{\mathrm c}=O_{\mathbb P}(1)$, the fitted statistic satisfies $T_{\hat\theta}-T^{\mathrm c}=o_{\mathbb P}(1)$.

Since $0\le d_m\le1$ and $d_m\xrightarrow{\mathbb P}0$, we have $\mathbb E[d_m]\to0$. Therefore $\mathbb E[d_m\mid\mathcal F_{\mathrm{learn}}]\xrightarrow{\mathbb P}0$. For any fixed $\delta>0$,
\[
q_m(\delta):=\mathbb P\left(
|T_{\hat\theta}-T^{\mathrm c}|>\delta
\mid\mathcal F_{\mathrm{learn}}\right)
\xrightarrow{\mathbb P}0,
\]
by Markov's inequality, since $\mathbb E[q_m(\delta)]=\mathbb P(|T_{\hat\theta}-T^{\mathrm c}|>\delta)\to0$.

Averaging over the first test half gives the bound $\mathbb E[d_m\mid\mathcal F_{\mathrm{learn}}]$ for the normal-approximation error of $T^{\mathrm c}$. The fitted and centred statistics differ by at most $\delta$, except with conditional probability $q_m(\delta)$. Comparing their distribution functions at $t-\delta$ and $t+\delta$ gives
\begin{align*}
&\sup_{t\in\mathbb R}
\left|\mathbb P(T_{\hat\theta}\le t\mid\mathcal F_{\mathrm{learn}})-\Phi(t)\right|\\
&\quad\le\mathbb E[d_m\mid\mathcal F_{\mathrm{learn}}]
+q_m(\delta)+\frac{\delta}{\sqrt{2\pi}},
\end{align*}
where the last term follows because the standard normal density is bounded by $1/\sqrt{2\pi}$. Letting $m\to\infty$ and then $\delta\to0$ proves \eqref{eq:heldout-null-limit}. The left-hand side is bounded by one, so taking expectations gives the unconditional normal limit and asymptotic level $\alpha$. This argument also covers a fixed kernel by taking $\hat\theta=\theta$ deterministically and retaining nuisance-fitting data and randomness in $\mathcal F_{\mathrm{learn}}$.
\end{proof}

\subsection{Uniform Control of the Learning Criterion}\label{sec:uniform-criterion-theory}

We prove \Cref{thm:feasible-criterion-uniform} by separating sampling error from nuisance error. Throughout this subsection, expectations, variances and norms involving fitted functions are conditional on $\mathcal F_{\mathrm{fit}}$, the data and randomness used to fit the nuisance estimators. This fixes the fitted functions for every $\theta$. Write $n$ for the fitting-sample size and $m$ for the size of each selection half. The two halves are mutually independent i.i.d. samples from $\mathbb P_0$ and are independent of $\mathcal F_{\mathrm{fit}}$. For a separable Hilbert space $\mathcal H$, let $\|f\|_{L^2(\mathbb P_0;\mathcal H)}^2:=\mathbb E_0\|f(O)\|_{\mathcal H}^2$.

Write $\mathcal O:=\mathcal X\times\mathcal A\times\mathcal Y$. For the ensemble in \eqref{eq:deep-kernel}, let $\Theta=\Xi^M$, where $\Xi\subseteq\mathbb R^{d_0}$ is the member parameter set and $d_\Theta=Md_0$. For the memberwise optimisation in \Cref{alg:dr-dk-causal-mmd}, we apply the learning bound with $M=1$, $\Theta=\Xi$ and $d_\Theta=d_0$. For $\vartheta\in\Xi$, write $k_\vartheta^{\mathrm{deep}}$ for the learned component and $w_\vartheta$ for its mixture weight, with $w_{\vartheta_b}=w_b$.

Let $\widetilde g_\theta:=g_\theta(\cdot;\hat\pi_0,\hat\mu_\theta)$ denote the fitted pseudo-feature. Its conditional population quantities are
\begin{align*}
\widetilde\Delta_\theta&:=\mathbb E_0[\widetilde g_\theta(O)],
\qquad
\widetilde D_\theta:=\|\widetilde\Delta_\theta\|_{\mathcal H_\theta}^2,\\
\widetilde s_\theta^2&:=\operatorname{Var}_{\mathbb P_0}
\bigl(\langle\widetilde\Delta_\theta,\widetilde g_\theta(O)\rangle_{\mathcal H_\theta}\bigr).
\end{align*}
A subscript $\mathrm{or}$ denotes an empirical quantity computed with oracle pseudo-features.

\paragraph{Measurable versions.}
We use the common response kernel from Proposition~\ref{prop:soft-intervention-identification}:
\begin{equation}
\label{eq:conditional-outcome-embedding}
\mu_\theta(a,x)=\int_{\mathcal Y}\varphi_\theta(y)Q(\diff y\mid a,x).
\end{equation}
The measurable null-set convention in \Cref{sec:soft-intervention-identification} applies whenever only first moments are assumed. Under the uniform diagonal bound below, the integral exists for every $(\theta,a,x)$ and satisfies $\|\mu_\theta(a,x)\|_{\mathcal H_\theta}\le\sqrt\kappa$; no parameter-dependent exceptional sets are needed.

Write the estimators from \Cref{chapter:method} in terms of the fitted observed-data kernel $\widetilde h_\theta(o,o'):=\langle\widetilde g_\theta(o),\widetilde g_\theta(o')\rangle_{\mathcal H_\theta}$. Set $U_{\theta,j}:=m^{-1}\sum_{i\in\mathcal I_1}\widetilde h_\theta(O_i,O_j)$. Then
\begin{equation}
\label{eq:cross-mmd-estimator}
\widehat D_\theta^{\mathrm{cross}}
=\frac1m\sum_{j\in\mathcal I_2}U_{\theta,j},
\end{equation}
and
\begin{equation}
\label{eq:cross-variance-estimator}
\hat s_\theta^2
=\frac1m\sum_{j\in\mathcal I_2}U_{\theta,j}^2
-\bigl(\widehat D_\theta^{\mathrm{cross}}\bigr)^2.
\end{equation}

\begin{assum}[Selection regularity]\label{assum:selection-regularity}
The set $\Theta\subset\mathbb R^{d_\Theta}$ is compact and lies in a Euclidean ball of radius $R_\Theta$. There are fixed $\kappa,W,A,B_\mu,L<\infty$ such that, almost surely over the nuisance-fitting sample, $k_\theta(y,y)\le\kappa$ for every $(\theta,y)$ and, outside one common $(\mathbb P_X\otimes\rho)$-null set, whenever $|r(a,x)|>0$, $\pi_0(a\mid x),\hat\pi_0(a\mid x)>0$ and, for every $\theta\in\Theta$,
\begin{equation}
\label{eq:main-selection-regularity}
 \frac{|r(a,x)|}{\pi_0(a\mid x)}\le W,
\quad
\left|\frac{\pi_0(a\mid x)}{\hat\pi_0(a\mid x)}-1\right|\le A,
\quad
\|\hat\mu_\theta(a,x)\|_{\mathcal H_\theta}\le B_\mu.
\end{equation}
For every $\theta$, $\hat\mu_\theta$ and $\widetilde g_\theta$ are strongly measurable. Moreover, $\widetilde h_\theta(o,o'):=\langle\widetilde g_\theta(o),\widetilde g_\theta(o')\rangle_{\mathcal H_\theta}$ is jointly measurable and, almost surely over the nuisance-fitting sample, on a common $\mathbb P_0^2$-full set,
\begin{equation}
\label{eq:main-fitted-class-condition}
|\widetilde h_\theta(o,o')-\widetilde h_{\theta'}(o,o')|
\le L\|\theta-\theta'\|_2.
\end{equation}
\end{assum}

Conditional on $\mathcal F_{\mathrm{fit}}$, define the nuisance errors
\begin{equation}
\label{eq:primitive-nuisance-errors}
\begin{aligned}
\epsilon_{\pi,n}^2
&:=
\mathbb E_0\left[
\int_{\mathcal A}|r(a,X)|
\left\{
\frac{\pi_0(a\mid X)}{\hat\pi_0(a\mid X)}-1
\right\}^{2}
\rho(\diff a)
\right],\\
\epsilon_{\mu,n}^2
&:=
\sup_{\theta\in\Theta}
\mathbb E_0\left[
\int_{\mathcal A}|r(a,X)|
\|\hat\mu_\theta(a,X)-\mu_\theta(a,X)\|_{\mathcal H_\theta}^2
\rho(\diff a)
\right].
\end{aligned}
\end{equation}
We assume that $\epsilon_{\mu,n}^2$ is measurable with respect to $\mathcal F_{\mathrm{fit}}$.

For $\lambda>0$, the empirical and fitted-population criteria are continuous on compact $\Theta$ under the measurability and Lipschitz assumptions on $\widetilde h_\theta$. We can take the supremum of their absolute difference over a countable dense subset of $\Theta$, so it is measurable. Measurability of the error relative to $J_{\lambda,\theta}$ follows from continuity of $f\mapsto\sup_{\theta\in\Theta}|f(\theta)-J_{\lambda,\theta}|$ on $C(\Theta)$. This map is Lipschitz in the supremum norm because $J_{\lambda,\theta}$ is deterministic and bounded.

For a failure probability $\eta\in(0,1)$, define the kernel-class sampling term
\begin{equation}
\label{eq:kernel-class-sampling-term}
c_m(\eta)
:=
\frac{
\sqrt{d_\Theta\log(1+2R_\Theta\sqrt m)+\log(4/\eta)}+1
}{\sqrt m}.
\end{equation}

\subsubsection{Kernel-Class Regularity}

We give sufficient conditions for regularity of $k_\theta$, the oracle kernel $h_\theta$ and the fitted kernel $\widetilde h_\theta$.

For $o,o'\in\mathcal O$, define the oracle induced observed-data kernel
\begin{equation}
\label{eq:induced-causal-kernel}
h_\theta(o,o')
:=
\langle
g_\theta^{\pi,\pi'}(o),
g_\theta^{\pi,\pi'}(o')
\rangle_{\mathcal H_\theta}.
\end{equation}

\begin{assum}[Oracle induced-kernel class]\label{assum:oracle-induced-class}
The parameter set $\Theta$ is compact and contained in the Euclidean ball of radius $R_\Theta$. There exist a measurable set $\mathcal O_0\subseteq\mathcal O$ with $\mathbb P_0(\mathcal O_0)=1$ and constants $B_g,L_h<\infty$ such that, simultaneously for every $o,o'\in\mathcal O_0$ and $\theta,\theta'\in\Theta$,
\[
\|g_\theta^{\pi,\pi'}(o)\|_{\mathcal H_\theta}\le B_g,
\qquad
|h_\theta(o,o')-h_{\theta'}(o,o')|
\le
L_h\|\theta-\theta'\|_2.
\]
The scalar map $(\theta,o,o')\mapsto h_\theta(o,o')$ is jointly measurable on $\Theta\times\mathcal O_0^2$. Consequently, $|h_\theta(o,o')|\le B_g^2$ on this set; write $B_h:=B_g^2$.
\end{assum}

\begin{rem}[Sufficient conditions for neural kernels]
The uniform learning bound requires the outcome kernel to be bounded and Lipschitz in its parameters. For a Gaussian kernel on learned features, these properties hold if the feature map is uniformly Lipschitz in its parameters and the bandwidth is Lipschitz and bounded away from zero (a fixed positive bandwidth is a special case). For a feedforward network of fixed architecture, uniform parameter Lipschitzness follows, for example, from bounded inputs and parameters and Lipschitz activations; see \citet[Propositions 21 and 23]{liu2021learningdeepkernelsnonparametric}. These conditions control the neural component. Lemma~B.7 extends them to the kernel mixture and ensemble, while Propositions~B.8 and B.9 give the additional conditions needed for the oracle and fitted observed-data kernels.
\end{rem}

\begin{lem}[Regularity of the deep-kernel ensemble]\label{lem:additive-kernel-regularity}
Suppose there exist finite constants
\[
K_{\mathrm{deep}},\ K_{\mathrm{char}},\
L_{\mathrm{deep}},\ L_{\mathrm{char}},\ L_w<\infty
\]
such that, for all $\vartheta,\vartheta'\in\Xi$ and $y,y'\in\mathcal Y$,
\[
|k_\vartheta^{\mathrm{deep}}(y,y')|\le K_{\mathrm{deep}},
\qquad
|k_\vartheta^{\mathrm{char}}(y,y')|\le K_{\mathrm{char}},
\]
\[
|k_\vartheta^{\mathrm{deep}}(y,y')-k_{\vartheta'}^{\mathrm{deep}}(y,y')|
\le L_{\mathrm{deep}}\|\vartheta-\vartheta'\|_2,
\qquad
|k_\vartheta^{\mathrm{char}}(y,y')-k_{\vartheta'}^{\mathrm{char}}(y,y')|
\le L_{\mathrm{char}}\|\vartheta-\vartheta'\|_2,
\]
and
\[
|w_{\vartheta}-w_{\vartheta'}|
\le L_w\|\vartheta-\vartheta'\|_2.
\]
Writing $L_0:=L_{\mathrm{deep}}+L_{\mathrm{char}}+L_w(K_{\mathrm{deep}}+K_{\mathrm{char}})$, for $\theta=(\vartheta_1,\ldots,\vartheta_M)$ and $\theta'=(\vartheta'_1,\ldots,\vartheta'_M)$,
\[
|k_\theta(y,y')-k_{\theta'}(y,y')|
\le
\frac{L_0}{\sqrt M}\|\theta-\theta'\|_2.
\]
Moreover,
\[
\sup_{\theta,y}k_\theta(y,y)
\le
\max\{K_{\mathrm{deep}},K_{\mathrm{char}}\}.
\]
\end{lem}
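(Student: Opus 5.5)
We prove the Lipschitz bound one member at a time and then aggregate over the ensemble using Cauchy--Schwarz. Fix $y,y'$ and a member parameter pair $\vartheta,\vartheta'\in\Xi$. By \eqref{eq:deep-kernel},
\[
k_\vartheta(y,y')=(1-w_\vartheta)k_\vartheta^{\mathrm{deep}}(y,y')+w_\vartheta k_\vartheta^{\mathrm{char}}(y,y').
\]
We add and subtract the mixed terms $(1-w_\vartheta)k_{\vartheta'}^{\mathrm{deep}}$ and $w_\vartheta k_{\vartheta'}^{\mathrm{char}}$. This gives
\[
k_\vartheta-k_{\vartheta'}
=(1-w_\vartheta)\bigl(k_\vartheta^{\mathrm{deep}}-k_{\vartheta'}^{\mathrm{deep}}\bigr)
+w_\vartheta\bigl(k_\vartheta^{\mathrm{char}}-k_{\vartheta'}^{\mathrm{char}}\bigr)
+(w_{\vartheta'}-w_\vartheta)k_{\vartheta'}^{\mathrm{deep}}
+(w_\vartheta-w_{\vartheta'})k_{\vartheta'}^{\mathrm{char}}.
\]
Since $w_\vartheta\in[w_{\min},1]$, both $1-w_\vartheta$ and $w_\vartheta$ lie in $[0,1]$. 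The first two terms are therefore bounded by $L_{\mathrm{deep}}\|\vartheta-\vartheta'\|_2$ and $L_{\mathrm{char}}\|\vartheta-\vartheta'\|_2$. The last two are bounded by $L_w(K_{\mathrm{deep}}+K_{\mathrm{char}})\|\vartheta-\vartheta'\|_2$. Summing gives the member bound $|k_\vartheta(y,y')-k_{\vartheta'}(y,y')|\le L_0\|\vartheta-\vartheta'\|_2$.

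For the ensemble, with $\theta=(\vartheta_1,\ldots,\vartheta_M)$ and $\theta'=(\vartheta'_1,\ldots,\vartheta'_M)$, the triangle inequality and the member bound give
\[
|k_\theta(y,y')-k_{\theta'}(y,y')|\le\frac{L_0}{M}\sum_{b=1}^M\|\vartheta_b-\vartheta'_b\|_2 .
\]
Cauchy--Schwarz in $\mathbb R^M$ gives $\sum_b\|\vartheta_b-\vartheta'_b\|_2\le\sqrt M\bigl(\sum_b\|\vartheta_b-\vartheta'_b\|_2^2\bigr)^{1/2}=\sqrt M\|\theta-\theta'\|_2$. The right-hand side is therefore at most $(L_0/\sqrt M)\|\theta-\theta'\|_2$, as claimed.

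For the diagonal bound, $k_\vartheta(y,y)$ is a convex combination of $k_\vartheta^{\mathrm{deep}}(y,y)\le K_{\mathrm{deep}}$ and $k_\vartheta^{\mathrm{char}}(y,y)\le K_{\mathrm{char}}$, with weights $1-w_\vartheta$ and $w_\vartheta$. It is therefore at most $\max\{K_{\mathrm{deep}},K_{\mathrm{char}}\}$. Averaging over $b$ preserves this bound, and taking the supremum over $\theta$ and $y$ finishes the proof.

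No step is genuinely hard. The only point requiring care is the add-and-subtract decomposition, which must be arranged so that the weight factors multiplying the kernel differences stay in $[0,1]$ and the weight differences multiply bounded kernel values. The $\sqrt M$ gain comes from Cauchy--Schwarz rather than a crude triangle bound, so it is essential to measure the ensemble parameter in the Euclidean norm on $\Xi^M$.
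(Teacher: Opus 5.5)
Your proposal is correct and matches the paper's proof: the same add-and-subtract for a single member, followed by averaging over members with Cauchy--Schwarz in $\mathbb R^M$, and convexity within each member for the diagonal bound. Your two weight-difference terms are just the paper's single term $(w_\vartheta-w_{\vartheta'})(k^{\mathrm{char}}_{\vartheta'}-k^{\mathrm{deep}}_{\vartheta'})$ split in two, and they give the same constant $L_0$.
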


\begin{proof}
For one member, add and subtract $(1-w_{\vartheta})k_{\vartheta'}^{\mathrm{deep}}+w_{\vartheta}k_{\vartheta'}^{\mathrm{char}}$. Since $0\le w_{\vartheta}\le1$,
\begin{align*}
&\big|
(1-w_{\vartheta})k_\vartheta^{\mathrm{deep}}
+w_{\vartheta}k_\vartheta^{\mathrm{char}}
-(1-w_{\vartheta'})k_{\vartheta'}^{\mathrm{deep}}
-w_{\vartheta'}k_{\vartheta'}^{\mathrm{char}}
\big|\\
&\le
(1-w_{\vartheta})|k_\vartheta^{\mathrm{deep}}-k_{\vartheta'}^{\mathrm{deep}}|
+
w_{\vartheta}|k_\vartheta^{\mathrm{char}}-k_{\vartheta'}^{\mathrm{char}}|
+
|w_{\vartheta}-w_{\vartheta'}|
|k_{\vartheta'}^{\mathrm{char}}-k_{\vartheta'}^{\mathrm{deep}}|\\
&\le
L_0\|\vartheta-\vartheta'\|_2.
\end{align*}
Average this bound over members and apply Cauchy-Schwarz:
\[
|k_\theta-k_{\theta'}|
\le
\frac{L_0}{M}\sum_{b=1}^M
\|\vartheta_b-\vartheta'_b\|_2
\le
\frac{L_0}{\sqrt M}\|\theta-\theta'\|_2.
\]
The diagonal bound follows first from convexity within each member and then from averaging the members.
\end{proof}

\begin{prop}[Primitive conditions for the induced kernel]\label{prop:induced-kernel-regularity}
Suppose $\Theta$ is compact and contained in a Euclidean ball of finite radius, \Cref{assum:identification} holds, the map $(\theta,y,y')\mapsto k_\theta(y,y')$ is jointly measurable, and, for finite $\kappa,L_k$,
\[
\sup_{\theta\in\Theta}\sup_{y\in\mathcal Y}k_\theta(y,y)\le\kappa,
\qquad
\sup_{y,y'\in\mathcal Y}|k_\theta(y,y')-k_{\theta'}(y,y')|
\le
L_k\|\theta-\theta'\|_2.
\]
Let
\[
W
:=
\operatorname*{ess\ sup}_{\mathbb P_X\otimes\rho}
\frac{|r(a,x)|}{\pi_0(a\mid x)}.
\]
If $W<\infty$, then there exists a measurable set $\mathcal O_0\subseteq\mathcal O$ with $\mathbb P_0(\mathcal O_0)=1$ such that
\[
\sup_{\theta\in\Theta}
\|g_\theta^{\pi,\pi'}(o)\|_{\mathcal H_\theta}
\le
B_g:=2(W+1)\sqrt\kappa
\qquad\text{for every }o\in\mathcal O_0,
\]
and the induced kernels in \eqref{eq:induced-causal-kernel} satisfy \Cref{assum:oracle-induced-class} with
\[
B_h=B_g^2=4(W+1)^2\kappa,
\qquad
L_h=4(W+1)^2L_k.
\]
\end{prop}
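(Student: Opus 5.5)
I would prove \Cref{prop:induced-kernel-regularity} by a direct, pointwise bound on the oracle pseudo-feature, followed by a Lipschitz estimate obtained through a common feature representation of $k_\theta$ and $k_{\theta'}$.

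\textbf{Exceptional set and the norm bound.} First I fix the exceptional set. Let $N\subseteq\mathcal X\times\mathcal A$ be the $(\mathbb P_X\otimes\rho)$-null set outside which $|r(a,x)|\le W\pi_0(a\mid x)$. Its image under $\mathbb P_{A,X}$ is null, because $\mathbb P_{A,X}\ll\mathbb P_X\otimes\rho$ with density $\pi_0$. Take $\mathcal O_0:=\{(x,a,y):(a,x)\notin N,\ x\in\mathcal X_0\}$, where $\mathcal X_0$ is the set of $x$ with $\rho(\{a:(a,x)\in N\})=0$; by Fubini, $\mathbb P_X(\mathcal X_0)=1$. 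Under the diagonal bound, $\mu_\theta(a,x)$ exists everywhere with $\|\mu_\theta(a,x)\|\le\sqrt\kappa$, and $\|\varphi_\theta(y)\|=\sqrt{k_\theta(y,y)}\le\sqrt\kappa$. For $o\in\mathcal O_0$ the first term of \eqref{eq:generic-dr-feature} therefore has norm at most $W\cdot2\sqrt\kappa$. The second term has norm at most $\sqrt\kappa\int|r(a,x)|\rho(\diff a)\le\sqrt\kappa\int(\pi+\pi')\,\diff\rho=2\sqrt\kappa$. Together these give $\|g_\theta(o)\|\le 2W\sqrt\kappa+2\sqrt\kappa=B_g$.

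\textbf{Lipschitz bound.} Directly comparing elements of different RKHSs is not possible, so I write $h_\theta$ purely in terms of kernel evaluations. Expanding the inner product, $h_\theta(o,o')$ is a finite combination of the following terms:
\begin{itemize}
\item $w w' k_\theta(y,y')$, where $w=r(a,x)/\pi_0(a\mid x)$;
\item $w\int k_\theta(y,\tilde y)\,Q(\diff\tilde y\mid a',x')$ and its symmetric counterpart;
\item double integrals $\iint k_\theta(\tilde y,\tilde y')\,Q(\diff\tilde y\mid a,x)\,Q(\diff\tilde y'\mid a',x')$;
\item the analogous terms in which $(a,x)$ is integrated against $r(\cdot,x)\,\rho$.
\end{itemize}
Grouping these terms, $h_\theta(o,o')=\iint k_\theta(u,u')\,\nu_o(\diff u)\,\nu_{o'}(\diff u')$, where $\nu_o:=w\,\delta_y-w\,Q(\cdot\mid a,x)+\int r(\tilde a,x)\,Q(\cdot\mid\tilde a,x)\,\rho(\diff\tilde a)$ is a finite signed measure on $\mathcal Y$. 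This identity is justified by Bochner–Fubini, since all integrals converge absolutely under the diagonal bound. Its total variation satisfies $\|\nu_o\|_{TV}\le 2W+2=2(W+1)$ on $\mathcal O_0$. Hence
\[
|h_\theta(o,o')-h_{\theta'}(o,o')|\le\|\nu_o\|_{TV}\|\nu_{o'}\|_{TV}\sup_{u,u'}|k_\theta(u,u')-k_{\theta'}(u,u')|\le4(W+1)^2L_k\|\theta-\theta'\|_2 .
\]
The same representation gives $|h_\theta|\le 4(W+1)^2\kappa$, consistent with $B_h=B_g^2$.

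\textbf{Measurability.} Joint measurability of $(\theta,o,o')\mapsto h_\theta(o,o')$ follows from the signed-measure representation. The integrand $k_\theta(u,u')$ is jointly measurable in $(\theta,u,u')$, and $Q$ and $r$ are measurable kernels, so each term is measurable by Tonelli/Fubini applied to the positive and negative parts.

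\textbf{Main obstacle.} The expected difficulty is this measurability step together with handling the null sets uniformly in $\theta$. The point is that $\mathcal O_0$ must not depend on $\theta$; this holds here because the only exceptional set comes from the ratio bound, which is $\theta$-free, and the boundedness of $k_\theta$ removes the parameter-dependent null sets of \Cref{sec:soft-intervention-identification}.
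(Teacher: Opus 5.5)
Your proposal is correct and follows essentially the same route as the paper. Both arguments use the same $\theta$-free exceptional set coming from the overlap bound, the same pointwise envelope $2(W+1)\sqrt\kappa$, and the same $\theta$-independent signed measure (your $\nu_o$ is the paper's $M_o$) with total variation at most $2(W+1)$; that one representation gives the Lipschitz constant $4(W+1)^2L_k$, the bound $|h_\theta|\le B_g^2$, and joint measurability.
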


\begin{proof}
Choose a common full-measure set on which the contrast-overlap bound defining $W$ holds and whose $x$-sections have full $\rho$-measure, and let $\mathcal O_0$ be the corresponding full-measure subset of observed triples.
Positive definiteness and the diagonal bound imply
\[
|k_\theta(y,y')|
\le
\sqrt{k_\theta(y,y)k_\theta(y',y')}
\le\kappa.
\]
Hence $\|\varphi_\theta(y)\|_{\mathcal H_\theta}\le\sqrt\kappa$ and, by Jensen's inequality,
$\|\mu_\theta(a,x)\|_{\mathcal H_\theta}\le\sqrt\kappa$. Since
$\int|r(a,x)|\rho(\diff a)\le2$,
\begin{align*}
\|g_\theta^{\pi,\pi'}(o)\|_{\mathcal H_\theta}
&\le
\frac{|r(a,x)|}{\pi_0(a\mid x)}
\|\varphi_\theta(y)-\mu_\theta(a,x)\|_{\mathcal H_\theta}
+
\int|r(\widetilde a,x)|
\|\mu_\theta(\widetilde a,x)\|_{\mathcal H_\theta}
\rho(\diff\widetilde a)\\
&\le
2W\sqrt\kappa+2\sqrt\kappa
=
2(W+1)\sqrt\kappa.
\end{align*}
Cauchy-Schwarz gives $|h_\theta(o,o')|\le B_g^2=B_h$.

To compare $h_\theta$ across $\theta$, define for $o=(x,a,y)$ the finite signed measure
\begin{align*}
M_o(B)
:={}&
\frac{r(a,x)}{\pi_0(a\mid x)}
\{\mathbbm 1(y\in B)-Q(B\mid a,x)\}\\
&+
\int_{\mathcal A}r(\widetilde a,x)
Q(B\mid\widetilde a,x)
\rho(\diff\widetilde a).
\end{align*}
The common kernel $Q$ makes $M_o$ independent of $\theta$. Its total variation satisfies $\|M_o\|_{\mathrm{TV}}\le2(W+1)$. Moreover,
\[
g_\theta^{\pi,\pi'}(o)
=
\int_{\mathcal Y}\varphi_\theta(u)M_o(\diff u),
\]
so
\[
h_\theta(o,o')
=
\int_{\mathcal Y}\int_{\mathcal Y}
k_\theta(u,v)M_o(\diff u)M_{o'}(\diff v).
\]
Measurability of $Q$ and joint measurability of the kernel family imply that $(\theta,o,o')\mapsto h_\theta(o,o')$ is jointly measurable on $\Theta\times\mathcal O_0^2$.
Therefore
\begin{align*}
|h_\theta(o,o')-h_{\theta'}(o,o')|
&\le
\|M_o\|_{\mathrm{TV}}\|M_{o'}\|_{\mathrm{TV}}
\sup_{u,v}|k_\theta(u,v)-k_{\theta'}(u,v)|\\
&\le
4(W+1)^2L_k\|\theta-\theta'\|_2.
\end{align*}
\end{proof}

\begin{prop}[Primitive conditions for the fitted CME family]\label{prop:fitted-cme-regularity}
Condition on $\mathcal F_{\mathrm{fit}}$. Retain the compact parameter set and overlap and fitted propensity-ratio bounds of \Cref{assum:selection-regularity}, with a measurable $\hat\pi_0$ independent of $\theta$. Suppose $k_\theta$ is jointly measurable, has diagonal bound $\kappa$ and uniform parameter Lipschitz constant $L_k$ as in Proposition~\ref{prop:induced-kernel-regularity}, and each $\varphi_\theta$ is strongly measurable. Let
\[
\hat\mu_\theta(a,x)=\sum_{i=1}^n\beta_i(a,x)\varphi_\theta(Y_i),
\qquad
\sup_{a,x}\sum_{i=1}^n|\beta_i(a,x)|\le B_\beta,
\]
where the fitting outcomes $Y_i$ and measurable coefficient functions $\beta_i$ are fixed conditional on $\mathcal F_{\mathrm{fit}}$ and the coefficients do not depend on $\theta$. Suppose these bounds hold almost surely with fixed constants. Then \Cref{assum:selection-regularity} holds with
\[
B_\mu=B_\beta\sqrt\kappa,\qquad
L=C_\beta^2L_k,\qquad
C_\beta:=W(1+A)(1+B_\beta)+2B_\beta,
\]
and $|\widetilde h_\theta(o,o')|\le\kappa C_\beta^2$ on a common $\mathbb P_0^2$-full set.
\end{prop}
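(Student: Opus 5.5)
The plan is to repeat the signed-measure argument from the proof of Proposition~\ref{prop:induced-kernel-regularity}, replacing the common response kernel $Q$ by the discrete measure carried by the fitting outcomes. Conditional on $\mathcal F_{\mathrm{fit}}$, the points $Y_1,\ldots,Y_n$ and the coefficient functions $\beta_i$ are fixed and do not depend on $\theta$. For $o=(x,a,y)$ with $|r(a,x)|>0$, I will define the finite signed measure
\[
\widetilde M_o
:=\frac{r(a,x)}{\hat\pi_0(a\mid x)}\Bigl\{\delta_y-\sum_{i=1}^n\beta_i(a,x)\delta_{Y_i}\Bigr\}
+\sum_{i=1}^n\Bigl(\int_{\mathcal A}r(\widetilde a,x)\beta_i(\widetilde a,x)\rho(\diff\widetilde a)\Bigr)\delta_{Y_i}.
\]
Following the paper's convention, the first term is set to zero where $r=0$. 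Substituting $\hat\mu_\theta$ into \eqref{eq:generic-dr-feature} and using linearity of the finite sums and of the Bochner integral then gives $\widetilde g_\theta(o)=\int\varphi_\theta(u)\,\widetilde M_o(\diff u)$, where $\widetilde M_o$ does not depend on $\theta$.

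Next I will bound the total variation of $\widetilde M_o$. Outside the common null set of \Cref{assum:selection-regularity}, I write
\[
\frac{|r|}{\hat\pi_0}=\frac{|r|}{\pi_0}\cdot\frac{\pi_0}{\hat\pi_0}\le W(1+A),
\]
and the bracket has total variation at most $1+\sum_i|\beta_i(a,x)|\le 1+B_\beta$. For the last term, $\sum_i\int|r(\widetilde a,x)|\,|\beta_i(\widetilde a,x)|\,\rho(\diff\widetilde a)\le B_\beta\int|r|\,\diff\rho\le 2B_\beta$. Together these give $\|\widetilde M_o\|_{\mathrm{TV}}\le C_\beta$.

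The three conclusions then follow as in Proposition~\ref{prop:induced-kernel-regularity}:
\[
\widetilde h_\theta(o,o')=\int\!\!\int k_\theta(u,v)\,\widetilde M_o(\diff u)\,\widetilde M_{o'}(\diff v).
\]
Since $|k_\theta(u,v)|\le\kappa$ by Cauchy--Schwarz, this yields $|\widetilde h_\theta|\le\kappa C_\beta^2$. Since $\widetilde M_o$ is free of $\theta$, it also yields $|\widetilde h_\theta-\widetilde h_{\theta'}|\le C_\beta^2L_k\|\theta-\theta'\|_2$, so $L=C_\beta^2L_k$. Separately, the triangle inequality gives $\|\hat\mu_\theta(a,x)\|\le\sum_i|\beta_i(a,x)|\sqrt{k_\theta(Y_i,Y_i)}\le B_\beta\sqrt\kappa=B_\mu$. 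The remaining conditions of \Cref{assum:selection-regularity} (compactness, overlap and the ratio bound $A$) are retained by hypothesis.

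The main obstacle is measurability together with the common null set, rather than the bounds themselves. Strong measurability of $\hat\mu_\theta$ follows because it is a finite sum of measurable scalar functions times fixed vectors $\varphi_\theta(Y_i)$. The same holds for $\widetilde g_\theta$, using strong measurability of $\varphi_\theta$ for the $\varphi_\theta(y)$ term and Bochner measurability of the $\rho$-integral.

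For joint measurability of $\widetilde h_\theta(o,o')$, I will expand the double integral explicitly as a finite combination of $k_\theta(y,y')$, $k_\theta(y,Y_j)$, $k_\theta(Y_i,y')$ and $k_\theta(Y_i,Y_j)$. The coefficients are measurable in $(o,o')$ and do not depend on $\theta$. Joint measurability of $(\theta,y,y')\mapsto k_\theta(y,y')$ then gives the claim.

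Finally, the full set is taken as $\mathcal O_0\times\mathcal O_0$, where $\mathcal O_0$ consists of the observed triples whose $(a,x)$ lies outside the common null set of \Cref{assum:selection-regularity}. Because all constants are fixed almost surely over the fitting sample, this set is common to every $\theta$, as required.
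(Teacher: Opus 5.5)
Your proposal is correct and follows the paper's proof essentially step for step. It uses the same $\theta$-free, finitely supported signed measure with total variation at most $C_\beta=W(1+A)(1+B_\beta)+2B_\beta$, and the same integral representations of $\widetilde g_\theta$ and $\widetilde h_\theta$, which give the envelope $\kappa C_\beta^2$ and the Lipschitz constant $C_\beta^2L_k$. The triangle-inequality bound $B_\mu=B_\beta\sqrt\kappa$ and the measurability argument via the finite expansion also match the paper. Your explicit choice of the full set $\mathcal O_0\times\mathcal O_0$ and your handling of the $r=0$ convention spell out details the paper leaves implicit.
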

\begin{proof}
The CME bound follows by the triangle inequality. Define the finite signed measure
\[
\widehat M_o
=\frac{r(a,x)}{\hat\pi_0(a\mid x)}
\left(\delta_y-\sum_i\beta_i(a,x)\delta_{Y_i}\right)
+\sum_i\left\{\int r(t,x)\beta_i(t,x)\rho(\diff t)\right\}\delta_{Y_i},
\]
with the ratio set to zero where $r=0$. Since $|r|/\hat\pi_0\le W(1+A)$ and $\int|r(t,x)|\rho(\diff t)\le2$, its total variation is at most $C_\beta$ on a common full set. It is independent of $\theta$, and
\[
\widetilde g_\theta(o)=\int\varphi_\theta(u)\widehat M_o(\diff u),\qquad
\widetilde h_\theta(o,o')=\iint k_\theta(u,v)\widehat M_o(\diff u)\widehat M_{o'}(\diff v).
\]
These representations give the envelope and Lipschitz bounds by $|k_\theta|\le\kappa$ and the assumed bound on $k_\theta-k_{\theta'}$. The finite expansions and measurable coefficient integrals give strong measurability of the fitted functions and joint measurability of the scalar kernel.
\end{proof}

For kernel ridge regression with a fixed input kernel and ridge, the coefficients $\beta_i(a,x)$ do not depend on $\theta$, although the fitted embedding still changes through $\varphi_\theta$. With the $K+n\lambda_\mu I$ normalisation, a bounded input kernel and $\lambda_\mu$ bounded away from zero give the uniform coefficient bound required above.

\subsubsection{Oracle Concentration and Adaptation}

Let $(\lambda_m)_{m\ge1}$ be a deterministic sequence of positive selection regularisers. Let two independent oracle selection samples each contain $m$ observations. Construct the oracle versions of \eqref{eq:cross-mmd-estimator}, \eqref{eq:cross-variance-estimator}, and \eqref{eq:selection-criterion} by replacing the fitted pseudo-features with $g_\theta^{\pi,\pi'}$, and denote the resulting quantities by
\[
\widehat D_{\theta,\mathrm{or}}^{\mathrm{cross}},
\qquad
\widehat s_{\theta,\mathrm{or}}^2,
\qquad
\widehat J_{\lambda_m,\theta}^{\mathrm{or}}.
\]
For $\delta\in(0,1)$, define
\begin{align}
r_{D,m}(\delta)
&:=
C\left\{B_h c_m(\delta)+\frac{L_h}{\sqrt m}\right\},
\label{eq:rD-definition}\\
r_{s,m}(\delta)
&:=
C\left\{
B_h^2 c_m(\delta)+\frac{B_hL_h}{\sqrt m}
+
\frac{B_h^2}{m}
\right\},
\label{eq:rs-definition}
\end{align}
where $C>0$ is a universal constant. For $s>0$, let
\[
\Theta_s
:=
\{\theta\in\Theta:s_\theta^2\ge s^2\}.
\]
Membership in $\Theta_s$ depends on unknown population variances.

\begin{lem}[Uniform oracle concentration]\label{lem:oracle-concentration}
Under \Cref{assum:identification,assum:embedding-integrability} and \Cref{assum:oracle-induced-class}, with probability at least $1-\delta$,
\[
\sup_{\theta\in\Theta}
|\widehat D_{\theta,\mathrm{or}}^{\mathrm{cross}}-D_\theta|
\le
r_{D,m}(\delta),
\qquad
\sup_{\theta\in\Theta}
|\widehat s_{\theta,\mathrm{or}}^2-s_\theta^2|
\le
r_{s,m}(\delta).
\]
\end{lem}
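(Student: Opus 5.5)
We first reduce both claims to uniform deviation bounds for bounded U- and V-type processes indexed by $\theta$. We then control these by a finite-net argument combined with the Lipschitz property in \Cref{assum:oracle-induced-class}. Throughout, write $h_\theta$ for the oracle induced kernel \eqref{eq:induced-causal-kernel}, with $|h_\theta|\le B_h$ and Lipschitz constant $L_h$ on $\mathcal O_0^2$. Recall $\mathbb E_0 g_\theta(O)=\Delta_\theta$ by Proposition~\ref{prop:dr-representation}, and set $\bar h_\theta(o):=\langle\Delta_\theta,g_\theta(o)\rangle$, so that $\mathbb E_0\bar h_\theta=D_\theta$ and $s_\theta^2=\operatorname{Var}\bar h_\theta$.

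\textbf{Step 1 (fixed $\theta$).} For the cross statistic, we decompose
\[
\widehat D_{\theta,\mathrm{or}}^{\mathrm{cross}}-D_\theta=\langle\bar g_1-\Delta_\theta,\Delta_\theta\rangle+\langle\Delta_\theta,\bar g_2-\Delta_\theta\rangle+\langle\bar g_1-\Delta_\theta,\bar g_2-\Delta_\theta\rangle .
\]
The two linear terms are averages of i.i.d. variables bounded by $B_h$, so Hoeffding gives deviations of order $B_h\sqrt{\log(1/\delta)/m}$. The degenerate cross term is bounded by $\|\bar g_1-\Delta_\theta\|\,\|\bar g_2-\Delta_\theta\|$. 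We control it by a vector-valued (Pinelis/McDiarmid) bound on $\|\bar g_\ell-\Delta_\theta\|\lesssim B_g(1+\sqrt{\log(1/\delta)})/\sqrt m$, which gives $O(B_h\log(1/\delta)/m)$, a lower-order term.

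For the variance, we use \eqref{eq:cross-variance-estimator}:
\[
\hat s_{\theta,\mathrm{or}}^2=\tfrac1m\sum_{j\in\mathcal I_2}U_{\theta,j}^2-(\widehat D^{\mathrm{cross}}_{\theta,\mathrm{or}})^2,
\qquad U_{\theta,j}=\langle\bar g_1,g_\theta(O_j)\rangle .
\]
We compare $U_{\theta,j}$ with $\bar h_\theta(O_j)$ as in the proof of \Cref{prop:fixed-kernel-efficacy}. This gives
\[
\Bigl|\tfrac1m\sum_j(U_{\theta,j}^2-\bar h_\theta(O_j)^2)\Bigr|\le 2B_g^3\|\bar g_1-\Delta_\theta\|.
\]
Hoeffding on $\bar h_\theta^2\le B_h^2$ then handles the main term, and $|\widehat D^2-D^2|\le 2B_h|\widehat D-D|$ handles the squared mean. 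Together these yield a fixed-$\theta$ bound of order $B_h^2\{\sqrt{\log(1/\delta)}+1\}/\sqrt m+B_h^2/m$.

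\textbf{Step 2 (uniformity).} Take an $\varepsilon$-net of $\Theta$ with $\varepsilon=1/\sqrt m$. Its cardinality is at most $(1+2R_\Theta\sqrt m)^{d_\Theta}$. We apply Step 1 at each net point with failure probability $\delta/(4|{\rm net}|)$ for each of the (at most four) concentration events, and take a union bound; this produces exactly the $\sqrt{d_\Theta\log(1+2R_\Theta\sqrt m)+\log(4/\delta)}$ in $c_m(\delta)$. The vector-norm bound is applied on the net too, since $\|\bar g_\ell-\Delta_\theta\|^2$ is itself expressible through $h_\theta$ and is therefore Lipschitz in $\theta$.

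To pass off the net, we use that $\widehat D^{\mathrm{cross}}_{\theta,\mathrm{or}}$ and $D_\theta$ are averages of $h_\theta$ over pairs, so each moves by at most $L_h\|\theta-\theta'\|$. For the same reason $\bar h_\theta(o)=\mathbb E_{O'}h_\theta(o,O')$ and $U_{\theta,j}$ are $L_h$-Lipschitz. It follows that $U^2$, $\bar h^2$ and $D^2$ are $2B_hL_h$-Lipschitz, so $s_\theta^2$ and $\hat s^2_{\theta,\mathrm{or}}$ move by $O(B_hL_h\varepsilon)$. With $\varepsilon=m^{-1/2}$ these give the terms $L_h/\sqrt m$ and $B_hL_h/\sqrt m$ in $r_{D,m}$ and $r_{s,m}$. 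Measurability of the suprema follows from continuity in $\theta$ and restriction to a countable dense set.

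\textbf{Main obstacle.} The delicate step is the dependence between $\bar g_1$ and the $\mathcal I_2$ sample inside the variance estimator. It must be handled through the vector-valued concentration of $\|\bar g_1-\Delta_\theta\|$ uniformly over the net, rather than through scalar Hoeffding. Getting the constants to fold into $B_h^2c_m(\delta)$ is the other point needing care: I would use $B_g^3\le B_h^2$ up to the convention $B_h\ge1$, or otherwise absorb the discrepancy into $C$ together with the $+1$ in $c_m$.
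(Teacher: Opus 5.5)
Your proposal is correct, but it argues differently from the paper. The paper applies McDiarmid to each statistic as a whole. For the numerator, it uses that $\widehat D^{\mathrm{cross}}_{\theta,\mathrm{or}}$ is exactly unbiased for $D_\theta$ (the halves are independent) and has bounded differences $2B_h/m$. For the studentiser, it writes $\widehat s^2_{\theta,\mathrm{or}}=m^{-1}\sum_j Z_j^2$ with $Z_j=\langle\bar G_1,g_\theta(O^{(2)}_j)-\bar G_2\rangle$ and concentrates it around $\mathbb E_0\widehat s^2_{\theta,\mathrm{or}}$ with differences $CB_h^2/m$. It then computes the bias exactly by conditioning on $\bar G_1$ and using the covariance operator $\Sigma$; this is the source of the $B_h^2/m$ term in $r_{s,m}$.

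You instead expand around $\Delta_\theta$. You use scalar Hoeffding for the linear parts and Hilbert-space McDiarmid on $\|\bar g_\ell-\Delta_\theta\|$ for the degenerate product. For the variance, you reuse the deterministic comparison of $U_{\theta,j}$ with $\langle\Delta_\theta,g_\theta(O_j)\rangle$ from the proof of Proposition~\ref{prop:fixed-kernel-efficacy}. Your route needs no bias computation and no bounded-difference analysis of the studentiser, where a change in $\mathcal I_1$ moves every $U_{\theta,j}$. It also shows that the $B_h^2/m$ term is redundant. The paper's route is shorter and needs fewer concentration events per net point.

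Two small points to tidy. After the union bound, the degenerate term is of order $B_h c_m(\delta)^2$, and this is at most $B_h c_m(\delta)$ only when $c_m(\delta)\le 1$. For $c_m(\delta)>1$, invoke the trivial bounds $|\widehat D^{\mathrm{cross}}_{\theta,\mathrm{or}}-D_\theta|\le 2B_h$ and $|\widehat s^2_{\theta,\mathrm{or}}-s^2_\theta|\le B_h^2$; the second holds because both variances lie in $[0,B_h^2]$.

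Your closing worry about $B_g^3$ versus $B_h^2$ is unfounded. The factor $2B_g^3$ multiplies $\|\bar g_1-\Delta_\theta\|\lesssim B_g(1+\sqrt{\log(\cdot)})/\sqrt m$, which gives exactly $B_g^4=B_h^2$. No convention $B_h\ge1$ is needed, and a $B_h$-dependent ratio could not be absorbed into the universal $C$ anyway. Your event count is also off: Step 1 uses five events, or three if you bound the linear terms by $\|\bar g_\ell-\Delta_\theta\|\,\|\Delta_\theta\|$. This only changes constants.
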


\begin{proof}
We adapt the covering and bounded-difference arguments of \citet[Propositions~15--16 and Lemmas~17--19]{liu2021learningdeepkernelsnonparametric} to the cross-statistic. Earlier uniform bounds for kernel mixtures appear in \citet[Supplement, Appendix~A]{gretton2012optimal}.
Let $\tau_m=m^{-1/2}$. Since $\Theta$ lies in a Euclidean ball of radius $R_\Theta$, it admits a $\tau_m$-net $\mathcal N_{\tau_m}$ with
\[
|\mathcal N_{\tau_m}|
\le
\{1+2R_\Theta\sqrt m\}^{d_\Theta}.
\]

For fixed $\theta$,
\[
\widehat D_\theta
=
\frac1{m^2}\sum_{i=1}^m\sum_{j=1}^m
h_\theta(O_i^{(1)},O_j^{(2)}),
\qquad
\mathbb E_0 \left[ \widehat D_\theta \right] =D_\theta.
\]
Replacing one observation in either half changes $\widehat D_\theta$ by at most $2B_h/m$. McDiarmid's inequality therefore gives
\[
\mathbb P\{|\widehat D_\theta-D_\theta|>t\}
\le
2\exp\left(-\frac{mt^2}{4B_h^2}\right).
\]
A union bound over $\mathcal N_{\tau_m}$ controls the maximum over the net by a universal constant times
\[
B_h c_m(\delta).
\]
Since both $\widehat D_\theta$ and $D_\theta$ are $L_h$-Lipschitz, extending from the net adds at most $2L_h/\sqrt m$. This proves the numerator bound.

For the variance estimator, write
\[
\bar G_1:=\frac1m\sum_{i=1}^mg_\theta(O_i^{(1)}),
\qquad
\bar G_2:=\frac1m\sum_{j=1}^mg_\theta(O_j^{(2)}),
\]
and
\[
Z_j:=\langle\bar G_1,g_\theta(O_j^{(2)})-\bar G_2\rangle_{\mathcal H_\theta}.
\]
Then $\widehat s_\theta^2=m^{-1}\sum_jZ_j^2$ and $|Z_j|\le2B_h$. Replacing one observation in either half changes $\widehat s_\theta^2$ by at most $CB_h^2/m$ for a universal constant $C$. Hence
\[
\mathbb P\{|\widehat s_\theta^2-\mathbb E_0\widehat s_\theta^2|>t\}
\le
2\exp\left(-c\frac{mt^2}{B_h^4}\right)
\]
for a universal $c>0$.

We now control the bias term. Let $H=g_\theta(O)$, $\Delta=\mathbb E_0H$, $\xi=H-\Delta$, and let $\Sigma:=\mathbb E_0[\xi\otimes\xi]$. Conditional on $\bar G_1$, $\widehat s_\theta^2$ is the sample variance, with denominator $m$, of $\langle\bar G_1,H_j\rangle$. Therefore
\[
\mathbb E_0[\widehat s_\theta^2\mid\bar G_1]
=
\frac{m-1}{m}\langle\bar G_1,\Sigma\bar G_1\rangle.
\]
Since $\bar G_1=\Delta+m^{-1}\sum_i\xi_i$,
\[
\mathbb E_0\langle\bar G_1,\Sigma\bar G_1\rangle
=
s_\theta^2+
\frac1m\mathbb E_0\langle\xi,\Sigma\xi\rangle.
\]
Thus
\[
\mathbb E_0\widehat s_\theta^2-s_\theta^2
=
-\frac{s_\theta^2}{m}
+
\frac{m-1}{m^2}\mathbb E_0\langle\xi,\Sigma\xi\rangle,
\]
and boundedness of $h_\theta$ yields
\[
|\mathbb E_0\widehat s_\theta^2-s_\theta^2|
\le
C\frac{B_h^2}{m}.
\]
Finally, $\theta\mapsto\widehat s_\theta^2$ is $8B_hL_h$-Lipschitz and $\theta\mapsto s_\theta^2$ is $4B_hL_h$-Lipschitz. Applying the preceding concentration inequality on the net, adding the bias, and extending from the net proves the variance bound.
\end{proof}

\begin{lem}[Stable ratio bound]\label{lem:stable-ratio}
Let $D,\widehat D\in\mathbb R$, $v,\widehat v\in\mathbb R_+$, and $s,B>0$. Suppose
\[
0\le D\le B,
\qquad
v\ge s^2,
\qquad
|\widehat v-v|\le s^2/2.
\]
Then, for every $\lambda\ge0$,
\begin{equation}
\label{eq:deterministic-stable-ratio}
\left|
\frac{\widehat D}{\sqrt{\widehat v+\lambda}}
-
\frac{D}{\sqrt v}
\right|
\le
\frac{\sqrt2}{s}|\widehat D-D|
+
\frac{\sqrt2B}{s^3}\{|\widehat v-v|+\lambda\}.
\end{equation}
\end{lem}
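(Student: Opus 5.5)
The plan is a direct two-term decomposition, with all denominators bounded below using $v\ge s^2$ and $|\widehat v-v|\le s^2/2$. Add and subtract $D/\sqrt{\widehat v+\lambda}$ to get
\[
\frac{\widehat D}{\sqrt{\widehat v+\lambda}}-\frac{D}{\sqrt v}
=\frac{\widehat D-D}{\sqrt{\widehat v+\lambda}}
+D\left(\frac1{\sqrt{\widehat v+\lambda}}-\frac1{\sqrt v}\right).
\]
Each term is then bounded separately, and the triangle inequality finishes the proof.

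\textbf{First term.} The hypotheses give $\widehat v\ge v-|\widehat v-v|\ge s^2-s^2/2=s^2/2$. Since $\lambda\ge0$, this yields $\widehat v+\lambda\ge s^2/2$, so $1/\sqrt{\widehat v+\lambda}\le\sqrt2/s$. The first term is therefore at most $(\sqrt2/s)|\widehat D-D|$.

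\textbf{Second term.} Write $a:=\widehat v+\lambda$ and $b:=v$, both positive. Rationalise the difference of reciprocal square roots:
\[
\left|\frac1{\sqrt a}-\frac1{\sqrt b}\right|
=\frac{|b-a|}{\sqrt a\,\sqrt b\,(\sqrt a+\sqrt b)}.
\]
The numerator satisfies $|b-a|\le|\widehat v-v|+\lambda$. For the denominator, $a\ge s^2/2$ and $b\ge s^2$ give
\[
\sqrt a\sqrt b(\sqrt a+\sqrt b)\ge \frac{s}{\sqrt2}\cdot s\cdot\Bigl(\frac{s}{\sqrt2}+s\Bigr)=s^3\Bigl(\tfrac12+\tfrac1{\sqrt2}\Bigr)\ge \frac{s^3}{\sqrt2}.
\]
Using $0\le D\le B$, the second term is at most $(\sqrt2B/s^3)\{|\widehat v-v|+\lambda\}$.

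Adding the two bounds gives \eqref{eq:deterministic-stable-ratio}. There is no real obstacle here. The only point needing care is the lower bound $\widehat v+\lambda\ge s^2/2$: it is what lets the perturbed denominator be handled uniformly in $\lambda\ge0$, and it is the reason for the hypothesis $|\widehat v-v|\le s^2/2$. The constants are loose, and the factor $\sqrt2$ in the second term absorbs the slack.
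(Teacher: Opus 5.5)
Your proof is correct and follows essentially the same route as the paper: the same add-and-subtract decomposition, the same lower bound $\widehat v+\lambda\ge s^2/2$ for the first term, and $0\le D\le B$ for the second. The only difference is that the paper bounds $|(\widehat v+\lambda)^{-1/2}-v^{-1/2}|$ by the mean-value theorem on $[s^2/2,\infty)$, where the derivative has magnitude at most $\sqrt2/s^3$, whereas you rationalise; this gives the same constant, with some slack left over because you also use $v\ge s^2$.
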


\begin{proof}
The variance condition gives $\widehat v\ge s^2/2$. Hence
\begin{align*}
\left|
\frac{\widehat D}{\sqrt{\widehat v+\lambda}}
-
\frac{D}{\sqrt v}
\right|
&\le
\frac{|\widehat D-D|}{\sqrt{\widehat v+\lambda}}
+
D\left|(\widehat v+\lambda)^{-1/2}-v^{-1/2}\right|\\
&\le
\frac{\sqrt2}{s}|\widehat D-D|
+
\frac{\sqrt2D}{s^3}|\widehat v+\lambda-v|,
\end{align*}
where the second line uses the mean-value theorem on $[s^2/2,\infty)$. Since $D\le B$, \eqref{eq:deterministic-stable-ratio} follows.
\end{proof}

\begin{thm}[Variance-stable oracle adaptation]\label{thm:oracle-adaptation}
Let $s>0$ satisfy $\Theta_s\neq\varnothing$, and suppose \Cref{assum:identification,assum:embedding-integrability} and \Cref{assum:oracle-induced-class} hold. If $r_{s,m}(\delta)\le s^2/2$, then, with probability at least $1-\delta$,
\begin{equation}
\label{eq:oracle-uniform-efficacy-bound}
\sup_{\theta\in\Theta_s}
|\widehat J_{\lambda_m,\theta}^{\mathrm{or}}-J_\theta|
\le
\epsilon_m(\delta),
\end{equation}
where
\begin{equation}
\label{eq:oracle-epsilon-definition}
\epsilon_m(\delta)
:=
\frac{\sqrt2}{s}r_{D,m}(\delta)
+
\frac{\sqrt2B_h}{s^3}
\{r_{s,m}(\delta)+\lambda_m\}.
\end{equation}
If $B_h,L_h,s$ are fixed,
\[
\frac{d_\Theta\log(1+2R_\Theta\sqrt m)+1}{m}\longrightarrow0,
\qquad
\lambda_m
=
O\left(
\sqrt{\frac{d_\Theta\log(1+2R_\Theta\sqrt m)+1}{m}}
\right),
\]
then
\[
\sup_{\theta\in\Theta_s}
|\widehat J_{\lambda_m,\theta}^{\mathrm{or}}-J_\theta|
=
O_{\mathbb P}\left(
\sqrt{\frac{d_\Theta\log(1+2R_\Theta\sqrt m)+1}{m}}
\right).
\]
\end{thm}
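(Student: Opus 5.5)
The plan is to combine the uniform oracle concentration of \Cref{lem:oracle-concentration} with the deterministic ratio bound of \Cref{lem:stable-ratio}, applied pointwise on the high-probability event. Work on the event $\mathcal E$ of probability at least $1-\delta$ on which both uniform bounds of \Cref{lem:oracle-concentration} hold simultaneously over $\Theta$. For a fixed $\theta\in\Theta_s$, apply \Cref{lem:stable-ratio} with the following choices:
\[
D=D_\theta,\quad \widehat D=\widehat D^{\mathrm{cross}}_{\theta,\mathrm{or}},\quad v=s_\theta^2,\quad \widehat v=\widehat s^2_{\theta,\mathrm{or}},\quad \lambda=\lambda_m,\quad B=B_h.
\]

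I then check the hypotheses of \Cref{lem:stable-ratio} in order.
\begin{enumerate}
\item $v\ge s^2$ holds by the definition of $\Theta_s$.
\item $|\widehat v-v|\le s^2/2$ holds on $\mathcal E$, since $\sup_\theta|\widehat s^2_{\theta,\mathrm{or}}-s^2_\theta|\le r_{s,m}(\delta)\le s^2/2$ by hypothesis.
\item $\widehat v\ge 0$ holds because it is an empirical variance.
\item $0\le D_\theta\le B_h$. Nonnegativity is immediate since $D_\theta$ is a squared norm. For the upper bound, Proposition~\ref{prop:dr-representation} gives $\Delta_\theta=\mathbb E_0 g_\theta(O)$, so Jensen's inequality and the envelope in \Cref{assum:oracle-induced-class} give $\|\Delta_\theta\|\le B_g$, hence $D_\theta\le B_g^2=B_h$.
\end{enumerate}
The lemma then yields
\[
|\widehat J^{\mathrm{or}}_{\lambda_m,\theta}-J_\theta|\le \frac{\sqrt2}{s}\,|\widehat D-D|+\frac{\sqrt2 B_h}{s^3}\bigl(|\widehat v-v|+\lambda_m\bigr).
\]
Bounding $|\widehat D-D|$ and $|\widehat v-v|$ by $r_{D,m}(\delta)$ and $r_{s,m}(\delta)$ gives exactly $\epsilon_m(\delta)$. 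The bound is uniform in $\theta\in\Theta_s$, so taking the supremum proves \eqref{eq:oracle-uniform-efficacy-bound}.

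For the rate statement, write $\rho_m:=\sqrt{(d_\Theta\log(1+2R_\Theta\sqrt m)+1)/m}$. With $B_h,L_h,s$ fixed and $\delta$ fixed, I would show $c_m(\delta)\le C_\delta\rho_m$. This uses $\sqrt{a+b}\le\sqrt a+\sqrt b$: the $\log(4/\delta)$ term and the additive $1$ contribute $O(m^{-1/2})\le O(\rho_m)$. Hence $r_{D,m}(\delta)=O(\rho_m)$, since the $L_h/\sqrt m$ term is also $O(\rho_m)$. Similarly $r_{s,m}(\delta)=O(\rho_m)$, because $B_h^2/m\le\rho_m^2\le\rho_m$ once $\rho_m\le1$. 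The assumption on $\lambda_m$ gives $\lambda_m=O(\rho_m)$, so $\epsilon_m(\delta)\le C'_\delta\rho_m$.

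The remaining obstacle is the side condition $r_{s,m}(\delta)\le s^2/2$. Since $\rho_m\to0$, it holds for all $m$ beyond some $m_0(\delta)$. Then for every $\delta$ and all large $m$, $\mathbb P\bigl(\sup_{\Theta_s}|\widehat J^{\mathrm{or}}_{\lambda_m,\theta}-J_\theta|>C'_\delta\rho_m\bigr)\le\delta$, which is the definition of $O_{\mathbb P}(\rho_m)$; finitely many small $m$ do not affect the definition. The only delicate point is the bookkeeping of the universal constant $C$ hidden in $r_{D,m}$ and $r_{s,m}$, which must not depend on $\theta$ or $m$. This is guaranteed by \Cref{lem:oracle-concentration}.
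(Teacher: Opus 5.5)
Your proposal is correct and follows the paper's proof: on the event of \Cref{lem:oracle-concentration} you apply \Cref{lem:stable-ratio} pointwise on $\Theta_s$ with the same identifications and take the supremum. Your Jensen bound $D_\theta\le B_g^2=B_h$ is equivalent to the paper's $D_\theta=\mathbb E_0 h_\theta(O_1,O_2)\le B_h$, and your explicit rate bookkeeping spells out what the paper compresses into one sentence. That bookkeeping absorbs $\log(4/\delta)$, $L_h/\sqrt m$, $B_h^2/m$ and $\lambda_m$ into $\rho_m$, and notes that $r_{s,m}(\delta)\le s^2/2$ holds for all large $m$.
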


\begin{proof}[Proof of \Cref{thm:oracle-adaptation}]
On the event in \Cref{lem:oracle-concentration}, every $\theta\in\Theta_s$ satisfies
\[
\widehat s_{\theta,\mathrm{or}}^2
\ge
s_\theta^2-r_{s,m}(\delta)
\ge
s^2/2.
\]
Moreover, $0\le D_\theta=\mathbb E_0h_\theta(O_1,O_2)\le B_h$. Apply \Cref{lem:stable-ratio} pointwise with
\[
D=D_\theta,
\quad
\widehat D=\widehat D_{\theta,\mathrm{or}}^{\mathrm{cross}},
\quad
v=s_\theta^2,
\quad
\widehat v=\widehat s_{\theta,\mathrm{or}}^2,
\quad
B=B_h,
\]
and take the supremum over $\Theta_s$. This gives \eqref{eq:oracle-uniform-efficacy-bound} with $\epsilon_m(\delta)$ as defined in \eqref{eq:oracle-epsilon-definition}. The stochastic rate follows from the definitions of $r_{D,m}$ and $r_{s,m}$.
\end{proof}

Theorem 1 of \citet{gretton2012optimal} and Theorem~11 of \citet{liu2021learningdeepkernelsnonparametric} choose a ridge of order $t^{-1/3}$ for training size $t$. Their ratio bounds amplify root-$t$ estimation errors, up to logarithmic and class-complexity factors, by $\lambda_t^{-1/2}$ and balance this against $O(\lambda_t)$ bias. Both assume a positive population variance lower bound. Here, for sufficiently large samples, the uniform concentration event bounds the empirical variance away from zero on $\Theta_s$, so \Cref{lem:stable-ratio} avoids this amplification. The same refinement applies on the variance-bounded subclass of Liu et al. using their Propositions 15-16 and $\lambda_t=O(\sqrt{\log t/t})$ for fixed class constants and a fixed positive variance lower bound. Thus the cube-root bounds do not establish an intrinsic rate limitation.

\subsubsection{Nuisance Perturbation}

We now transfer the oracle concentration bounds to fitted pseudo-features.

\begin{lem}[Score envelopes from selection regularity]\label{lem:fitted-score-envelope}
Under \Cref{assum:selection-regularity}, on a common full-measure set,
\[
\sup_{\theta\in\Theta}
\left\{
\|g_\theta^{\pi,\pi'}(o)\|_{\mathcal H_\theta}
\vee
\|\widetilde g_\theta(o)\|_{\mathcal H_\theta}
\right\}
\le M_g,
\]
where
\[
M_g
:=
\max\left\{
2(W+1)\sqrt\kappa,
W(1+A)(\sqrt\kappa+B_\mu)+2B_\mu
\right\}.
\]
Consequently $|\widetilde h_\theta(o,o')|\le M_g^2$.
\end{lem}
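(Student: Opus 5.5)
The plan is to bound the two pseudo-features separately, pointwise on a common full-measure set, using the triangle inequality on each term of \eqref{eq:generic-dr-feature}.

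\textbf{Common set.} Take the common $(\mathbb P_X\otimes\rho)$-null set from \Cref{assum:selection-regularity}, outside which the three bounds in \eqref{eq:main-selection-regularity} hold wherever $|r|>0$. Let the full-measure set of observed triples $o=(x,a,y)$ consist of those points with $(a,x)$ outside this null set and whose $x$-section has full $\rho$-measure, as in the proof of Proposition~\ref{prop:induced-kernel-regularity}. This set is chosen independently of $\theta$, because the assumed bounds hold for every $\theta$ simultaneously.

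\textbf{Oracle feature.} The diagonal bound $k_\theta(y,y)\le\kappa$ gives $\|\varphi_\theta(y)\|\le\sqrt\kappa$. Jensen's inequality then gives $\|\mu_\theta(a,x)\|\le\sqrt\kappa$ for every $(\theta,a,x)$; under this diagonal bound no parameter-dependent exceptional sets arise. With $|r|/\pi_0\le W$ and $\int|r(\tilde a,x)|\rho(\diff\tilde a)\le 2$, the argument of Proposition~\ref{prop:induced-kernel-regularity} applies verbatim:
\[
\|g_\theta^{\pi,\pi'}(o)\|_{\mathcal H_\theta}\le W\cdot2\sqrt\kappa+2\sqrt\kappa=2(W+1)\sqrt\kappa .
\]

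\textbf{Fitted feature.} For the propensity weight, write $|r|/\hat\pi_0=(|r|/\pi_0)(\pi_0/\hat\pi_0)$. Since $|\pi_0/\hat\pi_0-1|\le A$, we have $\pi_0/\hat\pi_0\le 1+A$, so the weight is at most $W(1+A)$. The residual satisfies
\[
\|\varphi_\theta(y)-\hat\mu_\theta(a,x)\|\le\sqrt\kappa+B_\mu .
\]
The augmentation term is bounded by $\int|r|\,\|\hat\mu_\theta\|\,\diff\rho\le 2B_\mu$, using the $B_\mu$ bound on the $x$-section wherever $|r|>0$. Where $r=0$, the convention sets the weighted terms to zero, so they contribute nothing. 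Together these give
\[
\|\widetilde g_\theta(o)\|_{\mathcal H_\theta}\le W(1+A)(\sqrt\kappa+B_\mu)+2B_\mu .
\]

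\textbf{Conclusion.} Taking the maximum of the two bounds and the supremum over $\theta$ gives the envelope $M_g$. Cauchy--Schwarz in $\mathcal H_\theta$ then yields $|\widetilde h_\theta(o,o')|\le M_g^2$ on the product of the full sets.

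\textbf{Main obstacle.} The only delicate point is measure-theoretic. The augmentation integral requires the bounds for $\rho$-almost every $\tilde a$ at the observed $x$, not merely at the observed $a$. This is handled by choosing the full-measure set through full $\rho$-measure $x$-sections, which is possible by Fubini applied to the product-null set.
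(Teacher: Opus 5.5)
Your proposal is correct and is essentially the paper's proof. Like the paper, you bound the two terms of \eqref{eq:generic-dr-feature} pointwise by the triangle inequality, using the weights $|r|/\pi_0\le W$ and $|r|/\hat\pi_0\le W(1+A)$, the norms $\sqrt\kappa$ and $B_\mu$, and $\int|r|\,\diff\rho\le 2$, and then apply Cauchy--Schwarz. Your explicit choice of a common full-measure set with full-$\rho$-measure $x$-sections only spells out a measure-theoretic step the paper leaves implicit here (it states it in the proof of Proposition~\ref{prop:induced-kernel-regularity}).
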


\begin{proof}
The diagonal kernel bound and Jensen's inequality give
$\|\varphi_\theta(y)\|\le\sqrt\kappa$ and
$\|\mu_\theta(a,x)\|\le\sqrt\kappa$. Also
$\int|r(a,x)|\rho(\diff a)\le2$. The oracle bound therefore follows from
\[
\|g_\theta(o)\|
\le
\frac{|r(a,x)|}{\pi_0(a\mid x)}
\|\varphi_\theta(y)-\mu_\theta(a,x)\|
+\int|r(\widetilde a,x)|\|\mu_\theta(\widetilde a,x)\|\rho(\diff\widetilde a).
\]
On the support of $|r|$,
\[
\frac{|r|}{\hat\pi_0}
=
\frac{|r|}{\pi_0}\frac{\pi_0}{\hat\pi_0}
\le W(1+A),
\]
which gives the fitted bound in the same way. Cauchy-Schwarz gives the final claim.
\end{proof}

Write $\widetilde B_h:=M_g^2$. Let $\widetilde r_{D,m}(\delta)$ and $\widetilde r_{s,m}(\delta)$ be defined by \eqref{eq:rD-definition}-\eqref{eq:rs-definition}, with $(B_h,L_h)$ replaced by $(\widetilde B_h,L)$.

\begin{lem}[Exact nuisance decomposition]\label{lem:sharp-nuisance-decomposition}
Under \Cref{assum:identification,assum:selection-regularity}, define
\[
\alpha(a,x)
:=
\frac{\pi_0(a\mid x)}{\hat\pi_0(a\mid x)}-1,
\qquad
d_\theta(a,x)
:=
\hat\mu_\theta(a,x)-\mu_\theta(a,x),
\qquad
q(a,x):=\frac{r(a,x)}{\pi_0(a\mid x)},
\]
where $q:=0$ off $\{|r|>0\}$.
Writing $\xi_\theta(O):=\varphi_\theta(Y)-\mu_\theta(A,X)$ and
$t_{\theta,d}(X):=\int r(a,X)d_\theta(a,X)\rho(\diff a)$, the fitted-score error has the exact decomposition
\begin{equation}
\label{eq:exact-fitted-score-decomposition}
\widetilde g_\theta-g_\theta
=
q\alpha\xi_\theta
+
\{t_{\theta,d}-qd_\theta\}
-
q\alpha d_\theta.
\end{equation}
The first term has conditional mean zero given $(A,X)$, and the second has conditional mean zero given $X$. Hence, for
$B_\theta:=\widetilde\Delta_\theta-\Delta_\theta$,
\begin{equation}
\label{eq:exact-fitted-mean-bias}
B_\theta
=
-\mathbb E_0\left[
\int_{\mathcal A}r(a,X)\alpha(a,X)d_\theta(a,X)\rho(\diff a)
\right],
\qquad
\sup_{\theta\in\Theta}\|B_\theta\|
\le
\epsilon_{\pi,n}\epsilon_{\mu,n}.
\end{equation}
Moreover, the squared $L^2(\mathbb P_0;\mathcal H_\theta)$ norms of the three terms in \eqref{eq:exact-fitted-score-decomposition} are bounded, respectively, by
\begin{equation}
\label{eq:score-component-bounds}
4\kappa W\epsilon_{\pi,n}^2,
\qquad
W\epsilon_{\mu,n}^2,
\qquad
W(B_\mu+\sqrt\kappa)^2\epsilon_{\pi,n}^2.
\end{equation}
\end{lem}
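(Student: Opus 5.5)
The plan is to verify the identity \eqref{eq:exact-fitted-score-decomposition} algebraically from \eqref{eq:generic-dr-feature}, then read off the conditional-mean properties, the bias formula, and the $L^2$ bounds term by term. On the support of $|r|$, use $r/\hat\pi_0=(r/\pi_0)(\pi_0/\hat\pi_0)=q(1+\alpha)$. Writing $\varphi_\theta(Y)-\hat\mu_\theta=\xi_\theta-d_\theta$, the fitted score is
\[
\widetilde g_\theta=q(1+\alpha)(\xi_\theta-d_\theta)+\int r(a,X)\{\mu_\theta(a,X)+d_\theta(a,X)\}\rho(\diff a).
\]
Subtracting $g_\theta=q\xi_\theta+\int r\mu_\theta\,\diff\rho$ gives
\[
\widetilde g_\theta-g_\theta=q\alpha\xi_\theta-qd_\theta-q\alpha d_\theta+t_{\theta,d},
\]
which is \eqref{eq:exact-fitted-score-decomposition}. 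Off $\{|r|>0\}$ both scores reduce to their integral terms, and the identity holds with $q=0$.

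Next come the conditional means. Since $q$, $\alpha$ and $d_\theta$ are functions of $(A,X)$, and since $\mathbb E_0[\xi_\theta(O)\mid A,X]=0$ (using the version of $Q$ from Proposition~\ref{prop:soft-intervention-identification}), the first term is conditionally centred. For the second term, averaging $A\mid X$ with density $\pi_0$ gives
\[
\mathbb E_0[qd_\theta\mid X]=\int\pi_0\,\frac{r}{\pi_0}\,d_\theta\,\diff\rho=t_{\theta,d}(X).
\]
Taking the expectation of the third term in the same way yields the bias formula \eqref{eq:exact-fitted-mean-bias}. This is also Proposition~\ref{prop:dr-representation} with $e=\hat\pi_0$, $m_\theta=\hat\mu_\theta$, since $\pi_0/\hat\pi_0-1=\alpha$. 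To bound it, apply the Bochner norm inequality and then Cauchy--Schwarz with weight $|r|$ on $\mathbb P_X\otimes\rho$:
\[
\|B_\theta\|\le\mathbb E_0\!\int|r||\alpha|\|d_\theta\|\,\diff\rho\le\Big(\mathbb E_0\!\int|r|\alpha^2\Big)^{1/2}\Big(\mathbb E_0\!\int|r|\|d_\theta\|^2\Big)^{1/2}\le\epsilon_{\pi,n}\epsilon_{\mu,n}.
\]
Here the $\sup_\theta$ inside $\epsilon_{\mu,n}$ makes the bound uniform. Boundedness from \Cref{assum:selection-regularity} justifies all the integrability steps.

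For the $L^2$ bounds, the key device is converting the expectation over $A\mid X$ into an integral against $\pi_0\,\diff\rho$ and using $q^2\pi_0=|r|\cdot|r|/\pi_0\le W|r|$. For the first term, $\|\xi_\theta\|\le2\sqrt\kappa$, so
\[
\mathbb E_0\|q\alpha\xi_\theta\|^2\le4\kappa\,\mathbb E_0\!\int q^2\pi_0\alpha^2\,\diff\rho\le4\kappa W\epsilon_{\pi,n}^2.
\]
For the second term, $t_{\theta,d}=\mathbb E_0[qd_\theta\mid X]$, so it is a conditionally centred variable. Its second moment is therefore bounded by
\[
\mathbb E_0\|qd_\theta\|^2=\mathbb E_0\!\int q^2\pi_0\|d_\theta\|^2\,\diff\rho\le W\epsilon_{\mu,n}^2.
\]
For the third term, use $\|d_\theta\|\le B_\mu+\sqrt\kappa$ (from the diagonal bound and Jensen) to get $W(B_\mu+\sqrt\kappa)^2\epsilon_{\pi,n}^2$.

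The main obstacle is mostly bookkeeping rather than an estimate. One point is the null-set conventions where $r=0$: there $q=0$ and the ratios are set to zero, so the identity must be checked on both regions. The other is that $\alpha$ is only bounded (by $A$) on $\{|r|>0\}$. Every integrand carries a factor of $r$, so this suffices.
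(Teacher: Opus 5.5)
Your proposal is correct and follows essentially the same route as the paper: the same expansion via $r/\hat\pi_0=q(1+\alpha)$ and $\hat\mu_\theta=\mu_\theta+d_\theta$, the same conditional-mean arguments, Cauchy--Schwarz against the weight $|r|$ for the bias bound, and the change of measure $q^2\pi_0\le W|r|$ together with conditional variance contraction for the three $L^2$ bounds. Your explicit check that the identity also holds on $\{r=0\}$ is a small piece of bookkeeping that the paper leaves implicit.
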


\begin{proof}
Because $r/\hat\pi_0=q(1+\alpha)$ and $\hat\mu_\theta=\mu_\theta+d_\theta$, subtracting the oracle score from the fitted score gives
\begin{align*}
\widetilde g_\theta-g_\theta
&=
q(1+\alpha)(\xi_\theta-d_\theta)
+\int r(\mu_\theta+d_\theta)\diff\rho
-q\xi_\theta-\int r\mu_\theta\diff\rho\\
&=
q\alpha\xi_\theta
+t_{\theta,d}-qd_\theta
-q\alpha d_\theta,
\end{align*}
where $\int f\diff\rho$ abbreviates integration over $\mathcal A$. Since $\mathbb E_0[\xi_\theta\mid A,X]=0$, the first conditional-mean statement follows. Also
\[
\mathbb E_0[qd_\theta(A,X)\mid X]
=
\int r(a,X)d_\theta(a,X)\rho(\diff a)
=t_{\theta,d}(X),
\]
which proves the second. Only the last term contributes to the unconditional mean, and conditioning on $X$ yields the identity for $B_\theta$. Cauchy-Schwarz with respect to the finite measure
$|r(a,x)|\mathbb P_X(\diff x)\rho(\diff a)$ proves the product bound.

For the component bounds, $\|\xi_\theta\|\le2\sqrt\kappa$ and
$r^2/\pi_0\le W|r|$, whence
\[
\mathbb E_0\|q\alpha\xi_\theta\|^2
\le4\kappa W\epsilon_{\pi,n}^2.
\]
The middle term is the negative of $qd_\theta$ centred conditionally on $X$, so conditional variance contraction gives
\[
\mathbb E_0\|t_{\theta,d}-qd_\theta\|^2
\le
\mathbb E_0\|qd_\theta\|^2
\le W\epsilon_{\mu,n}^2.
\]
Finally, $\|d_\theta\|\le B_\mu+\sqrt\kappa$ and the same change of measure gives the third bound in \eqref{eq:score-component-bounds}.
\end{proof}

\begin{lem}[Sharp population nuisance perturbation]\label{lem:nuisance-population-gap}
Under \Cref{assum:identification,assum:selection-regularity}, for every $\theta\in\Theta$,
\begin{equation}
\label{eq:sharp-numerator-perturbation}
\widetilde D_\theta-D_\theta
=
2\langle\Delta_\theta,B_\theta\rangle+\|B_\theta\|^2,
\qquad
|\widetilde D_\theta-D_\theta|
\le
4\sqrt\kappa\epsilon_{\pi,n}\epsilon_{\mu,n}
+\epsilon_{\pi,n}^2\epsilon_{\mu,n}^2.
\end{equation}
There is a finite $C$, depending only on $(\kappa,W,A,B_\mu)$, such that
\begin{equation}
\label{eq:sharp-variance-perturbation}
\sup_{\theta\in\Theta}|\widetilde s_\theta^2-s_\theta^2|
\le
C\{\epsilon_{\pi,n}+\epsilon_{\pi,n}^2
+\epsilon_{\pi,n}\epsilon_{\mu,n}+\epsilon_{\mu,n}^2\}
\le
C\{\epsilon_{\pi,n}+\epsilon_{\mu,n}^2\}.
\end{equation}
If $\hat\pi_0=\pi_0$ on the support of $|r|$, then $B_\theta=0$ and
\begin{equation}
\label{eq:known-logging-variance-inflation}
\widetilde D_\theta=D_\theta,
\qquad
\widetilde s_\theta^2
=
s_\theta^2
+\mathbb E_0\left[
\langle\Delta_\theta,\widetilde g_\theta(O)-g_\theta(O)\rangle^2
\right],
\end{equation}
so $0\le\widetilde s_\theta^2-s_\theta^2\le4\kappa W\epsilon_{\mu,n}^2$.
\end{lem}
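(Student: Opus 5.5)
The numerator identity is pure algebra. By definition $\widetilde\Delta_\theta=\Delta_\theta+B_\theta$, so expanding the squared norm gives
\[
\widetilde D_\theta=\|\Delta_\theta+B_\theta\|^2=D_\theta+2\langle\Delta_\theta,B_\theta\rangle+\|B_\theta\|^2 .
\]
Since $\|\varphi_\theta\|\le\sqrt\kappa$ and Bochner integration is contractive, $\|\chi_\theta(p)\|\le\sqrt\kappa$ for each policy, hence $\|\Delta_\theta\|\le2\sqrt\kappa$. Cauchy--Schwarz together with $\|B_\theta\|\le\epsilon_{\pi,n}\epsilon_{\mu,n}$ from \eqref{eq:exact-fitted-mean-bias} then gives \eqref{eq:sharp-numerator-perturbation}.

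For the variance, I write $\widetilde s_\theta^2=\mathbb E_0[\langle\widetilde\Delta_\theta,\widetilde g_\theta-\widetilde\Delta_\theta\rangle^2]$ and $s_\theta^2=\mathbb E_0[\langle\Delta_\theta,g_\theta-\Delta_\theta\rangle^2]$. Let $U:=\langle\widetilde\Delta_\theta,\widetilde g_\theta-\widetilde\Delta_\theta\rangle$ and $V:=\langle\Delta_\theta,g_\theta-\Delta_\theta\rangle$. Then
\[
|\widetilde s_\theta^2-s_\theta^2|=|\mathbb E_0(U-V)(U+V)|\le\|U-V\|_{L^2}\,\|U+V\|_{L^2}.
\]
By \Cref{lem:fitted-score-envelope}, $\|U+V\|_{L^2}\le 4M_g^2$. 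I decompose
\[
U-V=\langle B_\theta,\widetilde g_\theta-\widetilde\Delta_\theta\rangle+\langle\Delta_\theta,(\widetilde g_\theta-g_\theta)-B_\theta\rangle .
\]
The first piece has $L^2$ norm at most $2M_g\epsilon_{\pi,n}\epsilon_{\mu,n}$. The second is at most $2\sqrt\kappa\bigl(\|\widetilde g_\theta-g_\theta\|_{L^2(\mathbb P_0;\mathcal H_\theta)}+\|B_\theta\|\bigr)$. By \eqref{eq:exact-fitted-score-decomposition} and \eqref{eq:score-component-bounds},
\[
\|\widetilde g_\theta-g_\theta\|_{L^2}\le C(\epsilon_{\pi,n}+\epsilon_{\mu,n}).
\]
This yields only $|\widetilde s^2_\theta-s^2_\theta|\le C(\epsilon_{\pi,n}+\epsilon_{\mu,n})$, which falls short of the claim: the $\epsilon_{\mu,n}$ must enter squared.

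That gap is the main obstacle. To get $\epsilon_{\mu,n}^2$ I use the structure of the middle term $t_{\theta,d}-qd_\theta$ rather than crude norms. I expand $(U-V)(U+V)=(U-V)\,2V+(U-V)^2$. The quadratic term $(U-V)^2$ is bounded by the squared norms in \eqref{eq:score-component-bounds}, giving $C(\epsilon_{\pi,n}^2+\epsilon_{\mu,n}^2+\epsilon_{\pi,n}^2\epsilon_{\mu,n}^2)$. In the cross term $\mathbb E_0[(U-V)V]$, the pieces involving $\alpha$ and $B_\theta$ are bounded linearly by $C\epsilon_{\pi,n}$. The remaining piece is
\[
\mathbb E_0\bigl[\langle\Delta_\theta,t_{\theta,d}-qd_\theta\rangle\,\langle\Delta_\theta,g_\theta-\Delta_\theta\rangle\bigr].
\]
Here $g_\theta-\Delta_\theta=q\xi_\theta+\{\int r\mu_\theta\,\diff\rho-\Delta_\theta\}$. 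The $q\xi_\theta$ component is orthogonal because $\mathbb E_0[\xi_\theta\mid A,X]=0$ while the first factor is $(A,X)$-measurable. The second component is a function of $X$ only, and $t_{\theta,d}-qd_\theta$ has conditional mean zero given $X$, so that product also vanishes. Hence the cross term contributes no linear $\epsilon_{\mu,n}$, and collecting terms gives the first inequality in \eqref{eq:sharp-variance-perturbation}. The second inequality follows because $\epsilon_{\pi,n}\le A$ (so $\epsilon_{\pi,n}^2\le A\epsilon_{\pi,n}$) and $\epsilon_{\pi,n}\epsilon_{\mu,n}\le\tfrac12(\epsilon_{\pi,n}^2+\epsilon_{\mu,n}^2)$.

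For the known-logging case, $\alpha=0$ on the support of $|r|$, so $B_\theta=0$ and $\widetilde D_\theta=D_\theta$. Now $\widetilde g_\theta-g_\theta=t_{\theta,d}-qd_\theta$, and by the orthogonality just established it is uncorrelated with $\langle\Delta_\theta,g_\theta-\Delta_\theta\rangle$. Since $\widetilde\Delta_\theta=\Delta_\theta$, the variance splits as \eqref{eq:known-logging-variance-inflation}. Finally, Cauchy--Schwarz with $\|\Delta_\theta\|^2\le4\kappa$ and the middle bound $W\epsilon_{\mu,n}^2$ in \eqref{eq:score-component-bounds} gives $0\le\widetilde s_\theta^2-s_\theta^2\le4\kappa W\epsilon_{\mu,n}^2$.
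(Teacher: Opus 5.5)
Your proposal is correct and follows the paper's proof essentially step for step. Your $V$ and $U-V$ are the paper's $Z$ and $R$, and you use the same expansion $\widetilde s_\theta^2-s_\theta^2=2\mathbb E_0[(U-V)V]+\mathbb E_0[(U-V)^2]$. You obtain $\epsilon_{\mu,n}^2$ from the same exact cancellation of the $t_{\theta,d}-qd_\theta$ cross term (conditional centring given $(A,X)$ for the $q\xi_\theta$ part, and given $X$ for the rest), and you handle the known-logging case identically.

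The only blemishes are cosmetic and absorbed into $C$. Since $\int|r|\diff\rho$ can equal $2$, you have $\epsilon_{\pi,n}\le\sqrt{2}\,A$ rather than $\epsilon_{\pi,n}\le A$. Your bound on the $-q\alpha d_\theta$ cross term is linear in $\epsilon_{\pi,n}$, cruder than the paper's $\epsilon_{\pi,n}\epsilon_{\mu,n}$ but sufficient for both stated inequalities.
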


\begin{proof}
The numerator identity follows from
$\widetilde\Delta_\theta=\Delta_\theta+B_\theta$. Since
$\|\Delta_\theta\|\le2\sqrt\kappa$, \eqref{eq:exact-fitted-mean-bias} gives the stated bound.

To bound the variance error, we treat the propensity and outcome components separately. Suppress $\theta$ and write
\[
G:=g-\Delta,
\qquad
e:=\widetilde g-g,
\qquad
\bar e:=e-B,
\qquad
Z:=\langle\Delta,G\rangle.
\]
The centred fitted projection is
\[
\widetilde Z
:=
\langle\Delta+B,G+\bar e\rangle
=
Z+R,
\qquad
R:=\langle\Delta,\bar e\rangle+\langle B,G+\bar e\rangle.
\]
Both $Z$ and $R$ are centred. Therefore
\begin{equation}
\label{eq:variance-perturbation-expansion}
\widetilde s^2-s^2
=
2\mathbb E_0[ZR]+\mathbb E_0[R^2].
\end{equation}

Let $p:=q\alpha\xi$, $u:=t_d-qd$, and $v:=-q\alpha d$ denote the three terms in \eqref{eq:exact-fitted-score-decomposition}. Also set
$t(X):=\int r(a,X)\mu(a,X)\rho(\diff a)$. Then
\[
G=q\xi+t(X)-\Delta.
\]
The conditional-embedding term has the exact cancellation
\begin{equation}
\label{eq:outcome-linear-cancellation}
\mathbb E_0\left[Z\langle\Delta,u\rangle\right]=0.
\end{equation}
Indeed, $u$ is $(A,X)$-measurable and $\mathbb E_0[\xi\mid A,X]=0$, so the covariance with the residual part $q\xi$ vanishes. The remaining part $t(X)-\Delta$ is $X$-measurable, while $\mathbb E_0[u\mid X]=0$. Thus there is no term linear in $\epsilon_{\mu,n}$.

The same calculation does not cancel the propensity contribution. Instead,
\begin{equation}
\label{eq:propensity-linear-term}
\mathbb E_0\left[Z\langle\Delta,p\rangle\right]
=
\mathbb E_0\left[
q(A,X)^2\alpha(A,X)
\langle\Delta,\xi(O)\rangle^2
\right],
\end{equation}
which is not necessarily zero. Since $r^2/\pi_0\le W|r|$, bounded features and Cauchy-Schwarz give
\[
\left|\mathbb E_0[Z\langle\Delta,p\rangle]\right|
\le
C\mathbb E_0\int|r(a,X)||\alpha(a,X)|\rho(\diff a)
\le C\epsilon_{\pi,n},
\]
where $\mathbb E_0\int|r|\diff\rho\le2$.

For the mixed term,
\[
\mathbb E_0[v\mid X]
=
-\int r(a,X)\alpha(a,X)d(a,X)\rho(\diff a).
\]
The residual part $q\xi$ is again orthogonal to $v$, while Cauchy-Schwarz gives
\[
\mathbb E_0\left\|\int r\alpha d\diff\rho\right\|
\le\epsilon_{\pi,n}\epsilon_{\mu,n}.
\]
Since $t(X)$, $\Delta$, and the scores are uniformly bounded by \Cref{lem:fitted-score-envelope},
\[
\left|\mathbb E_0[Z\langle\Delta,v-B\rangle]\right|
+
\left|\mathbb E_0[Z\langle B,G+\bar e\rangle]\right|
\le C\epsilon_{\pi,n}\epsilon_{\mu,n}.
\]

We now control the quadratic term in \eqref{eq:variance-perturbation-expansion}. From \eqref{eq:score-component-bounds},
\[
\mathbb E_0\|\bar e\|^2
\le
\mathbb E_0\|e\|^2
\le C(\epsilon_{\pi,n}^2+\epsilon_{\mu,n}^2).
\]
Using $\|B\|\le\epsilon_{\pi,n}\epsilon_{\mu,n}$ and the score envelope,
\[
\mathbb E_0[R^2]
\le
C\{\epsilon_{\pi,n}^2+\epsilon_{\mu,n}^2
+\epsilon_{\pi,n}^2\epsilon_{\mu,n}^2\}.
\]
Substituting into \eqref{eq:variance-perturbation-expansion} proves the first inequality in \eqref{eq:sharp-variance-perturbation}. Under \Cref{assum:selection-regularity}, both nuisance errors are bounded by constants: $|\alpha|\le A$, $\|d\|\le B_\mu+\sqrt\kappa$, and the $|r|$-weighted measure has mass at most two. Hence the higher-order and mixed terms are absorbed into $C(\epsilon_{\pi,n}+\epsilon_{\mu,n}^2)$, proving the second inequality.

If $\hat\pi_0=\pi_0$, then $p=v=B=0$ and $e=u$. Equation~\eqref{eq:outcome-linear-cancellation} yields
\[
\widetilde s^2
=
\mathbb E_0\{Z+\langle\Delta,u\rangle\}^2
=s^2+\mathbb E_0\langle\Delta,u\rangle^2.
\]
The middle component bound in \eqref{eq:score-component-bounds} and $\|\Delta\|^2\le4\kappa$ give the final inequality.
\end{proof}

\subsubsection{Uniform Criterion Bound and Consequences}

\begin{proof}[Proof of \Cref{thm:feasible-criterion-uniform}]
Conditional on $\mathcal F_{\mathrm{fit}}$, \Cref{lem:fitted-score-envelope} and the covering argument in \Cref{lem:oracle-concentration} give, with probability at least $1-\eta$,
\begin{equation}
\label{eq:fitted-score-concentration}
\sup_{\theta\in\Theta}
|\widehat D_\theta^{\mathrm{cross}}-\widetilde D_\theta|
\le\widetilde r_{D,m}(\eta),
\qquad
\sup_{\theta\in\Theta}
|\widehat s_\theta^2-\widetilde s_\theta^2|
\le\widetilde r_{s,m}(\eta).
\end{equation}
The definitions of these two radii and of $c_m$ imply
$\widetilde r_{D,m}(\eta)+\widetilde r_{s,m}(\eta)\le Cc_m(\eta)$ for fixed class constants. Combining \eqref{eq:fitted-score-concentration} with \eqref{eq:sharp-numerator-perturbation} and \eqref{eq:sharp-variance-perturbation} gives the componentwise bounds
\begin{align}
\sup_{\theta\in\Theta}|\widehat D_\theta^{\mathrm{cross}}-D_\theta|
&\le C\{c_m(\eta)+\epsilon_{\pi,n}\epsilon_{\mu,n}\},
\label{eq:uniform-numerator-rate}\\
\sup_{\theta\in\Theta}|\hat s_\theta^2-s_\theta^2|
&\le C\{c_m(\eta)+\epsilon_{\pi,n}+\epsilon_{\mu,n}^2\}.
\label{eq:uniform-variance-rate}
\end{align}
Here the bounded nuisance envelopes absorb $(\epsilon_{\pi,n}\epsilon_{\mu,n})^2$ into the product term.

For the ratio, decompose
\begin{align*}
|\widehat J_{\lambda,\theta}^{\mathrm{DR}}-J_{\lambda,\theta}|
&\le
\frac{|\widehat D_\theta^{\mathrm{cross}}-D_\theta|}{\sqrt\lambda}
+D_\theta\left|
(\widehat s_\theta^2+\lambda)^{-1/2}
-(s_\theta^2+\lambda)^{-1/2}
\right|\\
&\le
\frac{|\widehat D_\theta^{\mathrm{cross}}-D_\theta|}{\sqrt{\lambda_0}}
+\frac{2\kappa}{\lambda_0^{3/2}}
|\widehat s_\theta^2-s_\theta^2|,
\end{align*}
where $D_\theta\le4\kappa$ and the derivative of $v\mapsto(v+\lambda)^{-1/2}$ has magnitude at most $(2\lambda_0^{3/2})^{-1}$. Taking the supremum and using \eqref{eq:uniform-numerator-rate} to \eqref{eq:uniform-variance-rate} proves \eqref{eq:regularized-uniform-bound}.

For a cyclic rotation $c$, apply the conditional result given its own nuisance-fitting fold and write its errors as $\epsilon_{\pi,n}^{(c)}$ and $\epsilon_{\mu,n}^{(c)}$. Integrating each conditional failure probability and applying an unconditional union bound yields simultaneous control with probability at least $1-3\eta$. Averaging the three inequalities gives, for example,
\[
\sup_{\theta\in\Theta}
\left|
\frac13\sum_{c=1}^3\widehat J_{\lambda,\theta}^{\mathrm{DR},(c)}
-J_{\lambda,\theta}
\right|
\le
C\left[
c_m(\eta)
+\frac13\sum_{c=1}^3
\left\{
\epsilon_{\pi,n}^{(c)}\epsilon_{\mu,n}^{(c)}
+\epsilon_{\pi,n}^{(c)}
+(\epsilon_{\mu,n}^{(c)})^2
\right\}
\right].
\]
No independence among the three rotation events is required.
\end{proof}

\paragraph{Joint product-class complexity.}
If $\Xi\subseteq\{v:\|v\|_2\le R_0\}$, then $d_\Theta=Md_0$ and $R_\Theta\le\sqrt M R_0$. Hence
\[
c_m(\eta)
=O\left(
\sqrt{\frac{Md_0\log(1+2R_0\sqrt{Mm})+\log(1/\eta)+1}{m}}
\right).
\]
For fixed class constants and nuisance rates $\epsilon_{\pi,n}=O_{\mathbb P}(a_{\pi,n})$ and $\epsilon_{\mu,n}=O_{\mathbb P}(a_{\mu,n})$, \Cref{thm:feasible-criterion-uniform} gives criterion error
\[
O_{\mathbb P}\left(
\sqrt{\frac{Md_0\log(1+2R_0\sqrt{Mm})+1}{m}}
+a_{\pi,n}+a_{\mu,n}^2
\right).
\]
If $M$ increases, the sampling term vanishes when $Md_0\log(1+2R_0\sqrt{Mm})=o(m)$, provided the regularity constants remain bounded. The memberwise procedure in \Cref{alg:dr-dk-causal-mmd} uses the $M=1$ bound for each trained member.

\begin{thm}[Approximation of unregularised efficacy on a variance-stable class]\label{thm:feasible-adaptation}
Let $\Theta_s:=\{\theta\in\Theta:s_\theta^2\ge s^2\}$ be nonempty for $s>0$, and suppose \Cref{assum:identification,assum:selection-regularity} hold. There exist finite constants $C_v,C_s$, where $C_v$ depends only on the constants in \Cref{assum:selection-regularity} and $C_s$ may additionally depend on $s$, such that the following holds. On every $\mathcal F_{\mathrm{fit}}$ realisation satisfying
\[
C_v\{c_m(\eta)+\epsilon_{\pi,n}+\epsilon_{\mu,n}^2\}
\le s^2/2,
\]
for every $\lambda_m>0$, with conditional probability at least $1-\eta$,
\begin{equation}
\label{eq:feasible-adaptation-bound}
\sup_{\theta\in\Theta_s}
|\widehat J_{\lambda_m,\theta}^{\mathrm{DR}}-J_\theta|
\le
C_s\{c_m(\eta)+\epsilon_{\pi,n}\epsilon_{\mu,n}
+\epsilon_{\pi,n}+\epsilon_{\mu,n}^2+\lambda_m\}.
\end{equation}
Suppose also that deterministic sequences $a_{\pi,n},a_{\mu,n}\to0$ satisfy $\epsilon_{\pi,n}=O_{\mathbb P}(a_{\pi,n})$ and $\epsilon_{\mu,n}=O_{\mathbb P}(a_{\mu,n})$, and that
\[
\frac{d_\Theta\log(1+2R_\Theta\sqrt m)+1}{m}\longrightarrow0,
\qquad
\lambda_m
=O\left(
\sqrt{\frac{d_\Theta\log(1+2R_\Theta\sqrt m)+1}{m}}
\right).
\]
Then the margin holds with probability tending to one for fixed $s$ and fixed $\eta$, and the left-hand side of \eqref{eq:feasible-adaptation-bound} is
\[
O_{\mathbb P}\left(
\sqrt{\frac{d_\Theta\log(1+2R_\Theta\sqrt m)+1}{m}}
+a_{\pi,n}+a_{\mu,n}^2
\right).
\]
\end{thm}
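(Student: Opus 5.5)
The proof mirrors that of \Cref{thm:oracle-adaptation}, with the oracle concentration replaced by the fitted componentwise bounds already obtained in the proof of \Cref{thm:feasible-criterion-uniform}. Condition on $\mathcal F_{\mathrm{fit}}$. The first step is to fix the constant $C_v$: take it to be the constant in \eqref{eq:uniform-variance-rate}. That constant depends only on $(\kappa,W,A,B_\mu,L)$, through \Cref{lem:fitted-score-envelope}, the covering argument of \Cref{lem:oracle-concentration} applied with $(\widetilde B_h,L)$, and \Cref{lem:nuisance-population-gap}. Then, with conditional probability at least $1-\eta$, \eqref{eq:uniform-numerator-rate} and \eqref{eq:uniform-variance-rate} hold simultaneously over all of $\Theta$. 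On this event the margin condition gives
\[
\sup_{\theta\in\Theta}|\hat s_\theta^2-s_\theta^2|\le s^2/2 .
\]
Hence every $\theta\in\Theta_s$ satisfies $\hat s_\theta^2\ge s^2/2$.

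The second step applies \Cref{lem:stable-ratio} pointwise on $\Theta_s$ with
\[
D=D_\theta,\quad \widehat D=\widehat D_\theta^{\mathrm{cross}},\quad v=s_\theta^2,\quad \widehat v=\hat s_\theta^2,\quad \lambda=\lambda_m,\quad B=4\kappa .
\]
The choice $B=4\kappa$ is valid because $\|\Delta_\theta\|\le2\sqrt\kappa$. This yields
\[
|\widehat J^{\mathrm{DR}}_{\lambda_m,\theta}-J_\theta|\le \frac{\sqrt2}{s}C\{c_m(\eta)+\epsilon_{\pi,n}\epsilon_{\mu,n}\}+\frac{4\sqrt2\kappa}{s^3}\bigl[C\{c_m(\eta)+\epsilon_{\pi,n}+\epsilon_{\mu,n}^2\}+\lambda_m\bigr].
\]
Taking the supremum over $\Theta_s$ gives \eqref{eq:feasible-adaptation-bound}, with $C_s$ collecting the factors $s^{-1}$ and $s^{-3}$. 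Note that $\lambda_m$ need not be bounded below here, because the lower bound on the denominator now comes from the variance class rather than from the regulariser. This is the main distinction from \Cref{thm:feasible-criterion-uniform}.

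For the asymptotic statement, first show that the margin holds with probability tending to one. Under the stated growth condition, $c_m(\eta)\to0$ for fixed $\eta$. Also $\epsilon_{\pi,n}+\epsilon_{\mu,n}^2=O_{\mathbb P}(a_{\pi,n}+a_{\mu,n}^2)=o_{\mathbb P}(1)$, so $\mathbb P(C_v\{\cdots\}\le s^2/2)\to1$. Next convert the conditional high-probability bound into an $O_{\mathbb P}$ statement. Given $\varepsilon>0$, choose $\eta=\varepsilon/2$. The unconditional probability that the bound fails is at most $\eta$ plus the probability that the margin fails. On the bound event, the right-hand side is $C_s$ times a quantity whose sampling part satisfies
\[
c_m(\eta)\lesssim\sqrt{\{d_\Theta\log(1+2R_\Theta\sqrt m)+1\}/m}
\]
for fixed $\eta$. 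The regulariser $\lambda_m$ is of the same order by assumption. The nuisance part is $O_{\mathbb P}(a_{\pi,n}\wedge1\cdot a_{\mu,n}+a_{\pi,n}+a_{\mu,n}^2)$, and the product term is absorbed since $a_{\pi,n}a_{\mu,n}\le a_{\pi,n}$ eventually.

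The main obstacle I expect is bookkeeping rather than analysis. The constants in \eqref{eq:uniform-variance-rate} must genuinely be independent of $s$, so that $C_v$ is as claimed. The passage from conditional probability to unconditional $O_{\mathbb P}$ also needs care: the margin event is $\mathcal F_{\mathrm{fit}}$-measurable, which requires the assumed measurability of $\epsilon_{\mu,n}^2$, so one can integrate the conditional failure probability over it.
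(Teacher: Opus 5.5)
Your proposal is correct and follows essentially the same route as the paper. The paper likewise takes $C_v$ from the fitted uniform variance bound, which does not depend on the ridge or on $s$. It deduces $\hat s_\theta^2\ge s^2/2$ on $\Theta_s$ on the concentration event, applies the stable-ratio lemma with $B=4\kappa$, and obtains the rate from the definition of $c_m(\eta)$. Your explicit passage from the conditional bound to the unconditional $O_{\mathbb P}$ statement, using the $\mathcal F_{\mathrm{fit}}$-measurability of the margin event, fills in a step the paper only asserts.
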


\begin{proof}
The proof of \eqref{eq:uniform-variance-rate} supplies the constant $C_v$ independently of the ridge. On its concentration event, the assumed inequality gives
$\widehat s_\theta^2\ge s^2/2$ throughout $\Theta_s$. Apply \Cref{lem:stable-ratio} with $D=D_\theta$, $\widehat D=\widehat D_\theta^{\mathrm{cross}}$, $v=s_\theta^2$, $\widehat v=\widehat s_\theta^2$, and $B=4\kappa$. Equations~\eqref{eq:uniform-numerator-rate} to \eqref{eq:uniform-variance-rate} then give \eqref{eq:feasible-adaptation-bound}. The rate statement follows from \eqref{eq:kernel-class-sampling-term}.
\end{proof}

Under the preceding ridge choice, for $n\asymp m$ and fixed class constants, sufficient conditions for a uniform approximation rate of $O_{\mathbb P}(\sqrt{\log m/m})$ for the unregularised criterion $J_\theta$ on $\Theta_s$ are $\epsilon_{\pi,n}=O_{\mathbb P}(\sqrt{\log m/m})$ and $\epsilon_{\mu,n}=O_{\mathbb P}((\log m/m)^{1/4})$. These learner rates, including uniformity over $\theta$ for the regression, are assumed rather than proved here.

\begin{cor}[Known logging policy]\label{cor:known-logging}
Under the conditions of \Cref{thm:feasible-criterion-uniform}, if $\hat\pi_0=\pi_0$ on the support of $|r|$, then, for every $\lambda\ge\lambda_0$, with conditional probability at least $1-\eta$,
\[
\sup_{\theta\in\Theta}
|\widehat J_{\lambda,\theta}^{\mathrm{DR}}-J_{\lambda,\theta}|
\le
C\{c_m(\eta)+\epsilon_{\mu,n}^2\}.
\]
If, in addition, $\Theta_s$ is nonempty and
\[
C_v\{c_m(\eta)+\epsilon_{\mu,n}^2\}\le s^2/2,
\]
then, for every $\lambda_m>0$, with conditional probability at least $1-\eta$,
\[
\sup_{\theta\in\Theta_s}
|\widehat J_{\lambda_m,\theta}^{\mathrm{DR}}-J_\theta|
\le
C_s\{c_m(\eta)+\epsilon_{\mu,n}^2+\lambda_m\}.
\]
\end{cor}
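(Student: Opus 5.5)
The plan is to specialise the proofs of \Cref{thm:feasible-criterion-uniform} and \Cref{thm:feasible-adaptation} to the case $\hat\pi_0=\pi_0$ on the support of $|r|$, and to replace the generic variance perturbation \eqref{eq:sharp-variance-perturbation} by the sharper known-logging identity \eqref{eq:known-logging-variance-inflation} of \Cref{lem:nuisance-population-gap}. When $\hat\pi_0=\pi_0$ we have $\alpha\equiv0$ on $\{|r|>0\}$, so $\epsilon_{\pi,n}=0$. The selection-regularity bound $|\pi_0/\hat\pi_0-1|\le A$ holds trivially, with any $A\ge0$, so the constants of \Cref{assum:selection-regularity} are unchanged.

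\textbf{Numerator and variance.} First I invoke the conditional concentration \eqref{eq:fitted-score-concentration}, which depends only on the envelope of \Cref{lem:fitted-score-envelope} and the Lipschitz constant $L$. It gives, with conditional probability at least $1-\eta$, uniform deviations of $\widehat D_\theta^{\mathrm{cross}}$ from $\widetilde D_\theta$ and of $\hat s_\theta^2$ from $\widetilde s_\theta^2$, each of order $Cc_m(\eta)$. Next, \eqref{eq:known-logging-variance-inflation} yields $\widetilde D_\theta=D_\theta$ exactly and
\[
0\le\widetilde s_\theta^2-s_\theta^2\le4\kappa W\epsilon_{\mu,n}^2
\]
uniformly in $\theta$, since $\epsilon_{\mu,n}$ is already a supremum over $\Theta$. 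Combining these gives the improved componentwise bounds
\[
\sup_\theta|\widehat D_\theta^{\mathrm{cross}}-D_\theta|\le Cc_m(\eta),\qquad \sup_\theta|\hat s_\theta^2-s_\theta^2|\le C\{c_m(\eta)+\epsilon_{\mu,n}^2\}.
\]
These replace \eqref{eq:uniform-numerator-rate}--\eqref{eq:uniform-variance-rate}.

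\textbf{First claim.} I repeat the ratio decomposition in the proof of \Cref{thm:feasible-criterion-uniform}, using $D_\theta\le4\kappa$, $\lambda\ge\lambda_0$, and the derivative bound on $(v+\lambda)^{-1/2}$. This gives
\[
\sup_\theta|\widehat J^{\mathrm{DR}}_{\lambda,\theta}-J_{\lambda,\theta}|\le C\{c_m(\eta)+\epsilon_{\mu,n}^2\}.
\]

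\textbf{Second claim.} I follow the proof of \Cref{thm:feasible-adaptation}. The constant $C_v$ comes from the variance bound above and does not depend on the ridge. On the concentration event, the margin $C_v\{c_m(\eta)+\epsilon_{\mu,n}^2\}\le s^2/2$ gives $\hat s_\theta^2\ge s^2/2$ on $\Theta_s$. I then apply \Cref{lem:stable-ratio} pointwise with $B=4\kappa$, $v=s_\theta^2$ and $\widehat v=\hat s_\theta^2$, and take the supremum over $\Theta_s$. This yields the bound $C_s\{c_m(\eta)+\epsilon_{\mu,n}^2+\lambda_m\}$.

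\textbf{Main obstacle.} The only delicate point is making sure the linear term in $\epsilon_{\mu,n}$ really vanishes in the variance perturbation. That is the cancellation \eqref{eq:outcome-linear-cancellation}, which is already established within \Cref{lem:nuisance-population-gap}. The other thing to check is that the fitted-class constants remain valid when $A$ is arbitrary, which is immediate.
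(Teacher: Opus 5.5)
Your proposal is correct and follows the paper's proof essentially step for step. With $\epsilon_{\pi,n}=0$, you use \eqref{eq:known-logging-variance-inflation} to get $\widetilde D_\theta=D_\theta$ and $0\le\widetilde s_\theta^2-s_\theta^2\le4\kappa W\epsilon_{\mu,n}^2$, combine these with \eqref{eq:fitted-score-concentration} for the fixed-ridge bound, and rerun the stable-ratio argument of \Cref{thm:feasible-adaptation} for the second claim. As a side remark, the generic perturbation bounds \eqref{eq:sharp-numerator-perturbation}--\eqref{eq:sharp-variance-perturbation} with $\epsilon_{\pi,n}=0$ would already suffice, so the sharper identity is convenient but not essential.
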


\begin{proof}
With known logging, $\epsilon_{\pi,n}=0$, the population nuisance perturbation in the numerator vanishes, and \eqref{eq:known-logging-variance-inflation} bounds the population variance perturbation by $4\kappa W\epsilon_{\mu,n}^2$. Combining these facts with \eqref{eq:fitted-score-concentration} proves the fixed-ridge result. The stable-ratio argument in the proof of \Cref{thm:feasible-adaptation} proves the second result.
\end{proof}

\begin{rem}[Approximate optimisation and held-out inference]\label{rem:approximate-optimisation} The comparison below follows \citet[Corollary~13]{liu2021learningdeepkernelsnonparametric}, with an optimisation error as allowed in their Corollary~12, footnote~7.
Let $\lambda>0$. On an event where $\sup_{\theta\in\Theta}|\widehat J_{\lambda,\theta}^{\mathrm{DR}}-J_{\lambda,\theta}|\le\epsilon$, suppose a selected $\hat\theta\in\Theta$ satisfies
\[
\widehat J_{\lambda,\hat\theta}^{\mathrm{DR}}
\ge\sup_{\theta\in\Theta}\widehat J_{\lambda,\theta}^{\mathrm{DR}}-\delta_{\mathrm{opt}},
\qquad \delta_{\mathrm{opt}}\ge0.
\]
Then
\begin{align*}
J_{\lambda,\hat\theta}
&\ge\widehat J_{\lambda,\hat\theta}^{\mathrm{DR}}-\epsilon\\
&\ge\sup_{\theta\in\Theta}\widehat J_{\lambda,\theta}^{\mathrm{DR}}-\delta_{\mathrm{opt}}-\epsilon\\
&\ge\sup_{\theta\in\Theta}J_{\lambda,\theta}-2\epsilon-\delta_{\mathrm{opt}}.
\end{align*}
\Cref{thm:feasible-criterion-uniform} bounds $\epsilon$ for $\lambda\ge\lambda_0>0$, but we do not bound the optimisation error $\delta_{\mathrm{opt}}$ of the training procedure. Separate member training gives no guarantee of joint optimisation over $\Xi^M$.

For $\lambda_m\to0$, \Cref{thm:feasible-adaptation} gives the same comparison for $J_\theta$ on $\Theta_s$, provided $\hat\theta\in\Theta_s$ and the empirical optimisation inequality holds over $\Theta_s$. Held-out validity under \Cref{thm:heldout-validity} does not require global optimisation.
\end{rem}

\subsection{Null Second-Order Structure}
\label{sec:null-second-order-structure}

We use the conditional notation and nuisance errors from \Cref{sec:uniform-criterion-theory}.

\begin{prop}[Null second-order structure]\label{prop:null-second-order-structure}
Fix $\theta$ and assume all terms below are integrable. Consider a regular perturbation $t\mapsto\mathbb P_t$ through $\mathbb P_0$ for which the embedding contrast satisfies
$\Delta_{\theta,t}=\Delta_{\theta,0}+t\dot\Delta_\theta+o(t)$ in $\mathcal H_\theta$. Under $H_0$,
\begin{equation}
\label{eq:null-local-quadratic-expansion}
D_{\theta,t}
=
t^2\|\dot\Delta_\theta\|_{\mathcal H_\theta}^2+o(t^2).
\end{equation}
If the perturbation has score $\ell\in L_0^2(\mathbb P_0)$ and
$\dot\Delta_\theta=\mathbb E_0[g_\theta(O)\ell(O)]$, then the quadratic form in \eqref{eq:null-local-quadratic-expansion} is induced by
\[
h_{\theta,0}(o_1,o_2)
:=
\langle g_\theta(o_1),g_\theta(o_2)\rangle_{\mathcal H_\theta}.
\]
Equivalently, $D_{\theta,0}''[\ell,\ell]=2\mathbb E_0^{\otimes2}[h_{\theta,0}(O_1,O_2)\ell(O_1)\ell(O_2)]$; thus, under the conventional $t^2/2$ Taylor normalisation, the order-two coefficient is $2h_{\theta,0}$, whereas $h_{\theta,0}$ is the kernel averaged by the corresponding U-statistic. Moreover,
\[
\mathbb E_0[h_{\theta,0}(O_1,O_2)\mid O_1]
=
\mathbb E_0[h_{\theta,0}(O_1,O_2)\mid O_2]
=0
\quad\text{almost surely},
\]
so $h_{\theta,0}$ is one-degenerate.

For the fitted cross-product, suppose \Cref{assum:identification,assum:selection-regularity} hold, let
$\xi_{\theta,i}:=\widetilde g_\theta(O_i)-\widetilde\Delta_\theta$ and let $\bar\xi_{\theta,\ell}$ be its mean over selection half $\ell$. Under $H_0$, $\widetilde\Delta_\theta=B_\theta$ and
\begin{align}
\widehat D_\theta^{\mathrm{cross}}
&=
\|B_\theta\|^2
+\langle B_\theta,\bar\xi_{\theta,1}+\bar\xi_{\theta,2}\rangle
+\langle\bar\xi_{\theta,1},\bar\xi_{\theta,2}\rangle,
\label{eq:null-cross-decomposition}\\
\widehat s_\theta^2
&=
\frac1m\sum_{j\in\mathcal I_2}
\left\langle
B_\theta+\bar\xi_{\theta,1},
\xi_{\theta,j}-\bar\xi_{\theta,2}
\right\rangle^2.
\label{eq:null-variance-decomposition}
\end{align}
Consequently,
$\mathbb E_0[\widehat D_\theta^{\mathrm{cross}}\mid\mathcal F_{\mathrm{fit}}]=\|B_\theta\|^2\le\epsilon_{\pi,n}^2\epsilon_{\mu,n}^2$ and, for this fixed $\theta$,
\begin{equation}
\label{eq:null-second-order-rate}
\widehat D_\theta^{\mathrm{cross}}
=
O_{\mathbb P}\left(
\{\epsilon_{\pi,n}\epsilon_{\mu,n}+m^{-1/2}\}^2
\middle|\mathcal F_{\mathrm{fit}}
\right),
\qquad
\widehat s_\theta^2
=
O_{\mathbb P}\left(
\{\epsilon_{\pi,n}\epsilon_{\mu,n}+m^{-1/2}\}^2
\middle|\mathcal F_{\mathrm{fit}}
\right).
\end{equation}
\end{prop}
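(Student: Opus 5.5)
The proof has two parts. The first treats a smooth perturbation of the null, expanding $D_{\theta,t}$ to second order. The second decomposes the fitted cross-statistic into a bias part and a centred part, then bounds each conditionally.

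\emph{Population part.} Under $H_0$ we have $\Delta_{\theta,0}=0$. Substituting the expansion gives $\Delta_{\theta,t}=t\dot\Delta_\theta+o(t)$, so
\[
D_{\theta,t}=\|t\dot\Delta_\theta+o(t)\|^2=t^2\|\dot\Delta_\theta\|^2+o(t^2),
\]
where the cross term is $t\cdot o(t)$. If $\dot\Delta_\theta=\mathbb E_0[g_\theta(O)\ell(O)]$, Bochner linearity lets us move the inner product inside the expectations:
\[
\|\dot\Delta_\theta\|^2=\mathbb E_0^{\otimes2}[h_{\theta,0}(O_1,O_2)\ell(O_1)\ell(O_2)].
\]
Doubling this gives the stated second derivative. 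For degeneracy, independence of $O_1,O_2$ gives
\[
\mathbb E_0[h_{\theta,0}(O_1,O_2)\mid O_1]=\langle g_\theta(O_1),\mathbb E_0 g_\theta(O_2)\rangle=\langle g_\theta(O_1),\Delta_{\theta,0}\rangle.
\]
By Proposition~\ref{prop:dr-representation} this equals $\langle g_\theta(O_1),0\rangle=0$ under $H_0$, and the same holds conditioning on $O_2$ by symmetry.

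\emph{Fitted decomposition.} Conditionally on $\mathcal F_{\mathrm{fit}}$, Lemma~\ref{lem:sharp-nuisance-decomposition} gives $\widetilde\Delta_\theta=\Delta_\theta+B_\theta=B_\theta$ under $H_0$. Writing $\bar g_{\theta,\ell}=B_\theta+\bar\xi_{\theta,\ell}$ and expanding $\langle\bar g_{\theta,1},\bar g_{\theta,2}\rangle$ bilinearly yields \eqref{eq:null-cross-decomposition}. For the variance, $U_{\theta,j}-\widehat D^{\mathrm{cross}}_\theta=\langle\bar g_{\theta,1},\hat g_{\theta,j}-\bar g_{\theta,2}\rangle$. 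In this difference $B_\theta$ cancels inside the second argument, leaving $\xi_{\theta,j}-\bar\xi_{\theta,2}$, and squaring and averaging gives \eqref{eq:null-variance-decomposition}.

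\emph{Conditional moments and rates.} Taking conditional expectations in \eqref{eq:null-cross-decomposition}, the linear terms vanish because the $\xi_{\theta,i}$ are centred. The product term also vanishes, since the two halves are independent and centred. This leaves $\|B_\theta\|^2\le\epsilon_{\pi,n}^2\epsilon_{\mu,n}^2$ by \eqref{eq:exact-fitted-mean-bias}. Lemma~\ref{lem:fitted-score-envelope} bounds the scores by $M_g$, hence $\mathbb E\|\bar\xi_{\theta,\ell}\|^2\le 4M_g^2/m$ and $\|\bar\xi_{\theta,\ell}\|=O_{\mathbb P}(m^{-1/2})$ by conditional Markov. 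Cauchy--Schwarz then bounds each of the three terms of \eqref{eq:null-cross-decomposition} by a multiple of
\[
\bigl(\|B_\theta\|+\|\bar\xi_{\theta,1}\|+\|\bar\xi_{\theta,2}\|\bigr)^2=O_{\mathbb P}\bigl(\{\epsilon_{\pi,n}\epsilon_{\mu,n}+m^{-1/2}\}^2\bigr).
\]
For $\widehat s^2_\theta$, Cauchy--Schwarz gives
\[
\widehat s^2_\theta\le\|B_\theta+\bar\xi_{\theta,1}\|^2\cdot\frac1m\sum_{j\in\mathcal I_2}\|\xi_{\theta,j}-\bar\xi_{\theta,2}\|^2,
\]
and the second factor is bounded by $4M_g^2$. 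The first factor has the same order as above, which gives \eqref{eq:null-second-order-rate}.

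I expect the main obstacle to be the perturbation statement rather than the fitted part, which is mechanical once the envelope and bias bounds are available. The hypotheses there are largely assumed ("all terms integrable", the given derivative form), so the care lies in making the $o(t)$ remainder legitimate in the norm expansion. One also needs to justify exchanging the Bochner expectation with the inner product for $\dot\Delta_\theta$, using integrability of $\|g_\theta\|\,|\ell|$.
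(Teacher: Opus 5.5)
Your proposal is correct and follows essentially the same route as the paper. You square the Hilbert expansion with $\Delta_{\theta,0}=0$, use $\mathbb E_0 g_\theta=0$ for one-degeneracy, write the half-means as $B_\theta+\bar\xi_{\theta,\ell}$ and expand, and combine $\|B_\theta\|\le\epsilon_{\pi,n}\epsilon_{\mu,n}$ with $\|\bar\xi_{\theta,\ell}\|=O_{\mathbb P}(m^{-1/2})$; your use of the envelope $M_g$ from Lemma~\ref{lem:fitted-score-envelope}, bounding the empirical second-moment factor by $4M_g^2$, simply makes the paper's $O_{\mathbb P}(1)$ step explicit.
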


\begin{proof}
Under $H_0$, $\Delta_{\theta,0}=0$. Squaring the Hilbert-space expansion gives
\[
\|t\dot\Delta_\theta+o(t)\|^2
=t^2\|\dot\Delta_\theta\|^2+o(t^2),
\]
which proves \eqref{eq:null-local-quadratic-expansion}. If
$\dot\Delta_\theta=\mathbb E_0[g_\theta\ell]$, then
\[
\|\dot\Delta_\theta\|^2
=
\mathbb E_0^{\otimes2}\left[
h_{\theta,0}(O_1,O_2)\ell(O_1)\ell(O_2)
\right].
\]
Multiplying this identity by two gives the asserted second-derivative convention. Since $\mathbb E_0g_\theta=\Delta_{\theta,0}=0$,
\[
\mathbb E_0[h_{\theta,0}(O_1,O_2)\mid O_1]
=\langle g_\theta(O_1),\mathbb E_0g_\theta(O_2)\rangle=0
\quad\text{almost surely},
\]
and similarly conditioning on $O_2$. This is the asserted one-degeneracy.

For the empirical identities, write the two fitted half-means as
$B_\theta+\bar\xi_{\theta,1}$ and $B_\theta+\bar\xi_{\theta,2}$ and expand their inner product. Subtracting the second-half projected mean from each projected observation gives \eqref{eq:null-variance-decomposition}. Independence and centring of the two halves then give the conditional expectation. Finally, the Hilbert-space second-moment bound yields
$\bar\xi_{\theta,\ell}=O_{\mathbb P}(m^{-1/2}\mid\mathcal F_{\mathrm{fit}})$, while
$m^{-1}\sum_{j\in\mathcal I_2}\|\xi_{\theta,j}-\bar\xi_{\theta,2}\|^2=O_{\mathbb P}(1\mid\mathcal F_{\mathrm{fit}})$. Combining these facts with \eqref{eq:exact-fitted-mean-bias} proves \eqref{eq:null-second-order-rate}.
\end{proof}

This expansion parallels \citet[Theorem~2]{luedtke2019omnibus}, whose U-statistic uses all off-diagonal pairs. Here the cross-product uses independent halves, with conditional studentisation as in \citet{kim2024dimension,shekhar2022permutation}. With fitted nuisances, only the centred term in \eqref{eq:null-cross-decomposition} is degenerate; the $B_\theta$ terms account for nuisance bias.

\subsection{Held-Out Fixed-Alternative Scaling}
\label{sec:heldout-alternative}

\Cref{prop:fixed-kernel-efficacy} concerns a fixed kernel and true nuisance functions. We now extend its scaling limit to the selected kernel with fitted nuisances, conditional on the learning procedure. Write $m_{\mathrm{test}}$ for the size of each test half.

\begin{thm}[Held-out fixed-alternative scaling]\label{thm:heldout-alternative}
Suppose \Cref{assum:identification,assum:embedding-integrability} hold. Consider a joint sequence with $m_{\mathrm{test}}\to\infty$ under a fixed alternative. Assume that the final fitted score is $\mathcal F_{\mathrm{learn}}$-measurable and conditionally Bochner integrable, and that the two test halves consist of i.i.d. observations from $\mathbb P_0$, are mutually independent, and are independent of $\mathcal F_{\mathrm{learn}}$. For a random fitted score $f:\mathcal O\to\mathcal H_{\hat\theta}$, write
\[
\|f\|_{2,\mathrm{learn}}^2
:=
\mathbb E_0[\|f(O)\|_{\mathcal H_{\hat\theta}}^2
\mid\mathcal F_{\mathrm{learn}}].
\]
For some fixed $s>0$, suppose $\mathbb P(\hat\theta\in\Theta_s)\to1$, define $J_{\hat\theta}:=0$ on $\{\hat\theta\notin\Theta_s\}$, and assume
\[
\|\widehat g_{\hat\theta}-g_{\hat\theta}^{\pi,\pi'}\|_{2,\mathrm{learn}}
=
o_{\mathbb P}(1),
\qquad
\mathbb E_0[\|\widehat g_{\hat\theta}(O)\|_{\mathcal H_{\hat\theta}}^4
\mid\mathcal F_{\mathrm{learn}}]
=O_{\mathbb P}(1).
\]
Then
\begin{equation}
\label{eq:heldout-alternative-limit}
\frac{T_{\hat\theta}}{\sqrt{m_{\mathrm{test}}}}
-
J_{\hat\theta}
\xrightarrow{\mathbb P}0.
\end{equation}
\end{thm}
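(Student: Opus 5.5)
The plan is to mimic the proof of \Cref{prop:fixed-kernel-efficacy}, but conditionally on $\mathcal F_{\mathrm{learn}}$, and to replace the oracle law of large numbers by conditional second-moment (Chebyshev/Markov) bounds that carry a random, $\hat\theta$-dependent target. Write $\hat g:=\widehat g_{\hat\theta}$, $g:=g_{\hat\theta}^{\pi,\pi'}$, $\widetilde\Delta:=\mathbb E_0[\hat g(O)\mid\mathcal F_{\mathrm{learn}}]$ and $\Delta:=\Delta_{\hat\theta}$, with norms in $\mathcal H_{\hat\theta}$. First, by conditional Jensen and the $L^2$ assumption,
\[
\|\widetilde\Delta-\Delta\|\le\|\hat g-g\|_{2,\mathrm{learn}}=o_{\mathbb P}(1),
\]
using $\mathbb E_0 g=\Delta$ from \Cref{prop:dr-representation}. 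Next, conditional independence of test observations gives $\mathbb E[\|\bar g_{\ell}-\widetilde\Delta\|^2\mid\mathcal F_{\mathrm{learn}}]\le m^{-1}\|\hat g\|_{2,\mathrm{learn}}^2=O_{\mathbb P}(m^{-1})$, since the fourth moment controls the second. By conditional Markov, averaged as in the proof of \Cref{thm:heldout-validity} (bounded conditional probabilities tending to zero in probability imply unconditional convergence), $\bar g_\ell-\Delta=o_{\mathbb P}(1)$. Continuity of the inner product, together with the $O_{\mathbb P}(1)$ bound $\|\Delta\|\le\|g\|_{2,\mathrm{learn}}$ obtained from the triangle inequality, then gives $\widehat D^{\mathrm{cross}}_{\hat\theta}-D_{\hat\theta}=o_{\mathbb P}(1)$.

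For the variance, write $\hat s^2_{\hat\theta}=m^{-1}\sum_{j\in\mathcal I_2}V_j^2-(\widehat D^{\mathrm{cross}}_{\hat\theta})^2$ with $V_j=\langle\bar g_1,\hat g(O_j)\rangle$, and compare it with $W_j:=\langle\Delta,g(O_j)\rangle$. The comparison goes through an intermediate $\widetilde W_j:=\langle\Delta,\hat g(O_j)\rangle$. The difference $|V_j^2-\widetilde W_j^2|$ is bounded by $\|\bar g_1-\Delta\|(\|\bar g_1\|+\|\Delta\|)\|\hat g(O_j)\|^2$, and its empirical average is $o_{\mathbb P}(1)$, because $m^{-1}\sum\|\hat g(O_j)\|^2=O_{\mathbb P}(1)$ by conditional Markov. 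The difference $\mathbb E_0[|\widetilde W^2-W^2|\mid\mathcal F_{\mathrm{learn}}]$ is bounded by $\|\Delta\|^2\|\hat g-g\|_{2,\mathrm{learn}}(\|\hat g\|_{2,\mathrm{learn}}+\|g\|_{2,\mathrm{learn}})=o_{\mathbb P}(1)$.

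The step I expect to be the main obstacle is the empirical average of $W_j^2$, since the kernel is random and no fixed-$\theta$ LLN applies. I would use conditional Chebyshev, whose variance is at most $m^{-1}\|\Delta\|^4\mathbb E_0[\|g\|^4\mid\mathcal F_{\mathrm{learn}}]$. The oracle fourth moment is not assumed directly, so I would instead apply Chebyshev to $\widetilde W_j^2$, whose fourth moment is controlled by the assumption on $\hat g$, and only then pass to the population second moment of $W$ via the $L^1$ comparison above. Combining these pieces gives
\[
\hat s^2_{\hat\theta}-\bigl(\mathbb E_0[W^2\mid\mathcal F_{\mathrm{learn}}]-D_{\hat\theta}^2\bigr)=\hat s^2_{\hat\theta}-s^2_{\hat\theta}=o_{\mathbb P}(1).
\]

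Finally, on $\{\hat\theta\in\Theta_s\}$, whose probability tends to one, we have $s_{\hat\theta}\ge s$. Hence $\mathbb P(\hat s_{\hat\theta}>s/2)\to1$, and $x\mapsto1/\sqrt x$ is Lipschitz on $[s^2/4,\infty)$. Since $D_{\hat\theta}=O_{\mathbb P}(1)$, the decomposition $T_{\hat\theta}/\sqrt m-J_{\hat\theta}=(\widehat D^{\mathrm{cross}}-D)/\hat s+D(1/\hat s-1/s)$ is $o_{\mathbb P}(1)$. The complementary events, namely $\hat\theta\notin\Theta_s$ (where $J_{\hat\theta}:=0$) and $\hat s_{\hat\theta}=0$, have vanishing probability and do not affect convergence in probability.
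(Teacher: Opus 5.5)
Your proof is correct and follows essentially the same route as the paper. Both condition on $\mathcal F_{\mathrm{learn}}$, control sampling error by conditional Chebyshev/Markov bounds driven by the fitted fourth moment, use conditional Jensen for $\|\mathbb E_0[\widehat g_{\hat\theta}(O)\mid\mathcal F_{\mathrm{learn}}]-\Delta_{\hat\theta}\|$, compare projected scores using the $L^2$ score convergence, and divide stably on $\{\hat\theta\in\Theta_s\}$. The only difference is bookkeeping: the paper first concentrates around the fitted conditional population quantities (squared norm of the fitted mean and conditional variance of the fitted projection) and then compares these to $D_{\hat\theta}$ and $s_{\hat\theta}^2$, whereas you project onto $\Delta_{\hat\theta}$ at the sample level via $\widetilde W_j$, which makes explicit why no oracle fourth moment is needed.
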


\begin{proof}[Proof of \Cref{thm:heldout-alternative}]
Condition on $\mathcal F_{\mathrm{learn}}$ and write
\[
G:=\widehat g_{\hat\theta},
\qquad
g:=g_{\hat\theta}^{\pi,\pi'},
\qquad
\mu_G:=\mathbb E_0[G(O)\mid\mathcal F_{\mathrm{learn}}],
\qquad
\Delta:=\Delta_{\hat\theta}.
\]
The conditional fourth-moment bound and conditional laws of large numbers
on the two independent test halves give
\begin{align*}
\widehat D_{\hat\theta}^{\mathrm{cross}}
-\|\mu_G\|_{\mathcal H_{\hat\theta}}^2
&=o_{\mathbb P}(1),\\
\hat s_{\hat\theta}^2
-\operatorname{Var}_{\mathbb P_0}
\left(
\langle\mu_G,G(O)\rangle_{\mathcal H_{\hat\theta}}
\middle|\mathcal F_{\mathrm{learn}}
\right)
&=o_{\mathbb P}(1).
\end{align*}
For the second line, first condition additionally on the first test half;
the second-half studentiser is a scalar sample variance. The conditional
fourth-moment bound makes its law of large numbers uniform along the joint
sequence, and replacing the first-half mean by $\mu_G$ costs
$o_{\mathbb P}(1)$.

Since $\mathbb E_0[g(O)\mid\mathcal F_{\mathrm{learn}}]=\Delta$, conditional Jensen's inequality and the assumed score convergence give
\[
\|\mu_G-\Delta\|_{\mathcal H_{\hat\theta}}
\le
\|G-g\|_{2,\mathrm{learn}}
=o_{\mathbb P}(1).
\]
Moreover,
\begin{align*}
&\left\|
\langle\mu_G,G(O)\rangle_{\mathcal H_{\hat\theta}}
-\langle\Delta,g(O)\rangle_{\mathcal H_{\hat\theta}}
\right\|_{L^2(\mathbb P_0\mid\mathcal F_{\mathrm{learn}})}\\
&\qquad\le
\|\mu_G-\Delta\|_{\mathcal H_{\hat\theta}}
\|G\|_{2,\mathrm{learn}}
+
\|\Delta\|_{\mathcal H_{\hat\theta}}
\|G-g\|_{2,\mathrm{learn}}
=o_{\mathbb P}(1).
\end{align*}
Here $\|G\|_{2,\mathrm{learn}}=O_{\mathbb P}(1)$ follows from the
conditional fourth-moment bound, and
$\|\Delta\|=O_{\mathbb P}(1)$ follows from the preceding display
and $\|\mu_G\|\le\|G\|_{2,\mathrm{learn}}$. Thus the fitted-score numerator and variance differ from
$D_{\hat\theta}$ and $s_{\hat\theta}^2$, respectively, by
$o_{\mathbb P}(1)$. On $\{\hat\theta\in\Theta_s\}$, division is stable
because $s_{\hat\theta}\ge s$. In particular,
$\mathbb P(\hat s_{\hat\theta}=0)\to0$, and the zero-denominator convention
in \eqref{eq:test-statistic} does not affect the limit. It follows that
\[
\frac{T_{\hat\theta}}{\sqrt{m_{\mathrm{test}}}}
=J_{\hat\theta}+o_{\mathbb P}(1),
\]
which proves \eqref{eq:heldout-alternative-limit}.
\end{proof}

\FloatBarrier
\section{Experimental Details}\label{appendix:experimental-details}

We now detail our experimental setting. We first describe the training and testing procedure, then specify the construction of each experiment. Finally, we detail the neural network architectures and hyperparameters used for the optimisation.

\subsection{Training and Testing Procedure}
\label{sec:training-testing-details}

We describe the training and testing procedure in \Cref{alg:dr-dk-causal-mmd}. Each ensemble member is trained separately by maximising its own learning criterion, using the same learning sample $\cD_{\mathrm{learn}}$ but a different random initialisation. We estimate the outcome embedding using kernel ridge regression and the propensity estimate using logistic regression for binary treatments and a Gaussian logging-policy model for continuous treatments. For each member, we average the learning criteria across three rotations of the learning folds and optimise $\vartheta_b$ by gradient ascent. At each epoch, we update an exponential moving average (EMA) of the kernel parameters and compute its unregularised test statistic on $\cD_{\mathrm{val}}$, using nuisance estimators fitted on $\cD_{\mathrm{learn}}$. We retain the EMA checkpoint with the highest validation statistic for each member, then average the selected kernels as in \eqref{eq:deep-kernel}. Finally, we refit the nuisance estimators on $\cD_{\mathrm{learn}}\cup\cD_{\mathrm{val}}$ for testing.

We keep the selected kernels and fitted nuisance estimators fixed and evaluate all methods on the same 200 fresh test datasets $\cD_{\mathrm{test}}$ at $\alpha=0.05$. We perform 20 independent outer repetitions, drawing new learning and validation samples and repeating kernel training, checkpoint selection and nuisance estimation. We report the mean rejection rate $\pm$ one standard error across outer repetitions. The reported sample size is $N := |\cD_{\mathrm{learn}}| + |\cD_{\mathrm{val}}| + |\cD_{\mathrm{test}}|$ where each sample contains $N/3$ observations, up to integer rounding. Here, $|\cD_{\mathrm{test}}|$ denotes the size of each test dataset.

For experiments with binary treatments, we compare two deterministic interventions $\pi(1\mid x)=1$ and $\pi'(0\mid x)=1$.

For DR-KTE, we use the Gaussian outcome kernel $k(y,y')=\exp\{-\|\phi(y)-\phi(y')\|^2/\sigma^2\}$ where $\phi(y)$ is a numerical representation of the structured outcome. We use flattened pixel values for images, token IDs for text, node-label and parent-index encodings for graphs. We set $\sigma$ to the median pairwise Euclidean distance over the combined learning and validation outcomes, except on Causal-Blob, where we use the fifth percentile.

\begin{algorithm}[h]
\caption{Training and testing procedure of DR-KTE-Deep}
\label{alg:dr-dk-causal-mmd}
\begin{algorithmic}[1]
\Require Logged data $\cD$, target interventions $\pi,\pi'$, ensemble size $M$, member kernel family $\{k_\vartheta:\vartheta\in\Xi\}$, selection regularisation $\lambda>0$, epochs $T_{\max}$, EMA decay $\gamma$
\State Split $\cD$ into disjoint samples $\cD_{\mathrm{learn}}$, $\cD_{\mathrm{val}}$, and $\cD_{\mathrm{test}}$
\State Split $\cD_{\mathrm{learn}}$ into three equal folds $\cD^{(1)},\cD^{(2)},\cD^{(3)}$
\For{$b=1,\dots,M$}
    \State Initialise $\vartheta_b\in\Xi$ independently across members and set $\widetilde\vartheta_b\gets\vartheta_b$
    \For{$t=1,\dots,T_{\max}$}
        \For{$r=1,2,3$}
            \State Assign the folds cyclically to $\cD_{\mathrm{fit}}^{(r)},\cD_{\mathrm{sel},1}^{(r)},\cD_{\mathrm{sel},2}^{(r)}$
            \State Fit $\hat\pi_0^{(r)}$ and $\hat\mu_{\vartheta_b}^{(r)}$ on $\cD_{\mathrm{fit}}^{(r)}$
            \State Compute pseudo-features on $\cD_{\mathrm{sel},1}^{(r)}$ and $\cD_{\mathrm{sel},2}^{(r)}$
            \State Compute $\widehat J_{\lambda,\vartheta_b}^{\mathrm{DR},(r)}$ using \eqref{eq:selection-criterion}
        \EndFor
        \State Update $\vartheta_b$ by gradient ascent on $\frac{1}{3}\sum_{r=1}^3\widehat J_{\lambda,\vartheta_b}^{\mathrm{DR},(r)}$
        \State Update the EMA parameters $\widetilde\vartheta_b$
        \State Fit nuisances on $\cD_{\mathrm{learn}}$ and evaluate $T_{\widetilde\vartheta_b}$ on $\cD_{\mathrm{val}}$
    \EndFor
    \State Set $\hat\vartheta_b$ to the EMA checkpoint with the largest validation statistic on $\cD_{\mathrm{val}}$
\EndFor
\State Set $\hat\theta=(\hat\vartheta_1,\ldots,\hat\vartheta_M)$ and form $k_{\hat\theta}$ using \eqref{eq:deep-kernel}
\State Refit the nuisance estimators on the combined learning and validation samples $\cD_{\mathrm{learn}}\cup\cD_{\mathrm{val}}$
\State Split $\cD_{\mathrm{test}}$ into two halves $\mathcal I_1, \mathcal I_2$ of size $m_{\mathrm{test}}$
\State Compute the test pseudo-features and form $T_{\hat\theta}$ using \eqref{eq:test-statistic}
\State \Return $T_{\hat\theta}$ and the asymptotic $p$-value $1-\Phi(T_{\hat\theta})$
\end{algorithmic}
\end{algorithm}

\subsection{Image Outcomes}\label{appendix:medical-image-details}

We detail the construction of experiments on image outcomes from \Cref{sec:image-outcomes}. We extend the OCTMNIST experiments of \citet{zenati2026semiparametric}. We apply semi-synthetic perturbations to baseline images sampled from MedMNIST v2 \citep{medmnistv2}, released under CC BY-NC 4.0 for DermaMNIST and CC BY 4.0 for the remaining datasets.

For each unit, we sample a synthetic covariate
\begin{equation}
    X_i \in \mathbb{R}^{d_X}, \quad X_i \sim \mathcal{N}(0, I_{d_X}),
    \label{eq:medical-image-covariate-generation}
\end{equation}
and assign treatment as
\begin{equation}
    A_i \mid X_i \sim \operatorname{Bernoulli}\{\pi_0(1 \mid X_i)\}, \quad \pi_0(1 \mid X_i) = \operatorname{clip}\{ \sigma(\boldsymbol{\alpha}^\top X_i), 0.07, 0.93 \}.
    \label{eq:medical-image-treatment-assignment}
\end{equation}
The potential outcomes are generated as
\[
    Y^a_i = B_i + R_i(a), \quad a \in \{0, 1\},
\]
where $B_i$ is a baseline image sampled from a healthy or normal-labelled class and $R_i(a)$ is a treatment dependent residual. We generate a potential outcome $Y^a_i$ for each treatment $a$ and retain the outcome under the assigned treatment, $Y_i = Y^{A_i}_i$. 

To construct the residual, let
\[
    q_i = q_{\min} + (q_{\max} - q_{\min}) \sigma(\boldsymbol\beta^\top X_i)
\]
where $q_i \in (q_{\min}, q_{\max})$ determines the probability of a severe case. We sample
\[
    S_i \mid X_i \sim \operatorname{Bernoulli}(q_i),
\]
define mean-one multiplicative jitter factors
\[
    J_{ia} = \exp(\tau U_{ia} - \tau^2 / 2), \quad U_{ia} \sim \mathcal{N}(0, 1),
\]
and independent pixel noise $\varepsilon_{ia} \sim \mathcal{N}(0, \sigma_\varepsilon^2 I)$. The residuals under both treatments are then
\[
    R_i(0) = q_i \delta J_{i0} M_i + \varepsilon_{i0}, \quad R_i(1) = S_i \delta J_{i1} M_i + \varepsilon_{i1},
\]
where $M_i$ is a mask describing one of three perturbations: a localised lesion, scattered deposits, and a morphological change. Localised perturbations add a single blob to the baseline image, as in Chest-Effusion, Chest-Mass, OCT-DME, OCT-CNV and Pneumonia. Scattered perturbations add several small blobs to plausible locations, as in OCT-Drusen and Retina-Severe. Morphological perturbations change an existing structure; Chest-Cardiomegaly enlarges the cardiac silhouette, while Derma-Melanoma distorts and desaturates a mole. Finally, Breast-Tumor distorts an added localised lesion, combining a localised and morphological perturbation.

We use $d_X=6$, $\boldsymbol{\alpha}=(0.90,-0.75,0.55,-0.40,0.30,-0.20)^\top$ and $\boldsymbol{\beta}=(0.60,-0.50,0.40,-0.30,0.20,-0.10)^\top$ with $q_{\min}=0.10$, $q_{\max}=0.40$, $\tau=0.20$ and $\sigma_\varepsilon=0.025$. Baseline images are sampled with replacement from the corresponding training, validation or test split.

For \Cref{tab:medmnist-power} under $H_1$, we use $\delta=0.7$ for Breast-Tumor, $1.4$ for Chest-Cardiomegaly, $1.2$ for Chest-Effusion, $1.0$ for Chest-Mass, $1.4$ for Derma-Melanoma, $0.8$ for OCT-CNV, $0.4$ for OCT-DME and OCT-Drusen, and $1.0$ for Pneumonia and Retina-Severe. We set $\delta=0$ under $H_0$.

\subsection{Text Outcomes}\label{appendix:text-constructions}

We now detail the generation of text outcomes in \Cref{sec:text-outcome-experiments}. Given some covariate $X_i$, we sample a treatment $A_i$ according to \Cref{eq:medical-image-treatment-assignment} and generate pseudo-outcomes
\begin{equation}
    \label{eq:text-pseudo-outcomes}
    \tilde Y_i^0 = \boldsymbol{\beta}^\top X_i + \varepsilon_{i0}, \quad \tilde Y_i^1 = \boldsymbol{\beta}^\top X_i + b_i(\delta) + \varepsilon_{i1}, \quad \varepsilon_{i0},\varepsilon_{i1}
    \overset{\mathrm{i.i.d.}}{\sim}\mathcal{N}(0,\sigma_\varepsilon^2).
\end{equation}
where $b_i(\delta) = \delta(2Z_i - 1)$, with $Z_i \sim_{\mathrm{i.i.d.}} \operatorname{Bernoulli}(0.5)$. Each pseudo-outcome is then discretised into one of three tones. The tone is used as conditioning to generate the outcome text $Y^a$ for each treatment $a$ from a large language model. We retain the potential outcome under the assigned treatment, $Y_i = Y_i^{A_i}$.

\paragraph{Oncology Notes.}
We sample a five-dimensional covariate $X_i \sim \mathcal{N}(0,I_5)$. The covariate is rendered as a breast-cancer patient profile containing age, cancer stage, ECOG performance status, a biomarker score, and a comorbidity index. For each profile, we prompt the language model to write three post-treatment clinical notes describing respectively disease progression, stabilised disease and positive treatment response.

\paragraph{Movie Reviews.}
For each unit, we first sample a movie review from the Rotten Tomatoes dataset \citep{pang2005seeing} and prompt the language model to rewrite the review as positive, negative and neutral. We construct the covariate $X_i\in\R^5$ by applying a fixed Gaussian random projection to the sampled review's embedding.

\paragraph{Clinical Notes.}
For each unit, we first sample a clinical note from the MTSamples dataset \citep{mtsamples}. We then append to the note two language model generated sentences describing a positive, negative or neutral clinical outcome. As with movie reviews, we construct $X_i\in\R^5$ from a fixed Gaussian random projection of the note's embedding.

We use $\boldsymbol{\alpha}=(0.90,-0.75,0.55,-0.40,0.30)^\top$, $\boldsymbol{\beta}=(0.60,-0.50,0.40,-0.30,0.20)^\top$,
and $\sigma_\varepsilon=0.1$. Pseudo-outcomes below $-1$ select the negative variant, in $[-1,1)$ select the neutral variant, and at or above $1$ select the positive text variant. For $H_1$, we use $\delta=1.0$ for Oncology Notes and Movie Reviews and $\delta=2.6$ for Clinical Notes. For $H_0$, we use $\delta=0$.

We generate the text variants with Qwen3-8B, with thinking disabled, temperature $0.7$ and $p=0.9$. Outcomes in \Cref{appendix:text-representation-ablations} and text covariates are embedded using Qwen3-Embedding-0.6B.

\subsection{Graph Outcomes}\label{appendix:graph-constructions}

Finally, we detail the graph outcome constructions of \Cref{sec:graph-outcome-experiments}. For a pre-treatment graph $G_i$, we extract and standardise a scalar feature $c_i$ from $G_i$ to obtain the covariate $X_i$. As before, treatment is assigned according to \Cref{eq:medical-image-treatment-assignment} with $\alpha = 1$. Here, treatment modifies $G_i$, from which we construct the outcome graph $Y_i$.

\paragraph{Contact Networks.}

Each unit's pre-treatment graph $G_i$ is a contact network obtained by sampling a connected subgraph from the Infectious SocioPatterns dataset \citep{isella2011crowd}. Each node is an individual, and each edge is weighted by the number of 20-second intervals during which two individuals were in close proximity. We use mean excess degree as a proxy for epidemic risk, motivated by its role in random-network epidemic thresholds \citep{newman2002spread}:
\[
    c_i
    =
    \frac{\sum_v d_v^2}{\sum_v d_v}-1,
\]
where $d_v$ is the number of distinct contacts of individual $v$. Contact networks with a larger epidemic risk are therefore more likely to receive the intervention.

We sample an infection source with probability proportional to each node's weighted degree. Under $A_i=0$, the contact network is unchanged. Under $A_i=1$, we remove 20\% of the contact edges, prioritising edges with more neighbours, greater contact weight, and greater weighted degree. We then sample potential transmission edges, with probability increasing with contact weight, and construct a transmission tree from each network. The two potential outcomes use the same infection source and sampled potential transmission edges. Under $H_0$, the transmission trees are generated from the unchanged contact network for both potential outcomes. 

\paragraph{LLM Code Repairs.}

Each unit's pre-treatment graph $G_i$ is a buggy Python function from the BugsInPy dataset \citep{widyasari2020bugsinpy}, represented as its abstract syntax tree (AST). 

Under $A_i=0$, we retain the original AST, $Y_i^0=G_i$. Under $A_i=1$, the outcome $Y_i^1$ is the AST of the corresponding LLM-generated repair attempt from \citet{ehsani2025hierarchical}. We choose $c_i$ as the log-size
\[
    c_i=\log(1+|\mathcal{V}_i|),
\]
where $|\mathcal{V}_i|$ is the number of nodes in the original AST. Longer functions are therefore more likely to receive an LLM-generated repair.

Under $H_0$, we set both potential outcomes to the original AST, $Y_i^0=Y_i^1=G_i$, while retaining the same covariates and treatment-assignment mechanism. Under $H_1$, the treated potential outcome is the corresponding repair candidate.

Each outcome tree is encoded as a padded sequence of node labels and parent indices. This representation is injective for the labelled tree, although constructing the tree may discard information from the original program or contact network. The WL subtree kernel is not generally characteristic, and therefore does not provide the same omnibus guarantees as the Gaussian kernel tests.

\FloatBarrier
\subsection{Architectures and Hyperparameters}\label{sec:architectures-hyperparameters}

We describe the neural network architectures and hyperparameters used for DR-KTE-Deep. For scalar outcomes, Causal-Blob, and text outcomes when represented as an embedding, $\zeta_\theta$ is a four-layer fully connected network with softplus activations (\Cref{fig:architecture-mlp}). Text outcomes, when represented as a sequence of tokens, use a two-layer transformer (\Cref{fig:architecture-transformer}). For image outcomes, we use a four-layer convolutional network with LeakyReLU activations (\Cref{fig:architecture-cnn}). Finally for graph outcomes, we use a three-layer message-passing graph neural network (GNN) (\Cref{fig:architecture-gnn}).

We use an Adam optimiser \citep{kingma2015adam}, with separate learning rates for the neural features and the remaining parameters. Hyperparameters are summarised in \Cref{tab:shared-hyperparameters,tab:experiment-hyperparameters}.

DR-KTE-Opt shares the same optimiser, regularisation parameters and number of epochs, but only optimises the lengthscale of its kernel, using the same learning rate as the non-neural parameters of DR-KTE-Deep.

\begin{table}[!htbp]
    \centering
    \small
    \setlength{\tabcolsep}{4pt}
    \begin{tabular}{@{}ll@{}}
        \toprule
        Hyperparameter & Value \\
        \midrule
        Training variance regularisation $\lambda$ & $10^{-4}$ \\
        Validation and test variance regularisation & $0$ \\
        CME regularisation $\lambda_\mu$ & $10^{-3}$ \\
        EMA decay $\gamma$ & $0.95$ \\
        Minimum mixture weight $w_{\min}$ & $10^{-9}$ \\
        Learning-rate schedule & Cosine decay to $1\%$ of initial rate \\
        Gradient norm clipping & $1.0$ \\
        \bottomrule
    \end{tabular}
    \caption{Shared hyperparameters for our experiments.}
    \label{tab:shared-hyperparameters}
\end{table}

\begin{table}[!htbp]
    \centering
    \small
    \setlength{\tabcolsep}{4pt}
    \begin{tabular}{@{}lrrrr@{}}
        \toprule
        Experiment & Epochs & Ensemble Size $M$ & Feature LR & Other LR \\
        \midrule
        Image outcomes & 500 & 3 & $2\times10^{-2}$ & $10^{-3}$ \\
        Text outcomes (token sequences) & 1000 & 3 & $10^{-3}$ & $10^{-4}$ \\
        Graph outcomes & 1000 & 3 & $2\times10^{-2}$ & $10^{-3}$ \\
        Text outcomes (embedded) & 1000 & 3 & $10^{-5}$ & $10^{-4}$ \\
        Scalar outcomes & 1000 & 5 & $2\times10^{-2}$ & $10^{-3}$ \\
        Causal-Blob & 1000 & 5 & $2\times10^{-2}$ & $10^{-2}$ \\
        \bottomrule
    \end{tabular}
    \caption{Experiment-specific hyperparameters.}
    \label{tab:experiment-hyperparameters}
\end{table}

\begin{figure}[!htbp]
    \centering
    \includegraphics[width=\linewidth]{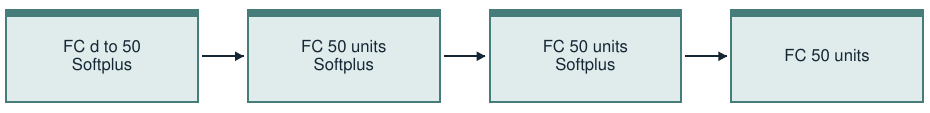}
    \caption{Four-layer fully connected feature extractor for scalar outcomes, Causal-Blob, and text outcomes under an embedded representation.}
    \label{fig:architecture-mlp}
\end{figure}

\begin{figure}[!htbp]
    \centering
    \includegraphics[width=\linewidth]{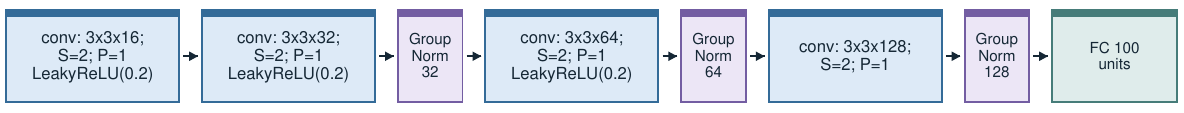}
    \caption{Four-layer convolutional feature extractor for image outcomes.}
    \label{fig:architecture-cnn}
\end{figure}

\begin{figure}[!htbp]
    \centering
    \includegraphics[width=\linewidth]{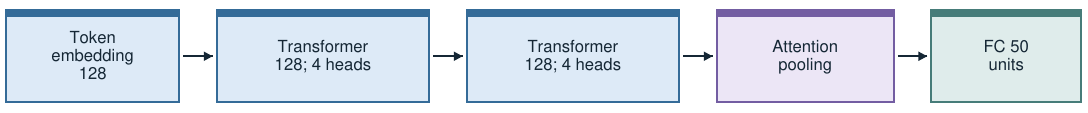}
    \caption{Two-layer transformer feature extractor for text outcomes represented as a sequence of tokens.}
    \label{fig:architecture-transformer}
\end{figure}

\begin{figure}[!htbp]
    \centering
    \includegraphics[width=\linewidth]{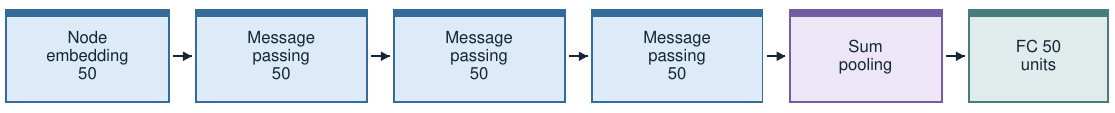}
    \caption{Three-layer message-passing feature extractor for graph outcomes.}
    \label{fig:architecture-gnn}
\end{figure}

\FloatBarrier
\section{Additional Experiments}\label{appendix:additional-experiments}

\subsection{Additional Results on Image Outcomes}\label{sec:additional-image-experiments}

\paragraph{Medical-Image Outcomes.}\label{sec:additional-medical-image-results} We run additional simulations on medical-image outcomes, varying both sample size and effect strength. On OCT-DME, deep-kernel power is already near one throughout the sample-size sweep, while the Gaussian baselines have low power (\Cref{fig:oct-dme-sweep}). On Derma-Melanoma, increasing $N$ substantially raises deep-kernel power, with smaller gains for the Gaussian baselines (\Cref{fig:derma-melanoma-sweep}).

\begin{figure}[!htbp]
    \centering
    \includegraphics[width=\linewidth]{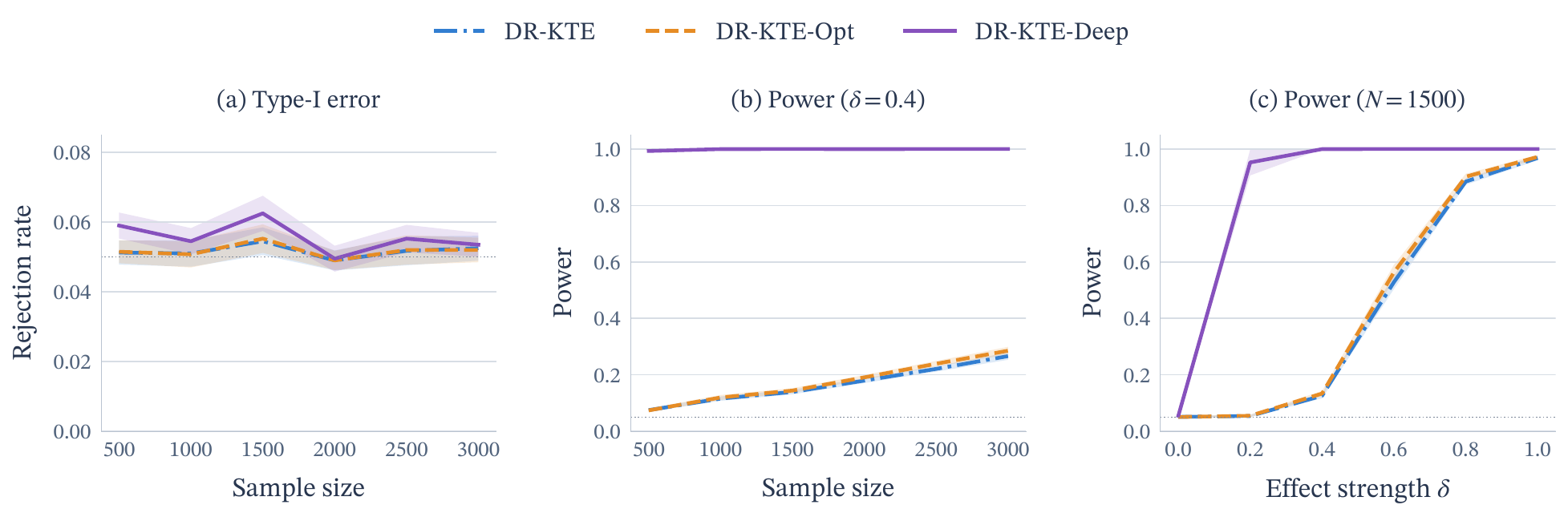}
    \caption{Results for OCT-DME. From left to right: (a) Type-I error under $H_0$ ($\delta = 0$), (b) average test power when varying $N$ while keeping $\delta=0.4$ fixed, (c) average test power when varying $\delta$ while keeping $N=1500$ fixed. Shaded bands show $\pm$ one standard error across 20 independent outer repetitions. The dotted grey line marks $\alpha=0.05$.}
    \label{fig:oct-dme-sweep}
\end{figure}

\begin{figure}[!htbp]
    \centering
    \includegraphics[width=\linewidth]{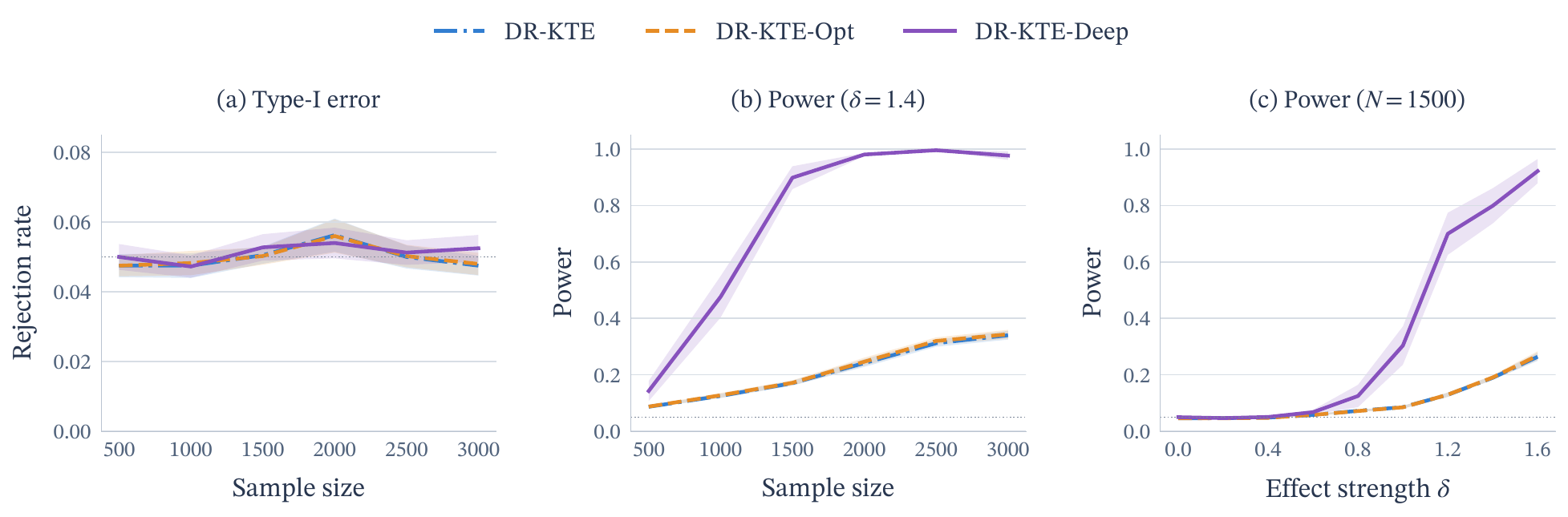}
    \caption{Results for Derma-Melanoma. From left to right: (a) Type-I error under $H_0$ ($\delta=0$), (b) average test power when varying $N$ with $\delta=1.4$ fixed, (c) average test power when varying $\delta$ with $N=1500$ fixed. Shaded bands show $\pm$ one standard error across 20 independent outer repetitions. The dotted grey line marks $\alpha=0.05$.}
    \label{fig:derma-melanoma-sweep}
\end{figure}

\FloatBarrier

\paragraph{MNIST, RetinaMNIST, BUSI.} We next study a construction where treatment changes the sampling frequency of image labels of the MNIST \citep{lecun1998gradient}, RetinaMNIST \citep{medmnistv1,medmnistv2,retinamnist}, and BUSI \citep{breastmnist} datasets.

Covariates and treatments are sampled as in \Cref{eq:medical-image-covariate-generation} and \Cref{eq:medical-image-treatment-assignment}. We define the scalar pseudo-outcomes
\begin{equation}
\label{eq:pseudo-outcomes}
    \tilde Y_i^0 = \boldsymbol{\beta}^\top X_i + \varepsilon_{i0}, \quad \tilde Y_i^1 = \boldsymbol{\beta}^\top X_i + b_i(\delta) + \varepsilon_{i1}, \quad \varepsilon_{i0},\varepsilon_{i1}
    \overset{\mathrm{i.i.d.}}{\sim}\mathcal{N}(0,\sigma_\varepsilon^2),
\end{equation}
where $d_X=5$, $\boldsymbol{\alpha}=(0.90,-0.75,0.55,-0.40,0.30)^\top$, $\boldsymbol{\beta}=(0.60,-0.50,0.40,-0.30,0.20)^\top$ and $\sigma_\varepsilon=0.1$.

We consider the three scenarios of \citet{muandet2021counterfactual}. For Scenario I, $b_i(\delta) = 0$. For Scenario II, $b_i(\delta) = \delta$. For Scenario III, $b_i(\delta) = \delta(2Z_i - 1)$, where $Z_i \sim_\mathrm{i.i.d.} \operatorname{Bernoulli}(0.5)$. Scenario I thus tests $H_0$, while Scenarios II and III test $H_1$. The pseudo-outcomes are converted to discrete image labels $\ell_i^a$ within each dataset's range. $Y_i^a$ is then generated by sampling an image with label $\ell_i^a$. We retain the outcome under the assigned treatment, $Y_i=Y_i^{A_i}$. The generative process is summarised in \Cref{fig:image-label-dag}, and \Cref{fig:image-label-samples} illustrates the construction.

\begin{figure}[!htb]
    \centering
    \begin{tikzpicture}[
        >=stealth,
        node/.style={draw, circle, minimum size=9mm, inner sep=1pt, font=\small},
        every edge/.style={draw, ->, thick}
    ]
    \node[node] (X)  {$X$};
    \node[node] (A)  [below left=12mm and 9mm of X]  {$A$};
    \node[node] (Yt) [below right=12mm and 9mm of X] {$\tilde{Y}$};
    \node[node] (Y)  [right=16mm of Yt] {$Y$};
    \draw (X)  edge (A);
    \draw (X)  edge (Yt);
    \draw (A)  edge (Yt);
    \draw (Yt) edge node[above, font=\scriptsize] {} (Y);
    \end{tikzpicture}
    \caption{Data-generating process for MNIST, RetinaMNIST and BUSI. The covariate $X$ and treatment $A$ generate a scalar pseudo-outcome $\tilde{Y}$, which is discretised to select the label of the sampled image outcome $Y$.}
    \label{fig:image-label-dag}
\end{figure}

\begin{figure}[!htb]
    \centering
    \includegraphics[width=\linewidth]{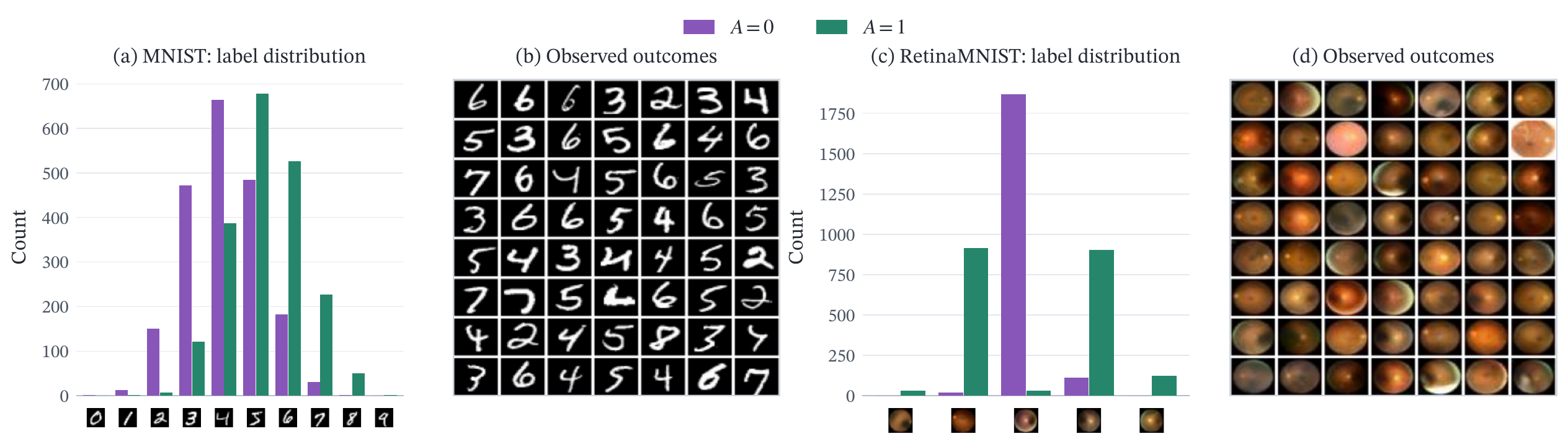}
    \caption{MNIST (Scenario II, left) and RetinaMNIST (Scenario III, right). Panels (a) and (c) show label distributions under each treatment. Panels (b) and (d) show sampled image outcomes.}
    \label{fig:image-label-samples}
\end{figure}

We report average Type-I error for Scenario I and average power for Scenarios II and III in \Cref{tab:image-label-results} for the three datasets at $N=1500$ and $N=3000$. Type-I error ranges from $0.045$ to $0.054$, close to the nominal rate of $\alpha=0.05$. Under the alternatives, DR-KTE-Deep has higher estimated power than the baselines in every setting, with the largest improvement on RetinaMNIST in Scenario III.

\begin{table}[!htb]
    \centering
    \begin{tabular}{lccccc}
    \toprule
    Dataset & $N$ & Scenario & DR-KTE & DR-KTE-Opt & DR-KTE-Deep \\
    \midrule
    \multirow{6}{*}{MNIST}
    & \multirow{3}{*}{1500} & I   & $0.046\pmerr{0.003}$ & $0.046\pmerr{0.003}$ & $0.051\pmerr{0.003}$ \\
    &                       & II  & $0.731\pmerr{0.026}$ & $0.731\pmerr{0.026}$ & $\mathbf{0.870\pmerr{0.024}}$ \\
    &                       & III & $0.721\pmerr{0.024}$ & $0.721\pmerr{0.024}$ & $\mathbf{0.803\pmerr{0.023}}$ \\
    \cmidrule(l){2-6}
    & \multirow{3}{*}{3000} & I   & $0.049\pmerr{0.003}$ & $0.049\pmerr{0.003}$ & $0.049\pmerr{0.004}$ \\
    &                       & II  & $0.948\pmerr{0.008}$ & $0.948\pmerr{0.008}$ & $\mathbf{0.986\pmerr{0.002}}$ \\
    &                       & III & $0.938\pmerr{0.010}$ & $0.937\pmerr{0.010}$ & $\mathbf{0.973\pmerr{0.006}}$ \\
    \midrule
    \multirow{6}{*}{RetinaMNIST}
    & \multirow{3}{*}{1500} & I   & $0.053\pmerr{0.003}$ & $0.052\pmerr{0.003}$ & $0.051\pmerr{0.004}$ \\
    &                       & II  & $0.585\pmerr{0.017}$ & $0.587\pmerr{0.017}$ & $\mathbf{0.647\pmerr{0.022}}$ \\
    &                       & III & $0.474\pmerr{0.020}$ & $0.486\pmerr{0.022}$ & $\mathbf{0.732\pmerr{0.035}}$ \\
    \cmidrule(l){2-6}
    & \multirow{3}{*}{3000} & I   & $0.053\pmerr{0.004}$ & $0.054\pmerr{0.003}$ & $0.050\pmerr{0.003}$ \\
    &                       & II  & $0.798\pmerr{0.013}$ & $0.799\pmerr{0.013}$ & $\mathbf{0.881\pmerr{0.011}}$ \\
    &                       & III & $0.743\pmerr{0.014}$ & $0.759\pmerr{0.014}$ & $\mathbf{0.939\pmerr{0.009}}$ \\
    \midrule
    \multirow{4}{*}{BUSI}
    & \multirow{2}{*}{1500} & I   & $0.048\pmerr{0.003}$ & $0.049\pmerr{0.003}$ & $0.046\pmerr{0.004}$ \\
    &                       & III & $0.878\pmerr{0.014}$ & $0.880\pmerr{0.014}$ & $\mathbf{0.911\pmerr{0.013}}$ \\
    \cmidrule(l){2-6}
    & \multirow{2}{*}{3000} & I   & $0.045\pmerr{0.002}$ & $0.046\pmerr{0.002}$ & $0.049\pmerr{0.004}$ \\
    &                       & III & $0.954\pmerr{0.006}$ & $0.956\pmerr{0.006}$ & $\mathbf{0.973\pmerr{0.005}}$ \\
    \bottomrule
    \end{tabular}
    \caption{Average rejection rates for MNIST, RetinaMNIST, and BUSI image outcomes at $\alpha=0.05$. Scenario I reports Type-I error; Scenarios II and III report power. Effect strengths are $\delta=1.0$ and $1.5$ for MNIST II and III, $\delta=2.0$ and $4.0$ for RetinaMNIST II and III, and $\delta=4.0$ for BUSI III. Values are means $\pm$ one standard error across 20 independent outer repetitions.}
    \label{tab:image-label-results}
\end{table}

\FloatBarrier
\paragraph{Fake-MNIST.}

We now consider a treatment that changes the source dataset of the sampled image, adapting the comparison of \citet{liu2021learningdeepkernelsnonparametric} between real MNIST images and images generated by a generative adversarial network (GAN). We use a single digit covariate $X_i\in\{0,\dots,9\}$, sampled uniformly, and assign treatment according to
\[
    A_i \mid X_i \sim \operatorname{Bernoulli}\{\pi_0(1 \mid X_i)\},
    \qquad
    \pi_0(1 \mid X_i)
    = \operatorname{clip}\left\{\sigma\left(s_\pi\frac{X_i-4.5}{\sqrt{8.25}}\right),0.07,0.93\right\},
\]
where $4.5$ and $\sqrt{8.25}$ are the respective mean and standard deviation of a uniformly sampled digit, and $s_\pi$ is a scaling hyperparameter.

Under $H_0$, we sample real MNIST images of digit $X_i$ under both interventions. Under $H_1$, we sample a GAN-generated image of digit $X_i$ with probability
\[
    r(A_i,X_i) = A_i\frac{X_i+1}{s_r},
\]
and a real MNIST image otherwise, where $s_r$ is a scaling hyperparameter. Outcomes under $A_i=0$ are thus always real images, while outcomes under $A_i=1$ can be real or GAN-generated. Larger digits are more likely to be assigned $A_i=1$ and, under $A_i=1$, to be replaced by a GAN-generated image.

Average Type-I error and power at $N=1500$ and $N=3000$ are reported in \Cref{tab:fake-mnist-results}. Under $H_1$, DR-KTE-Deep exceeds all baselines at both sample sizes. DR-KTE-Opt slightly improves power over DR-KTE. Under $H_0$, rejection rates range from $0.050$ to $0.054$ across all methods and sample sizes, close to the nominal level.

\begin{table}[!htb]
    \centering
    \begin{tabular}{clccc}
    \toprule
    $N$ & Scenario & DR-KTE & DR-KTE-Opt & DR-KTE-Deep \\
    \midrule
    \multirow{2}{*}{1500}
         & $H_0$ & $0.051\pmerr{0.003}$ & $0.052\pmerr{0.003}$ & $0.050\pmerr{0.004}$ \\
         & $H_1$ & $0.225\pmerr{0.008}$ & $0.229\pmerr{0.008}$ & $\mathbf{0.380\pmerr{0.050}}$ \\
    \midrule
    \multirow{2}{*}{3000}
         & $H_0$ & $0.054\pmerr{0.004}$ & $0.054\pmerr{0.004}$ & $0.050\pmerr{0.003}$ \\
         & $H_1$ & $0.502\pmerr{0.012}$ & $0.514\pmerr{0.012}$ & $\mathbf{0.961\pmerr{0.019}}$ \\
    \bottomrule
    \end{tabular}
    \caption{Type-I error ($H_0$) and power ($H_1$) for Fake-MNIST at $\alpha=0.05$. Values are means $\pm$ one standard error across 20 independent outer repetitions.}
    \label{tab:fake-mnist-results}
\end{table}

\FloatBarrier
\subsection{Causal-Blob}
\label{sec:causal-blob-experiments}

We evaluate our method on a Gaussian mixture dataset adapted from the Blob dataset of \citet{liu2021learningdeepkernelsnonparametric} to a causal setting with covariates. For each unit, we sample a mode index $M_i \in \{1, \dots, 9\}$ uniformly with centres $\{\mu_m\}^9_{m=1} = \{(j, k) : j, k \in \{0, 1, 2\}\}$. The covariate is then a noisy version of the mode location:
\[
    X_i = \mu_{M_i} + \varepsilon_X, \quad \varepsilon_X \sim \mathcal{N}(0, \sigma_X^2 I_2).
\]
Treatment is sampled as in \Cref{eq:medical-image-treatment-assignment}. The potential outcomes follow the mode-centred Gaussians
\[
    Y_i^0 \mid M_i = m \sim \mathcal{N}(\mu_m, \Sigma_P), \qquad Y_i^1 \mid M_i = m \sim \mathcal{N}(\mu_m, \Sigma_{Q,m}),
\]
where $\Sigma_P = 0.03 I_2$ is isotropic and shared across modes, while $\Sigma_{Q,m}$ has the same diagonal entries as $\Sigma_P$ but a mode-dependent off-diagonal entry $\rho_m$. We use $\sigma_X=0.3$ and $\boldsymbol{\alpha}=(1.2,-0.8)^\top$. The two outcome distributions thus have the same mean but differ in higher-order moments under $H_1$.

\Cref{tab:blob-n1000} shows that DR-KTE-Deep achieves the highest power on Causal-Blob at both sample sizes, with the largest gain over DR-KTE-Opt at $N=3000$. DR-KTE-Opt also improves power over DR-KTE. Type-I error for all three tests remains close to the nominal level at both sample sizes.

\begin{table}[!htbp]
    \centering
    \small
    \setlength{\tabcolsep}{4pt}
    \resizebox{\linewidth}{!}{%
    \begin{tabular}{@{}lccc@{}}
    \toprule
     & \multicolumn{3}{c}{Power/Type-I error ($H_1/H_0$)} \\
    \cmidrule(lr){2-4}
    $N$ & DR-KTE & DR-KTE-Opt & DR-KTE-Deep \\
    \midrule
    1500 & $0.322\pmerr{0.014}/0.049\pmerr{0.004}$ & $0.351\pmerr{0.016}/0.051\pmerr{0.003}$ & $\mathbf{0.382}\pmerr{0.024}/0.050\pmerr{0.004}$ \\
    3000 & $0.710\pmerr{0.020}/0.046\pmerr{0.003}$ & $0.780\pmerr{0.022}/0.048\pmerr{0.003}$ & $\mathbf{0.891}\pmerr{0.019}/0.045\pmerr{0.003}$ \\
    \bottomrule
    \end{tabular}
    }
    \caption{Power and Type-I error for Causal-Blob at $\alpha=0.05$. Each cell reports $H_1/H_0$, with means $\pm$ one standard error across 20 independent outer repetitions.}
    \label{tab:blob-n1000}
\end{table}

\FloatBarrier
\subsection{Scalar Outcomes}
\label{sec:low-dimensional-outcomes}

We also evaluate our method on scalar outcomes. We adapt the scalar-outcome scenarios of \citet{muandet2021counterfactual} with binary treatments. Covariates and treatments are sampled as in \Cref{eq:medical-image-covariate-generation,eq:medical-image-treatment-assignment}, and we use the scalar outcomes of \Cref{eq:pseudo-outcomes} directly. Scenarios I, II and III cover the null hypothesis, a mean shift and a mean-preserving distributional shift respectively. We additionally consider Scenario IV of \citet{zenati2025cpme}, with $b_i(\delta)=4\delta Z_i$, where $Z_i\sim\operatorname{Bernoulli}(0.5)$. This represents a change in both the mean and variance of the outcome distributions.

\Cref{tab:continuous-n1000} reports average Type-I error for Scenario I and average power for Scenarios II-IV. Type-I error is close to the nominal level for the kernel tests at both sample sizes. Power is close to one in Scenarios II and IV, with little improvement from learning a deep kernel. Scenario III is more challenging, although power increases substantially with sample size. DR-KTE-Opt achieves the highest estimated power in Scenario III at both sample sizes.

\begin{table}[!htbp]
    \centering
    \small
    \setlength{\tabcolsep}{4pt}
    \resizebox{\linewidth}{!}{%
    \begin{tabular}{@{}clccc@{}}
    \toprule
     & & \multicolumn{3}{c}{Power/Type-I error ($H_1/H_0$)} \\
    \cmidrule(lr){3-5}
    $N$ & Scenario & DR-KTE & DR-KTE-Opt & DR-KTE-Deep \\
    \midrule
    \multirow{3}{*}{1500}
         & II  & $0.999\pmerr{0.000}/0.055\pmerr{0.004}$ & $\mathbf{1.000}\pmerr{0.000}/0.059\pmerr{0.004}$ & $0.995\pmerr{0.002}/0.055\pmerr{0.004}$ \\
         & III & $0.611\pmerr{0.028}/0.055\pmerr{0.004}$ & $\mathbf{0.653}\pmerr{0.027}/0.059\pmerr{0.004}$ & $0.614\pmerr{0.028}/0.055\pmerr{0.004}$ \\
         & IV  & $0.980\pmerr{0.007}/0.055\pmerr{0.004}$ & $\mathbf{0.985}\pmerr{0.005}/0.059\pmerr{0.004}$ & $0.981\pmerr{0.004}/0.055\pmerr{0.004}$ \\
    \midrule
    \multirow{3}{*}{3000}
         & II  & $\mathbf{1.000}\pmerr{0.000}/0.048\pmerr{0.003}$ & $\mathbf{1.000}\pmerr{0.000}/0.046\pmerr{0.003}$ & $\mathbf{1.000}\pmerr{0.000}/0.048\pmerr{0.003}$ \\
         & III & $0.903\pmerr{0.013}/0.048\pmerr{0.003}$ & $\mathbf{0.927}\pmerr{0.012}/0.046\pmerr{0.003}$ & $0.914\pmerr{0.013}/0.048\pmerr{0.003}$ \\
         & IV  & $\mathbf{0.994}\pmerr{0.002}/0.048\pmerr{0.003}$ & $\mathbf{0.994}\pmerr{0.002}/0.046\pmerr{0.003}$ & $0.990\pmerr{0.002}/0.048\pmerr{0.003}$ \\
    \bottomrule
    \end{tabular}
    }
    \caption{Power and Type-I error for scalar outcomes at $\alpha=0.05$, with $\delta=0.8$ and $M=5$. Each cell reports $H_1/H_0$, with means $\pm$ one standard error across 20 independent outer repetitions.}
    \label{tab:continuous-n1000}
\end{table}

\subsection{Estimator Comparison}
\label{sec:ablation-estimator-comparison}

We study how the choice of estimator affects test power and Type-I error, using the same learned outcome kernel for all methods. We consider the continuous-treatment construction of \citet{zenati2025cpme}. We compare doubly robust estimation with inverse-propensity weighting (IPW) and a kernel policy test. For each unit, we sample the covariate as
\[
    X_i\sim\mathcal N(0,I_5),
\]
and treatment is assigned as
\[
    A_i\mid X_i\sim\mathcal N(X_i^\top w,1).
\]
The outcome is then
\[
    Y_i=X_i^\top\beta+A_i+0.1\varepsilon_i, \quad \varepsilon_i\sim\mathcal N(0,1),
\]
where $w,\beta\in\mathbb R^5$ are fixed coefficient vectors.

We compare two interventions $\pi$ and $\pi'$ and set $\pi:=\pi_0$. We consider four policy-test scenarios adapted from \citet{zenati2025cpme} with continuous treatments. Scenario I tests the null hypothesis with $\pi'(\cdot\mid x)=\mathcal N(x^\top w_{\mathrm I},1)$, where $w_{\mathrm I}\neq w$ and $\|\beta+w_{\mathrm I}\|=\|\beta+w\|$, so the two interventions induce the same marginal outcome distribution. In Scenario II, $\pi'(\cdot\mid x)=\mathcal N(x^\top(w+2\mathbbm{1}_5),1)$. In Scenario III, $\pi'(\cdot\mid x)$ is an equal mixture of the distributions $\mathcal N(x^\top(w+\mathbbm{1}_5),1)$ and $\mathcal N(x^\top(w-\mathbbm{1}_5),1)$. Finally, in Scenario IV, $\pi'(\cdot\mid x)$ is an equal mixture of the distributions $\mathcal N(x^\top(w+2\mathbbm{1}_5),1)$ and $\mathcal N(x^\top w,1)$.

We compare our doubly robust estimator, DR-KTE-Deep, to a cross-U IPW estimator, IPW-Cross, obtained by setting $\hat \mu_\theta(a,x) := 0$ in the plug-in estimator of \Cref{eq:generic-dr-feature}. We additionally compare our method to a kernel policy test, KPT, following \citet{zenati2025cpme}. All methods use the same deep outcome kernel, learned with the doubly robust criterion. We divide the data into four equal samples for learning, validation, nuisance fitting and testing. For this comparison, checkpoint selection uses variance regularisation $10^{-4}$, and the final cross-U statistics are unregularised. KPT uses 10,000 permutations to calibrate its test. These choices hold the learned kernel and data budget fixed while comparing the test estimators.

\Cref{tab:ablation-estimator-comparison} shows that DR-KTE-Deep achieves higher estimated power than IPW-Cross and KPT in all three alternative scenarios, with the largest gains in Scenario III. Under the null hypothesis, its rejection rate is close to the nominal level of $\alpha=0.05$, while IPW-Cross and KPT have higher Type-I error.

\begin{table}[!htbp]
    \centering
    \small
    \setlength{\tabcolsep}{4pt}
    \begin{tabular}{@{}lccc@{}}
    \toprule
     & \multicolumn{3}{c}{Power/Type-I error ($H_1/H_0$)} \\
    \cmidrule(lr){2-4}
    Scenario & DR-KTE-Deep & IPW-Cross & KPT \\
    \midrule
    Scenario II & $\mathbf{0.961}\pmerr{0.007}/0.048\pmerr{0.003}$ & $0.920\pmerr{0.012}/0.085\pmerr{0.009}$ & $0.638\pmerr{0.022}/0.070\pmerr{0.013}$ \\
    Scenario III & $\mathbf{0.739}\pmerr{0.011}/0.048\pmerr{0.003}$ & $0.574\pmerr{0.008}/0.085\pmerr{0.009}$ & $0.291\pmerr{0.017}/0.070\pmerr{0.013}$ \\
    Scenario IV & $\mathbf{0.945}\pmerr{0.009}/0.048\pmerr{0.003}$ & $0.903\pmerr{0.013}/0.085\pmerr{0.009}$ & $0.607\pmerr{0.027}/0.070\pmerr{0.013}$ \\
    \bottomrule
    \end{tabular}
    \caption{Power and Type-I error at $\alpha=0.05$, $N=1600$ and $M=1$ for different choices of estimator. Each cell reports $H_1/H_0$, with means $\pm$ one standard error across 20 independent outer repetitions.}
    \label{tab:ablation-estimator-comparison}
\end{table}

\FloatBarrier

\subsection{Robustness to Outcome Model Misspecification}
\label{sec:ablation-outcome-model-misspecification}

We analyse the contribution of fitting the conditional outcome embedding with the logging policy known. We compare the fitted embedding with $\hat\mu_\theta(a,x) \equiv 0$, while maintaining $\hat \pi_0(a \mid x) \equiv \pi_0(a \mid x)$ and a learned deep outcome kernel. We evaluate the comparison on Causal-Blob (\Cref{sec:causal-blob-experiments}) and the four scalar-outcome scenarios of \Cref{sec:low-dimensional-outcomes}.

\Cref{tab:ablation-outcome-model-misspecification} shows similar Type-I error for the fitted and zero outcome embeddings, with rejection rates close to the nominal level. Using the fitted outcome embedding improves power on Causal-Blob and in Scenarios II and III, with the largest gain in Scenario III. Both estimators achieve power close to one in Scenario IV.

\begin{table}[!htbp]
    \centering
    \small
    \setlength{\tabcolsep}{4pt}
    \begin{tabular}{@{}lcc@{}}
    \toprule
     & \multicolumn{2}{c}{Power/Type-I error ($H_1/H_0$)} \\
    \cmidrule(lr){2-3}
    Environment & Fitted $\hat\mu_\theta$ & $\hat\mu_\theta\equiv0$ \\
    \midrule
    Causal-Blob & $0.318\pmerr{0.024}/0.047\pmerr{0.006}$ & $0.247\pmerr{0.017}/0.047\pmerr{0.007}$ \\
    \midrule
    Scenario II & $0.999\pmerr{0.001}/0.056\pmerr{0.004}$ & $0.927\pmerr{0.012}/0.057\pmerr{0.003}$ \\
    Scenario III & $0.616\pmerr{0.015}/0.056\pmerr{0.004}$ & $0.349\pmerr{0.013}/0.057\pmerr{0.003}$ \\
    Scenario IV & $1.000\pmerr{0.000}/0.056\pmerr{0.004}$ & $0.999\pmerr{0.001}/0.057\pmerr{0.003}$ \\
    \bottomrule
    \end{tabular}
    \caption{Power and Type-I error at $\alpha=0.05$, $N=1500$ and $M=1$. Each cell reports $H_1/H_0$, with means $\pm$ one standard error across 20 independent outer repetitions.}
    \label{tab:ablation-outcome-model-misspecification}
\end{table}

\FloatBarrier
\subsection{Training Objective Comparison}
\label{sec:ablation-representation-objective}

We test whether our power learning objective improves detection compared to a reconstruction objective. On OCT-Drusen and Fake-MNIST, we compare DR-KTE-Deep with a frozen randomly initialised CNN and a pixel autoencoder. All variants share the same encoder architecture, initial encoder weights, learning and validation observations, and test datasets.

DR-KTE-Deep has the highest power on both constructions under $H_1$ (\Cref{tab:ablation-representation-objective}), while Type-I error remains close to the nominal level for all methods.

\begin{table}[!htbp]
    \centering
    \small
    \setlength{\tabcolsep}{4pt}
    \begin{tabular}{@{}llccc@{}}
    \toprule
    Scenario &  & \shortstack{Random Weights} & \shortstack{Pixel Autoencoder} & DR-KTE-Deep \\
    \midrule
    \multirow{2}{*}{OCT-Drusen}
    & $H_1$ & $0.057 \pm 0.004$ & $0.052 \pm 0.004$ & $\mathbf{0.371 \pm 0.080}$ \\
    & $H_0$ & $0.054 \pm 0.004$ & $0.050 \pm 0.003$ & $0.058 \pm 0.005$ \\
    \midrule
    \multirow{2}{*}{Fake-MNIST}
    & $H_1$ & $0.204 \pm 0.008$ & $0.200 \pm 0.012$ & $\mathbf{0.380 \pm 0.050}$ \\
    & $H_0$ & $0.053 \pm 0.004$ & $0.055 \pm 0.004$ & $0.050 \pm 0.004$ \\
    \bottomrule
    \end{tabular}
    \caption{Power under $H_1$ and Type-I error under $H_0$ at $N=1500$. Values are means and one standard error across 20 independent outer repetitions.}
    \label{tab:ablation-representation-objective}
\end{table}

\FloatBarrier
\subsection{Text Representation Ablation}
\label{appendix:text-representation-ablations}

We consider an alternative representation for the text outcomes described in \Cref{sec:text-outcome-experiments}, mapping each outcome to a fixed-length vector using the Qwen3-Embedding-0.6B \citep{zhang2025qwen3embedding} embedding model. Since a language-model embedding can capture semantic information, a simple Gaussian kernel may be well suited to detecting differences in outcome distributions. 

\Cref{tab:text-embedding-results} shows that all three methods achieve high, similar power using this representation. Type-I error remains close to the nominal level for Oncology Notes and Clinical Notes, while all three methods are conservative on Movie Reviews.

However, this embedding is not guaranteed to be injective: distinct texts can collapse to the same embedding. A Gaussian kernel, even if characteristic on the embedding space, might therefore not detect every difference between the original text outcome distributions. Additionally, this representation can be more costly than a sequence of tokens, as it requires inference over an embedding model.

\begin{table}[!htbp]
    \centering
    \small
    \setlength{\tabcolsep}{4pt}
    \begin{tabular}{@{}lccc@{}}
    \toprule
    & \multicolumn{3}{c}{Power/Type-I error ($H_1/H_0$)} \\
    \cmidrule(lr){2-4}
    Scenario & DR-KTE & DR-KTE-Opt & DR-KTE-Deep \\
    \midrule
    Oncology Notes & $\mathbf{0.828}\pmerr{0.035}/0.057\pmerr{0.010}$ & $0.827\pmerr{0.035}/0.058\pmerr{0.010}$ & $\mathbf{0.828}\pmerr{0.036}/0.058\pmerr{0.010}$ \\
    Movie Reviews & $0.892\pmerr{0.009}/0.029\pmerr{0.004}$ & $\mathbf{0.895}\pmerr{0.008}/0.029\pmerr{0.004}$ & $\mathbf{0.895}\pmerr{0.008}/0.029\pmerr{0.004}$ \\
    Clinical Notes & $\mathbf{0.894}\pmerr{0.027}/0.049\pmerr{0.007}$ & $\mathbf{0.894}\pmerr{0.027}/0.049\pmerr{0.007}$ & $\mathbf{0.894}\pmerr{0.027}/0.050\pmerr{0.007}$ \\
    \bottomrule
    \end{tabular}
    \caption{Power ($H_1$) and Type-I error ($H_0$) at $N=1500$ and $\alpha=0.05$ for text outcomes represented as text embeddings.}
    \label{tab:text-embedding-results}
\end{table}

\FloatBarrier
\subsection{Effect of Ensemble Size}
\label{sec:ablation-ensemble-size}

We evaluate the effect of memberwise averaging by varying the number of separately trained kernels included in the ensemble. We perform five independent batches, each consisting of 20 independent outer repetitions. In each repetition, we train five members and form ensembles with $M\in\{1,2,3,5\}$, using the first $M$ members. All ensemble sizes share the same observations, nuisance estimates and member kernels. We use Causal-Blob and OCT-Drusen at $N=1500$ and $\delta=1.0$, following the procedure in \Cref{sec:training-testing-details}.

For each outer repetition and ensemble size, we estimate power as the rejection rate over 200 fresh test datasets under $H_1$, at $\alpha=0.05$. We then compute the mean and sample standard deviation of the 20 power estimates within each batch. For each statistic, we report its mean and one standard error across five independent batches.

On OCT-Drusen, power increases slightly with ensemble size, and its standard deviation is lower at $M=5$ than at $M=1$. On Causal-Blob, average power increases from $0.350$ to $0.394$, while variability does not decrease consistently (\Cref{tab:ablation-ensemble-size} and \Cref{fig:ablation-ensemble-size}).

While Proposition~\ref{prop:memberwise-ensemble} establishes variance reduction for the population discrepancy with the training data fixed, it does not imply that empirical power becomes less variable across repetitions.

\begin{table}[!htbp]
    \centering
    \begin{tabular}{llcccc}
        \toprule
        Dataset & Statistic & $M=1$ & $M=2$ & $M=3$ & $M=5$ \\
        \midrule
        Causal-Blob & Power & $0.350\pmerr{0.003}$ & $0.373\pmerr{0.007}$ & $0.383\pmerr{0.007}$ & $0.394\pmerr{0.008}$ \\
        & SD & $0.150\pmerr{0.011}$ & $0.157\pmerr{0.015}$ & $0.150\pmerr{0.014}$ & $0.151\pmerr{0.014}$ \\
        \midrule
        OCT-Drusen & Power & $0.993\pmerr{0.000}$ & $0.995\pmerr{0.000}$ & $0.995\pmerr{0.000}$ & $0.996\pmerr{0.001}$ \\
        & SD & $0.011\pmerr{0.001}$ & $0.009\pmerr{0.002}$ & $0.009\pmerr{0.002}$ & $0.007\pmerr{0.001}$ \\
        \bottomrule
    \end{tabular}
    \caption{Mean and standard deviation (SD) of empirical power at $N=1500$ and $\delta=1.0$. Each statistic is computed across 20 independent outer repetitions within each batch; values are means $\pm$ one standard error across five independent batches.}
    \label{tab:ablation-ensemble-size}
\end{table}

\begin{figure}[!htbp]
    \centering
    \includegraphics[width=\linewidth]{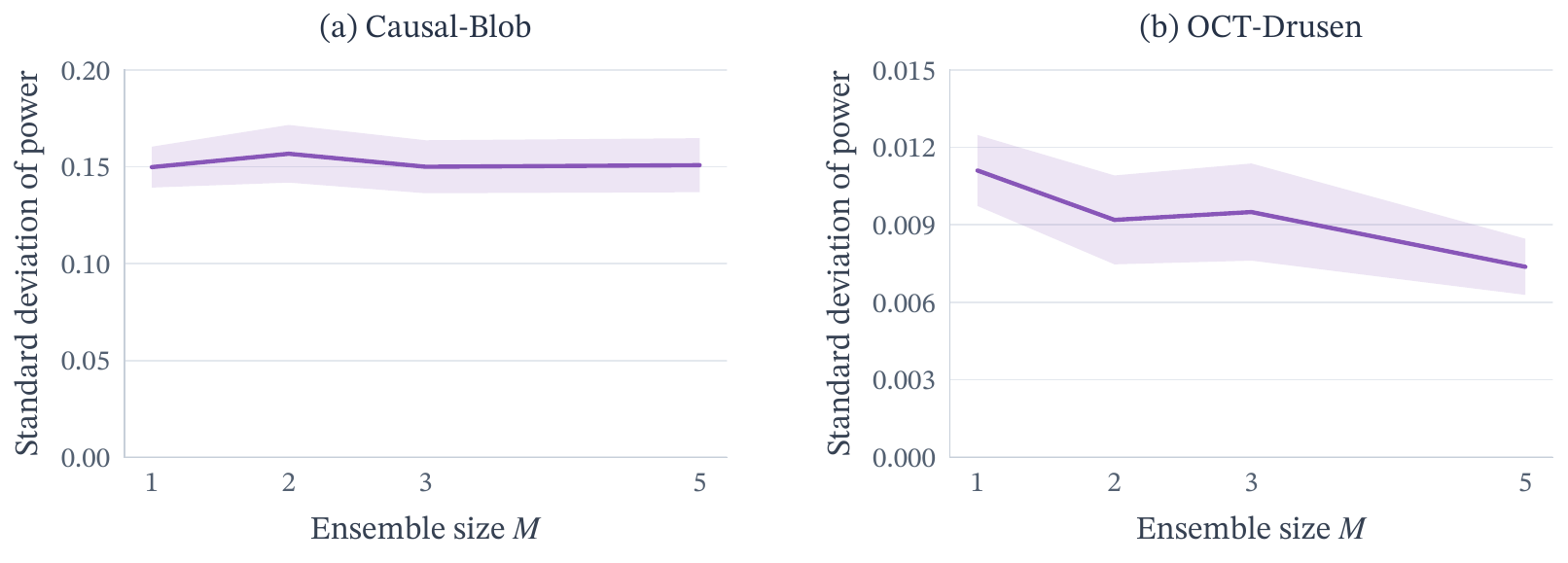}
    \caption{Average standard deviation of empirical power against ensemble size at $N=1500$ and $\delta=1.0$. Shaded bands show $\pm$ one standard error across five independent batches.}
    \label{fig:ablation-ensemble-size}
\end{figure}

\FloatBarrier
\subsection{Weisfeiler-Lehman Outcome Kernel}
\label{sec:ablation-wl-kernel}

We consider an alternative base kernel for the graph-outcome experiments. We replace the Gaussian kernel used as $k^\mathrm{char}$ with a Weisfeiler-Lehman (WL) subtree kernel \citep{shervashidze2011weisfeiler} and denote the resulting tests by DR-KTE-WL and DR-KTE-WL-Deep. Since the WL kernel is not generally characteristic, these methods do not retain the same omnibus guarantees as the Gaussian-based ones.

\Cref{tab:ablation-wl-kernel} reports average power under $H_1$ and average Type-I error under $H_0$ for both WL-based methods. Learning the outcome kernel increases power from $0.743$ to $0.964$ on LLM Code Repairs. On Contact Networks, the two variants achieve similar power, with a slightly higher average for the learned kernel. Type-I error remains close to the nominal level for both constructions and methods.

\begin{table}[!htbp]
    \centering
    \small
    \setlength{\tabcolsep}{4pt}
    \begin{tabular}{@{}lcc@{}}
    \toprule
     & \multicolumn{2}{c}{Power/Type-I error ($H_1/H_0$)} \\
    \cmidrule(lr){2-3}
    Scenario & DR-KTE-WL & DR-KTE-WL-Deep \\
    \midrule
    LLM Code Repairs & $0.743\pmerr{0.023}/0.059\pmerr{0.006}$ & $\mathbf{0.964}\pmerr{0.008}/0.050\pmerr{0.004}$ \\
    Contact Networks & $0.971\pmerr{0.005}/0.057\pmerr{0.003}$ & $\mathbf{0.972}\pmerr{0.005}/0.056\pmerr{0.004}$ \\
    \bottomrule
    \end{tabular}
    \caption{Power ($H_1$) and Type-I error ($H_0$) for graph outcomes using Weisfeiler-Lehman base kernels at $N=1500$. Each cell reports $H_1/H_0$, with means $\pm$ one standard error across 20 outer repetitions. Gaussian-kernel results are given in \Cref{tab:graph-results}.}
    \label{tab:ablation-wl-kernel}
\end{table}

\FloatBarrier
\subsection{Linear Outcome Kernel}
\label{sec:ablation-linear-kernel}

We compare DR-KTE-Deep with DR-ATE, using a linear outcome kernel $k(y,y')=\phi(y)^\top \phi(y')$, where $\phi(y)$ is the numerical representation of the structured outcome. DR-ATE therefore only tests for average treatment effects.

For the medical-image experiments, Causal-Blob and the scalar outcomes Scenario III, which construct a mean-matched but distributional effect, DR-KTE-Deep has high power, while DR-ATE rejects around the nominal rate (\Cref{tab:ablation-linear-kernel-images,tab:ablation-linear-kernel-blob,tab:ablation-linear-kernel-scalar}). In the remaining experiments, where the construction introduces a mean-shift in interventional outcomes, DR-ATE often has high power, but DR-KTE-Deep always achieves higher rejection rates (\Cref{tab:ablation-linear-kernel-image-scenarios,tab:ablation-linear-kernel-text,tab:ablation-linear-kernel-graphs}).

\begin{table}[!htbp]
    \centering
    \small
    \setlength{\tabcolsep}{4pt}
    \begin{tabular}{@{}lcc@{}}
    \toprule
     & \multicolumn{2}{c}{Power/Type-I error ($H_1/H_0$)} \\
    \cmidrule(lr){2-3}
    Scenario & DR-ATE & DR-KTE-Deep \\
    \midrule
    Breast-Tumor & $0.052\pmerr{0.003}/0.047\pmerr{0.004}$ & $\mathbf{0.989}\pmerr{0.002}/0.054\pmerr{0.003}$ \\
    Chest-Cardiomegaly & $0.044\pmerr{0.003}/0.043\pmerr{0.003}$ & $\mathbf{0.928}\pmerr{0.046}/0.051\pmerr{0.004}$ \\
    Chest-Effusion & $0.046\pmerr{0.005}/0.049\pmerr{0.003}$ & $\mathbf{0.999}\pmerr{0.000}/0.048\pmerr{0.004}$ \\
    Chest-Mass & $0.052\pmerr{0.003}/0.051\pmerr{0.004}$ & $\mathbf{1.000}\pmerr{0.000}/0.056\pmerr{0.004}$ \\
    Derma-Melanoma & $0.045\pmerr{0.004}/0.048\pmerr{0.002}$ & $\mathbf{0.898}\pmerr{0.040}/0.053\pmerr{0.004}$ \\
    OCT-CNV & $0.051\pmerr{0.004}/0.056\pmerr{0.003}$ & $\mathbf{1.000}\pmerr{0.000}/0.053\pmerr{0.003}$ \\
    OCT-DME & $0.052\pmerr{0.003}/0.059\pmerr{0.004}$ & $\mathbf{1.000}\pmerr{0.000}/0.063\pmerr{0.005}$ \\
    OCT-Drusen & $0.046\pmerr{0.003}/0.053\pmerr{0.005}$ & $\mathbf{0.371}\pmerr{0.080}/0.058\pmerr{0.005}$ \\
    Pneumonia & $0.053\pmerr{0.003}/0.047\pmerr{0.002}$ & $\mathbf{0.998}\pmerr{0.001}/0.053\pmerr{0.004}$ \\
    Retina-Severe & $0.056\pmerr{0.003}/0.044\pmerr{0.003}$ & $\mathbf{0.959}\pmerr{0.008}/0.052\pmerr{0.003}$ \\
    \bottomrule
    \end{tabular}
    \caption{Comparison with DR-ATE for medical-image outcomes at $N=1500$. Each cell reports $H_1/H_0$, with means $\pm$ one standard error across 20 outer repetitions.}
    \label{tab:ablation-linear-kernel-images}
\end{table}

\begin{table}[!htbp]
    \centering
    \small
    \setlength{\tabcolsep}{4pt}
    \begin{tabular}{@{}llcc@{}}
    \toprule
    Dataset & Scenario & DR-ATE & DR-KTE-Deep \\
    \midrule
    \multirow{3}{*}{MNIST}
        & I   & $0.045\pmerr{0.003}$ & $0.051\pmerr{0.003}$ \\
        & II  & $0.677\pmerr{0.027}$ & $\mathbf{0.870}\pmerr{0.024}$ \\
        & III & $0.695\pmerr{0.023}$ & $\mathbf{0.803}\pmerr{0.023}$ \\
    \midrule
    \multirow{3}{*}{RetinaMNIST}
        & I   & $0.052\pmerr{0.003}$ & $0.051\pmerr{0.004}$ \\
        & II  & $0.434\pmerr{0.016}$ & $\mathbf{0.647}\pmerr{0.022}$ \\
        & III & $0.235\pmerr{0.010}$ & $\mathbf{0.732}\pmerr{0.035}$ \\
    \midrule
    \multirow{2}{*}{BUSI}
        & I   & $0.054\pmerr{0.003}$ & $0.046\pmerr{0.004}$ \\
        & III & $0.771\pmerr{0.016}$ & $\mathbf{0.911}\pmerr{0.013}$ \\
    \midrule
    \multirow{2}{*}{Fake-MNIST}
        & $H_0$ & $0.052\pmerr{0.003}$ & $0.050\pmerr{0.004}$ \\
        & $H_1$ & $0.114\pmerr{0.004}$ & $\mathbf{0.380}\pmerr{0.050}$ \\
    \bottomrule
    \end{tabular}
    \caption{Comparison with DR-ATE for MNIST, RetinaMNIST, BUSI and Fake-MNIST at $N=1500$. Scenario I and Fake-MNIST $H_0$ report Type-I error; Scenarios II and III and Fake-MNIST $H_1$ report power. Values are means $\pm$ one standard error across 20 outer repetitions.}
    \label{tab:ablation-linear-kernel-image-scenarios}
\end{table}

\begin{table}[!htbp]
    \centering
    \small
    \setlength{\tabcolsep}{4pt}
    \begin{tabular}{@{}lcc@{}}
    \toprule
     & \multicolumn{2}{c}{Power/Type-I error ($H_1/H_0$)} \\
    \cmidrule(lr){2-3}
    Scenario & DR-ATE & DR-KTE-Deep \\
    \midrule
    Oncology Notes & $0.375\pmerr{0.015}/0.051\pmerr{0.003}$ & $\mathbf{0.809}\pmerr{0.023}/0.055\pmerr{0.003}$ \\
    Movie Reviews & $0.141\pmerr{0.009}/0.048\pmerr{0.003}$ & $\mathbf{0.449}\pmerr{0.026}/0.053\pmerr{0.004}$ \\
    Clinical Notes & $0.053\pmerr{0.004}/0.053\pmerr{0.003}$ & $\mathbf{0.939}\pmerr{0.013}/0.058\pmerr{0.004}$ \\
    \bottomrule
    \end{tabular}
    \caption{Comparison with DR-ATE for text outcomes at $N=1500$. Each cell reports $H_1/H_0$, with means $\pm$ one standard error across 20 outer repetitions.}
    \label{tab:ablation-linear-kernel-text}
\end{table}

\begin{table}[!htbp]
    \centering
    \small
    \setlength{\tabcolsep}{4pt}
    \begin{tabular}{@{}lcc@{}}
    \toprule
     & \multicolumn{2}{c}{Power/Type-I error ($H_1/H_0$)} \\
    \cmidrule(lr){2-3}
    Scenario & DR-ATE & DR-KTE-Deep \\
    \midrule
    LLM Code Repairs & $0.874\pmerr{0.019}/0.056\pmerr{0.004}$ & $\mathbf{1.000}\pmerr{0.000}/0.051\pmerr{0.003}$ \\
    Contact Networks & $0.923\pmerr{0.009}/0.059\pmerr{0.004}$ & $\mathbf{0.962}\pmerr{0.006}/0.051\pmerr{0.002}$ \\
    \bottomrule
    \end{tabular}
    \caption{Comparison with DR-ATE for graph outcomes at $N=1500$. Each cell reports $H_1/H_0$, with means $\pm$ one standard error across 20 outer repetitions.}
    \label{tab:ablation-linear-kernel-graphs}
\end{table}

\begin{table}[!htbp]
    \centering
    \small
    \setlength{\tabcolsep}{4pt}
    \begin{tabular}{@{}lcc@{}}
    \toprule
     & \multicolumn{2}{c}{Power/Type-I error ($H_1/H_0$)} \\
    \cmidrule(lr){2-3}
    Scenario & DR-ATE & DR-KTE-Deep \\
    \midrule
    Causal-Blob & $0.049\pmerr{0.003}/0.043\pmerr{0.002}$ & $\mathbf{0.382}\pmerr{0.024}/0.050\pmerr{0.004}$ \\
    \bottomrule
    \end{tabular}
    \caption{Comparison with DR-ATE on Causal-Blob at $N=1500$. Each cell reports $H_1/H_0$, with means $\pm$ one standard error across 20 outer repetitions.}
    \label{tab:ablation-linear-kernel-blob}
\end{table}

\begin{table}[!htbp]
    \centering
    \small
    \setlength{\tabcolsep}{4pt}
    \begin{tabular}{@{}lcc@{}}
    \toprule
     & \multicolumn{2}{c}{Power/Type-I error ($H_1/H_0$)} \\
    \cmidrule(lr){2-3}
    Scenario & DR-ATE & DR-KTE-Deep \\
    \midrule
    Scenario II & $0.939\pmerr{0.009}/0.097\pmerr{0.007}$ & $\mathbf{0.995}\pmerr{0.002}/0.055\pmerr{0.004}$ \\
    Scenario III & $0.050\pmerr{0.004}/0.097\pmerr{0.007}$ & $\mathbf{0.614}\pmerr{0.028}/0.055\pmerr{0.004}$ \\
    Scenario IV & $0.972\pmerr{0.006}/0.097\pmerr{0.007}$ & $\mathbf{0.981}\pmerr{0.004}/0.055\pmerr{0.004}$ \\
    \bottomrule
    \end{tabular}
    \caption{Comparison with DR-ATE for scalar outcomes at $N=1500$. Each alternative is paired with the shared Scenario I null. Each cell reports $H_1/H_0$, with means $\pm$ one standard error across 20 outer repetitions.}
    \label{tab:ablation-linear-kernel-scalar}
\end{table}

\FloatBarrier
\subsection{Scaling with Outcome Dimension}
\label{sec:ablation-outcome-dimension}

We compare DR-ATE, DR-KTE, DR-KTE-Opt and DR-KTE-Deep as the outcome dimension increases. We use Causal-Blob (\Cref{sec:causal-blob-experiments}), where we vary the outcome dimension by appending independent noise coordinates to the two-dimensional outcomes. For $d>2$, the augmented outcome is
\[
    Y_i^{(d)} = \begin{pmatrix} Y_i^{(2)} \\ Z_i \end{pmatrix}, \quad Z_i\sim\mathcal N(0,0.17^2 I_{d-2}), Y_i^{(2)} = (1-A_i)Y_i^0+A_iY_i^1,
\]
where $Z_i$ is independent of the covariates, treatment and original outcome $Y_i^{(2)}$. Since the appended coordinates $Z_i$ have the same distribution under both treatments, the treatment effect is confined to the first two coordinates.

\Cref{fig:experiment-6} shows that DR-KTE-Opt improves power over DR-KTE, while DR-KTE-Deep achieves substantially higher power through $d=8$. DR-ATE has little power beyond its null rejection rate throughout, as expected for this mean-preserving alternative. At $d=16$ and $d=32$, all four methods have little power beyond their null rejection rates. Type-I error is above the nominal level for all four methods.

\begin{figure}[!htbp]
    \centering
    \includegraphics[width=\linewidth]{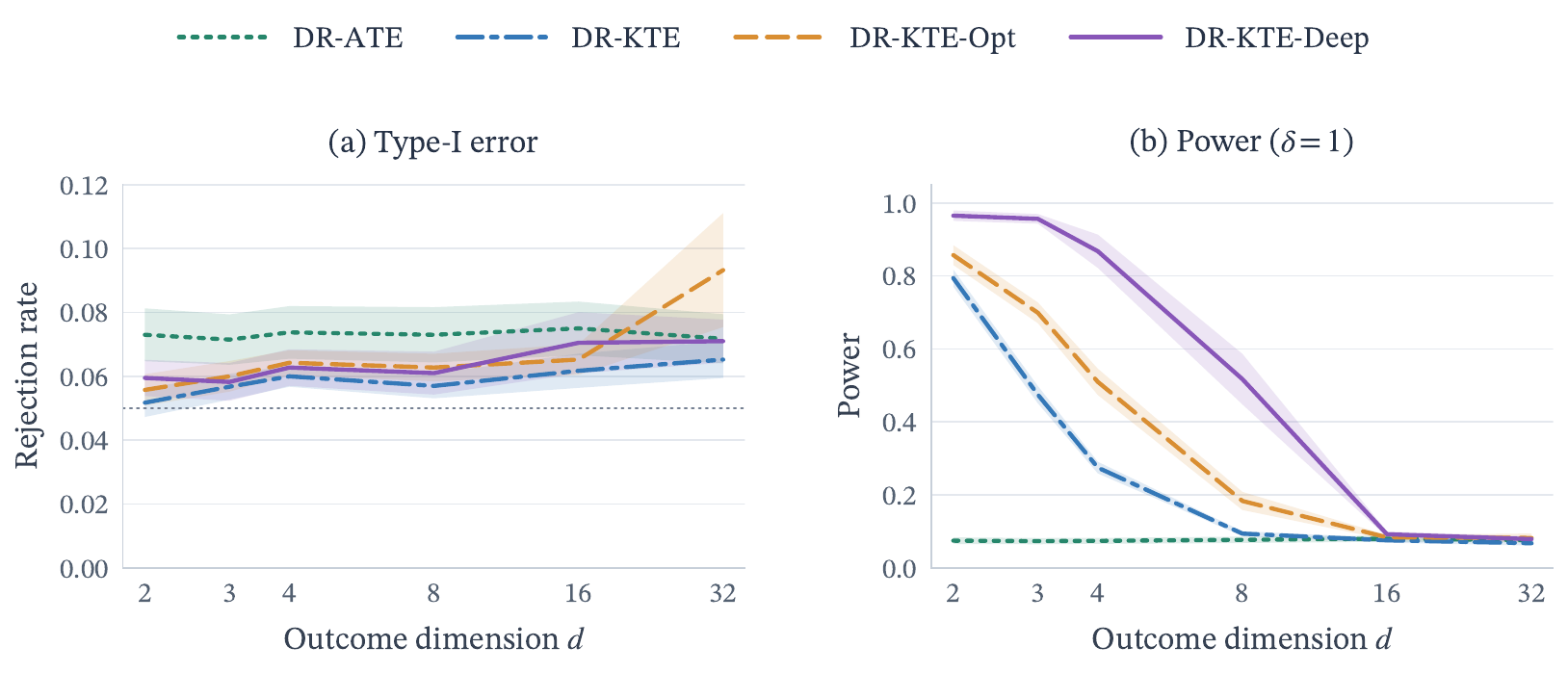}
    \caption{Causal-Blob at $N=4000$ when varying the outcome dimension, with $\delta=0$ under $H_0$ and $\delta=1.0$ under $H_1$. Shaded bands show $\pm$ one standard error across 20 independent outer repetitions. The dotted grey line marks $\alpha=0.05$.}
    \label{fig:experiment-6}
    \label{fig:linear-kernel-dimension}
\end{figure}

\FloatBarrier

\subsection{Effect of Joint Maximisation}
\label{sec:ablation-joint-maximisation}

We compare DR-KTE-Deep to a variant that jointly maximises the power learning criterion of the averaged ensemble. In DR-KTE-Deep, each kernel member is trained and selected separately. The final ensemble can therefore be composed of members selected from different training epochs. The joint maximisation variant instead selects a single ensemble checkpoint on the validation sample. Joint maximisation therefore removes the flexibility to select each member at its own best validation checkpoint.

Joint maximisation generally yields similar or lower estimated power than DR-KTE-Deep, while improving on DR-KTE (\Cref{tab:joint-maximisation}). OCT-Drusen is the exception, with higher power under joint maximisation. Type-I error remains close to the nominal level.

\begin{table}[h]
\centering
\resizebox{\linewidth}{!}{%
\begin{tabular}{lccc}
\toprule
& \multicolumn{3}{c}{Power/Type-I error ($H_1/H_0$)} \\
\cmidrule(lr){2-4}
Scenario & DR-KTE & DR-KTE-Deep & Joint maximisation \\
\midrule
Breast-Tumor & $0.749\pmerr{ 0.018}/0.053\pmerr{ 0.003}$ & $\mathbf{0.989}\pmerr{ 0.002}/0.054\pmerr{ 0.003}$ & $\mathbf{0.989}\pmerr{ 0.003}/0.053\pmerr{ 0.003}$ \\
Chest-Cardiomegaly & $0.156\pmerr{ 0.007}/0.050\pmerr{ 0.003}$ & $\mathbf{0.928}\pmerr{ 0.046}/0.051\pmerr{ 0.004}$ & $0.763\pmerr{ 0.077}/0.051\pmerr{ 0.004}$ \\
Chest-Effusion & $0.231\pmerr{ 0.009}/0.053\pmerr{ 0.004}$ & $\mathbf{0.999}\pmerr{ 0.000}/0.048\pmerr{ 0.004}$ & $0.998\pmerr{ 0.001}/0.056\pmerr{ 0.004}$ \\
Chest-Mass & $0.436\pmerr{ 0.018}/0.050\pmerr{ 0.003}$ & $\mathbf{1.000}\pmerr{ 0.000}/0.056\pmerr{ 0.004}$ & $0.999\pmerr{ 0.000}/0.050\pmerr{ 0.004}$ \\
Derma-Melanoma & $0.171\pmerr{ 0.009}/0.051\pmerr{ 0.002}$ & $\mathbf{0.898}\pmerr{ 0.040}/0.053\pmerr{ 0.004}$ & $0.795\pmerr{ 0.060}/0.046\pmerr{ 0.003}$ \\
OCT-CNV & $0.907\pmerr{ 0.014}/0.052\pmerr{ 0.003}$ & $\mathbf{1.000}\pmerr{ 0.000}/0.053\pmerr{ 0.003}$ & $\mathbf{1.000}\pmerr{ 0.000}/0.053\pmerr{ 0.003}$ \\
OCT-DME & $0.139\pmerr{ 0.008}/0.055\pmerr{ 0.004}$ & $\mathbf{1.000}\pmerr{ 0.000}/0.063\pmerr{ 0.005}$ & $\mathbf{1.000}\pmerr{ 0.000}/0.058\pmerr{ 0.004}$ \\
OCT-Drusen & $0.051\pmerr{ 0.003}/0.058\pmerr{ 0.004}$ & $0.371\pmerr{ 0.080}/0.058\pmerr{ 0.005}$ & $\mathbf{0.483}\pmerr{ 0.089}/0.053\pmerr{ 0.004}$ \\
Pneumonia & $0.857\pmerr{ 0.021}/0.055\pmerr{ 0.003}$ & $\mathbf{0.998}\pmerr{ 0.001}/0.053\pmerr{ 0.004}$ & $0.996\pmerr{ 0.001}/0.051\pmerr{ 0.003}$ \\
Retina-Severe & $0.285\pmerr{ 0.015}/0.050\pmerr{ 0.003}$ & $\mathbf{0.959}\pmerr{ 0.008}/0.052\pmerr{ 0.003}$ & $0.952\pmerr{ 0.009}/0.049\pmerr{ 0.003}$ \\
\midrule
Oncology Notes & $0.371\pmerr{ 0.015}/0.048\pmerr{ 0.003}$ & $\mathbf{0.809}\pmerr{ 0.023}/0.055\pmerr{ 0.003}$ & $0.770\pmerr{ 0.033}/0.050\pmerr{ 0.005}$ \\
Movie Reviews & $0.255\pmerr{ 0.012}/0.050\pmerr{ 0.003}$ & $\mathbf{0.449}\pmerr{ 0.026}/0.053\pmerr{ 0.004}$ & $0.423\pmerr{ 0.031}/0.054\pmerr{ 0.005}$ \\
Clinical Notes & $0.058\pmerr{ 0.004}/0.052\pmerr{ 0.003}$ & $\mathbf{0.939}\pmerr{ 0.013}/0.058\pmerr{ 0.004}$ & $0.933\pmerr{ 0.014}/0.062\pmerr{ 0.004}$ \\
\midrule
LLM Code Repairs & $0.968\pmerr{ 0.008}/0.048\pmerr{ 0.004}$ & $\mathbf{1.000}\pmerr{ 0.000}/0.051\pmerr{ 0.003}$ & $\mathbf{1.000}\pmerr{ 0.000}/0.048\pmerr{ 0.004}$ \\
Contact Networks & $0.959\pmerr{ 0.006}/0.052\pmerr{ 0.003}$ & $\mathbf{0.962}\pmerr{ 0.006}/0.051\pmerr{ 0.002}$ & $\mathbf{0.962}\pmerr{ 0.006}/0.051\pmerr{ 0.003}$ \\
\bottomrule
\end{tabular}%
}
\caption{Effect of joint maximisation at $N=1500$ and $\alpha=0.05$. Each cell reports $H_1/H_0$, with means $\pm$ one standard error across 20 outer repetitions.}
\label{tab:joint-maximisation}
\end{table}

\end{document}